\DeclareMathOperator*{\argmin}{\arg\!\min}
\DeclareMathOperator{\Tr}{Tr}
\begin{document}

\title{Fundamental Limits of Learning\\
High-dimensional Simplices in Noisy Regimes}

\author{\name Seyed Amir Hossein Saberi \email saberi.sah@ee.sharif.edu \\
       \addr Department of Electerical Engineering\\
       Sharif university of Technology,
       Tehran, Iran
       \AND
       \name Amir Najafi \email amir.najafi@sharif.edu \\
       \addr Department of Computer Engineering\\
       Sharif university of Technology,
       Tehran, Iran
       \AND
       \name Abolfazl Motahari \email motahari@sharif.edu \\
       \addr Department of Computer Engineering\\
       Sharif university of Technology,
       Tehran, Iran
       \AND
       \name Babak khalaj \email khalaj@sharif.edu \\
       \addr Department of Electerical Engineering\\
       Sharif university of Technology,
       Tehran, Iran}

\editor{TBD}

\maketitle

\begin{abstract}%   <- trailing '%' for backward compatibility of .sty file
In this paper, we establish sample complexity bounds for learning high-dimensional simplices in $\mathbb{R}^K$ from noisy data. Specifically, we consider $n$ i.i.d. samples uniformly drawn from an unknown simplex in $\mathbb{R}^K$, each corrupted by additive Gaussian noise of unknown variance. We prove an algorithm exists that, with high probability, outputs a simplex within $\ell_2$ or total variation (TV) distance at most $\varepsilon$ from the true simplex, provided $n \ge (K^2/\varepsilon^2) e^{\mathcal{O}(K/\mathrm{SNR}^2)}$, where $\mathrm{SNR}$ is the signal-to-noise ratio. Extending our prior work~\citep{saberi2023sample}, we derive new information-theoretic lower bounds, showing that simplex estimation within TV distance $\varepsilon$ requires at least $n \ge \Omega(K^3 \sigma^2/\varepsilon^2 + K/\varepsilon)$ samples, where $\sigma^2$ denotes the noise variance. In the noiseless scenario, our lower bound $n \ge \Omega(K/\varepsilon)$ matches known upper bounds up to constant factors. We resolve an open question by demonstrating that when $\mathrm{SNR} \ge \Omega(K^{1/2})$, noisy-case complexity aligns with the noiseless case. Our analysis leverages sample compression techniques~\citep{ashtiani2018nearly} and introduces a novel Fourier-based method for recovering distributions from noisy observations, potentially applicable beyond simplex learning.
\end{abstract}

\begin{keywords}
  Sample Complexity Bound, Noisy Regime, High-dimensional Simplices, Sample Compression, Total Variation Distance
\end{keywords}
\section{Introduction}
\label{sec:intro}

Many practical problems in machine learning and data science can be naturally formulated as the task of learning high-dimensional geometric structures from unlabeled data. In particular, the problem of learning simplices from randomly scattered points arises in various domains, including bioinformatics and remote sensing~\citep{chan2009convex, schwartz2010applying, satas2017tumor}. A simplex is defined as the set of all convex combinations of $K+1$ points in $\mathbb{R}^K$, for any $K \in \mathbb{N}$. Formally, the problem we consider is as follows: assume we are given $n$ i.i.d. samples drawn from the uniform distribution over an unknown simplex in $\mathbb{R}^K$. Each sample is further corrupted by additive multivariate Gaussian noise with covariance $\sigma^2 \mathbf{I}_{K \times K}$, where $\sigma > 0$ is unknown. The central question we aim to address is how large must $n$ be, as a function of $K$, $\sigma$ (or alternatively, the signal-to-noise ratio, SNR), and other parameters, to ensure consistent approximation of the underlying simplex with arbitrarily high probability?

Learning simplices has been extensively studied, with numerous heuristic algorithms proposed for various settings, including noisy and sparse regimes. In this paper, however, we focus on the \emph{theoretical} aspects of the problem, which still involve several fundamental open questions. While at least one efficient (i.e., polynomial-time) algorithm with theoretical guarantees exists (see Section \ref{sec:intro:LR}), it still requires prohibitively large sample sizes—it demands $n \geq \Omega(K^{22})$ samples, rendering it impractical in high dimensions. In parallel, researchers have investigated \emph{information-theoretic} limits on sample complexity, independent of computational constraints. Recent work has provided nearly optimal information-theoretic bounds for the \emph{noiseless} case, using the Maximum Likelihood Estimator (MLE), which returns the minimum-volume simplex that contains all the observed samples. Notably, the MLE requires solving an exponentially hard problem in general \footnote{Designing a polynomial-time algorithm with provable guarantees and reasonable dependence on $K$ still remains an open problem.} \citet{najafi2021statistical} showed that the MLE acts as a PAC-learning algorithm for estimating a $K$-simplex up to a vanishing total variation (TV) distance from the true one. Specifically, they proved that at most $\widetilde{\mathcal{O}}(K^2 / \varepsilon)$ samples (ignoring logarithmic factors) suffice to recover the true simplex within TV distance $\varepsilon > 0$. However, the noiseless case is largely unrealistic—clean data is rarely available in practical settings. Moreover, the minimum-volume simplex estimator proposed by \citet{najafi2021statistical} can become highly inaccurate in the presence of noise. In high dimensions (i.e., when $K \gg 1$), the corrupted samples are likely to fall outside the true simplex, leading to significant estimation errors.

%%%%%%%%%%%%%%%%%%%%%%%%%%%%%%%%%%%%%%%%%%%%%%%%%%%%%%%%%%%%%%%%%%%%%%%%%%%%%%%%%%%%%%%%%%%%%%%%%%%%%

In this work, we aim to determine the information-theoretic sample complexity of learning simplices in noisy regimes. Mathematically, we assume that the observed samples are generated according to the following model:
\begin{equation}
\label{eq:equation1:defOfY_i}
\boldsymbol{y}_i = \boldsymbol{V} \boldsymbol{\phi}_i + \boldsymbol{z}_i \quad,\quad i = 1, \ldots, n,
\end{equation}
where $\boldsymbol{V} \in \mathbb{R}^{K \times (K+1)}$ contains the vertices of the true simplex as its columns. Each coefficient vector $\boldsymbol{\phi}_i \in \mathbb{R}^K$ is drawn from a uniform Dirichlet distribution, and the noise vectors $\boldsymbol{z}_i \in \mathbb{R}^K$ are sampled independently from a multivariate Gaussian distribution $\mathcal{N}(0, \sigma^2 \mathbf{I})$.

\textbf{Our Results}: We address the task of estimating the true simplex using an exponential-time estimator $\widehat{\mathcal{S}} = \widehat{\mathcal{S}}(\boldsymbol{y}_{1:n})$. Specifically, we prove that it suffices to have at most $\widetilde{\mathcal{O}}\left(K^2/\varepsilon^2\right) \cdot e^{\mathcal{O}\left(K/\mathrm{SNR}^2\right)}$ noisy samples in order to estimate a $K$-dimensional simplex up to $\ell_2$ or TV distance of $\varepsilon>0$. Here, $\mathrm{SNR}$ denotes the ratio of the standard deviation of the uniform distribution over $\mathcal{S}_T$ to the standard deviation of the additive noise. 

Furthermore, extending our earlier work in \citet{saberi2023sample}, we derive new information-theoretic lower bounds. We show that any algorithm estimating the underlying simplex within total variation distance $\varepsilon$ from noisy samples must use at least $\Omega\left(K^3 \sigma^2 / \varepsilon^2\right)$ samples. In the noiseless case, we establish a lower bound of $n \ge \Omega(K/\varepsilon)$ samples, which captures the correct asymptotic scaling up to constant factors and provides deeper insight into the fundamental statistical limits of simplex learning. These results resolve a previously open problem in the literature and also shed light on an intriguing empirical observation: most heuristic algorithms perform nearly as well as in the noiseless case when $\mathrm{SNR}$ exceeds a certain sublinear function of $K$, but exhibit a sharp deterioration in performance once $\mathrm{SNR}$ falls below this threshold \citep{najafi2021statistical}.

\subsection{Related Work}
\label{sec:intro:LR}

We categorize the existing literature into two main groups based on their focus: (i) works emphasizing the computational efficiency of proposed algorithms, and (ii) those investigating the fundamental sample complexity of the problem, independent of computational constraints. The former primarily aim to develop heuristic methods with good empirical performance, while the latter seek to characterize the inherent, information-theoretic limits of the problem. In the following, we briefly review representative works from both categories.

\citet{anderson2013efficient} showed that with $\mathcal{O}\left(K^{22}\right)$ noiseless samples, one can recover the true simplex using a polynomial-time algorithm. Their approach leverages third-order moment tensors and local search techniques inspired by Independent Component Analysis (ICA). To the best of our knowledge, designing polynomial-time algorithms with improved dependence on $K$ remains an open problem. In parallel, \citet{najafi2021statistical} analyzed the Maximum Likelihood Estimator (MLE) in the noiseless case, proving a sample complexity lower bound of $\widetilde{\mathcal{O}}\left(K^2/\varepsilon\right)$, where $\varepsilon$ denotes the target total variation (TV) distance from the true distribution. Since the MLE requires solving a minimum-volume enclosing simplex problem, which is computationally intractable in high dimensions, they also proposed a practical heuristic surrogate—albeit without strong theoretical guarantees.

In the noisy setting, theoretical investigations are relatively sparse. A notable exception is the work of \citet{bhattacharyya2020near}, which builds upon the sample compression technique introduced by \citet{ashtiani2018nearly} for learning high-dimensional Gaussian mixture models. \citet{bhattacharyya2020near} proved that a noisy simplex can, in principle, be recovered using $\Omega(K^2)$ samples, under the assumption that at least one sample lies near each vertex of the simplex. However, this requirement is extremely strong: a simple probabilistic argument reveals that to ensure such vertex coverage, one would need approximately $\widetilde{\Omega}\left(\varepsilon^{-K}\right)$ samples, making the assumption impractical in high dimensions.

There is also a parallel line of research whose primary goal is not density estimation but rather identifying simplex vertices, a problem known as \emph{vertex hunting}. This task arises frequently in areas like hyperspectral unmixing, archetypal analysis, network membership inference, and topic modeling~\citep{bioucas2012hyperspectral,cutler1994archetypal, rubin2022statistical, ke2024using}. A widely-used method for vertex hunting is the Successive Projection Algorithm (SPA), originally proposed by~\citet{araujo2001successive}, which iteratively finds simplex vertices by projecting data onto orthogonal complements of previously identified vertices. \citet{gillis2013fast} significantly advanced SPA theory, deriving non-asymptotic bounds on the vertex estimation error that depend explicitly on the inverse squared smallest singular value of the vertex matrix. However, standard SPA can perform poorly under heavy noise and outliers, motivating recent developments like the pseudo-point SPA (pp-SPA) by~\citet{jin2024improved}. They introduced a pre-processing step involving hyperplane projections and neighborhood-based denoising before applying SPA, leading to improved robustness and tighter theoretical error bounds based on geometric analysis and extreme value theory, consequently outperforming earlier SPA variants such as robust and smoothed SPA~\citep{bhattacharyya2020near,gillis2019successive}.

From a more applied perspective, numerous heuristic algorithms have been developed for real-world problems in domains such as bioinformatics and hyperspectral imaging \citep{piper2004object, bioucas2012hyperspectral, lin2013endmember}. In hyperspectral imaging, the goal is to estimate the distribution of constituent materials in a region based on remotely sensed hyperspectral images. Each pixel in such images is modeled as a random convex combination of a fixed set of spectral signatures corresponding to pure materials. Hence, the task naturally reduces to estimating an unknown simplex from samples presumed to be uniformly distributed \citep{ambikapathi2011chance, agathos2014robust, zhang2017robust}. In bioinformatics, simplex learning often arises in the context of analyzing complex tissues, which are mixtures of multiple cell types—collections of cells with similar functions, such as blood, brain, or tumor cells \citep{tolliver2010robust, zuckerman2013self}. Bulk measurements from such tissues, like gene expression profiles, can be modeled as convex combinations of the constituent cell types. Accordingly, many methods in this area aim to infer tissue composition by estimating the underlying high-dimensional simplex \citep{shoval2012evolutionary, korem2015geometry}.

%%%%%%%%%%%%%%%%%%%%%%%%%%%%%%%%%%%%%%%%%%%%%%%%%%%%%%%%%%%%%%%%%%%%%%%%%%%%%%%%%%%%%%%%%%%%%%%%%%%%%%%%%%%%%%%%%%%%%%%%%%%%%%%%%%%%%%%%%%%%%%%%%%%%%%%%%%%%%%%%%%%%%%%%%%%%%%%%%%%%%%%%%%%%%%%%%%%%%%%%%%%%%%%%%%%%%%%%%%%%%%%%%%%%%%%%%%%%%%%%%%%%%%%%%%%%%%%%%%%%%%%%%%%%%%%%%%%%%%%%%%%%%%%%%%%%%%%%%%%%%%%%%%%%%%%%%%%%%%%%%%%%%%%%%%%%%%%%%%%%%%%%%%%%%%%%%%%%%%%%%%%%%%%%%%%%%%%%%%%%%%%%%%%%%%%%%%%%%%%%%%%%%%%%%%%%%%%%%%%%%%%%%%%%%%%%%%%%%%%%%%%%%%%%%%%%%%%%%%%%%%%%%%%%%%%%%%%%%%%%%%%%%%%%%%%%%%%%%%%%%%%%%%%%%%%%%%%%%%%%%%%%%%%%%%%%%%%%%%%%%%%%%%%%%%%%%%%%%%%%%%%%%%%%%%%%%%%

The rest of the paper is organized as follows: In Section \ref{sec:notation}, we present the preliminaries and formally define the problem. Our main theoretical results as well as the algorithm that achieves our bounds are discussed in Section \ref{sec:main}. Section \ref{sec:lower-bound} presents our lower-bound techniques. Finally, Section \ref{sec:conc} concludes the paper.

%%%%%%%%%%%%%%%%%%%%%%%%%%%%%%%%%%%%%%%%%%%%%%%%%%%%%%%%%%%%%%%%%%%%%%%%%%%%%%%%%%%%%%%%%%%%%%%%%%%%%%%%%%%%%%%%%%%%%%%%%%%%%%%%%%%%%%%%%%%%%%%%%%%%%%%%%%%%%%%%%%%%%%%%%%%%%%%%%%%%%%%%%%%%%%%%%%%%%%%%%%%%%%%%%%%%%%%%%%%%%%%%%%%%%%%%%%%%%%%%%%%%%%%%%%%%%%%%%%%%%%%%%%%%%%%%%%%%%%%%%%%%%%%%%%%%%%%%%%%%%%%%%%%%%%%%%%%%%%%%%%%%%%%%%%%%%%%%%%%%%%%%%%%%%%%%%%%%%%%%%%%%%%%%%%%%%%%%%%%%%%%%%%%%%%%%%%%%%%%%%%%%%%%%%%%%%%%%%%%%%%%%%%%%%%%%%%%%%%%%%%%%%%%%%%%%%%%%%%%%%%%%%%%%%%%%%%%%%%%%%%%%%%%%%%%%%%%%%%%%%%%%%%%%%%%%%%%%%%%%%%%%%%%%%%%%%%%%%%%%%%%%%%%%%%%%%%%%%%%%%%%%%%%%%%%%%%%%%%%%%%%%%%%%%%%%%%%%%%%%%%%%%%%%%%%%%%%%%%%%%%%%%%%%%%%%%%%%%%%%%%%%%%%%%%%%%%%%%%%%%%%%%%%%%%%%%%%%%%%%%%%%%%%%%%%%%%%%%%%%%%%%%%%%%%%%%%%%%%%%%%%%%%%%%%%%%%%%%%%%%%%%%%%%%%%%%%%%%%%%%%%%%%%%%%%%%%%%%%%%%%%%%%%%%%%%%%%%%%%%%%%%%%%%%%%%%%%%%%%%%%%%%%%%%%%%%%%%%%%%%%%%%%%%%%%%%%%%%%%%%%%%%%%%%%%%%%%%%%%%%%%%%%%%%%%%%%%%%%%%%%%%%%%%%%%%%%%%%%%%

\section{Preliminaries and Summary of Method}
\label{sec:notation}

We use the same notations as \citet{najafi2021statistical}. Throughout the paper, we use light letters to show scalars, bold lowercase letters to show vectors and bold uppercase letters to show matrices. A $K$-simplex $\mathcal{S}$ is defined as the set of all convex combination of $K+1$ 
%affinely independent 
points in $\mathbb{R}^{K}$. Let $\boldsymbol{V}= \left[\boldsymbol{v}_0\vert\boldsymbol{v}_1\vert\cdots\vert \boldsymbol{v}_{k}\right]\in\mathbb{R}^{K\times(K+1)}$ be a matrix whose columns represent vertices of the simplex, then
\begin{equation*}
\mathcal{S} = 
\mathcal{S}\left(\boldsymbol{v}_0,\ldots,\boldsymbol{v}_k\right)
\triangleq
\left\{ \boldsymbol{V}\boldsymbol{\phi} \bigg\vert~  \boldsymbol{\phi} \in \mathbb{R}^{K+1} ,~ \boldsymbol{\phi} \succeq \boldsymbol{0},~ \boldsymbol{\phi}^{T}\boldsymbol{1}  = 1  \right\}.
\end{equation*}
Also, let $\mathbb{S}_{K}$ denote the set of all possible $K$-simplices in $\mathbb{R}^K$. We denote the uniform probability measure over a simplex $\mathcal{S}$ by $\mathbb{P}_{\mathcal{S}}$, and its probability density function by $f_\mathcal{S}\left(x\right)$: 
\begin{equation*}
f_\mathcal{S}\left(x\right) = \frac{\boldsymbol{1}\left(\boldsymbol{x} \in \mathcal{S}\right)}{\mathrm{Vol}\left(\mathcal{S}\right)},
\end{equation*}
where $\mathrm{Vol}\left(\mathcal{S}\right)$ denotes the Lebesgue measure (or volume) of $\mathcal{S}$. 

The noisy simplex family, i.e., the class of distributions formed by {\it {convolving}} uniform probability density function over simplices in $\mathbb{S}_K$ by $G_{\sigma}\triangleq\mathcal{N}\left(\boldsymbol{0},\sigma^2\boldsymbol{I}\right)$, is denoted by $\mathbb{G}_{K,\sigma}$. Mathematically speaking,
\begin{equation}
\mathbb{G}_{K,\sigma}
\triangleq
\left\{
f_{\mathcal{S}}*G_{\sigma}
\vert~
\mathcal{S}\in\mathbb{S}_K
\right\},
\end{equation}
where $*$ denotes the convolution operator. Obviously, the distribution underlying the input data points $\boldsymbol{y}_i$ in \eqref{eq:equation1:defOfY_i} belongs to $\mathbb{G}_{K,\sigma}$. With a little abuse of notation, we use the term ``class of simplices" to refer to both $\mathbb{S}_K$ and $\left\{f_{\mathcal{S}}\vert~\mathcal{S}\in\mathbb{S}_K\right\}$ whenever the difference is clear from the context. In a similar fashion, we refer to $\mathbb{G}_{K,\sigma}$ as the class of ``noisy simplices".

In order to measure the difference between two distributions, we use both $\ell_2$ and total variation distance. Consider two probability measures $\mathbb{P}_1$ and $\mathbb{P}_2$, with respective density functions $f_1$ and $f_2$, which are defined over $\mathbb{R}^K$. Then, TV distance between $\mathbb{P}_1$ and $\mathbb{P}_2$ can be defined as
\begin{align*}
\operatorname{TV}(\mathbb{P}_1, \mathbb{P}_2) \triangleq & \sup_{\mathrm{A} \in \mathcal{B}}{|\mathbb{P}_1\left(\mathrm{A}\right) - \mathbb{P}_2\left(\mathrm{A}\right)|} 
=  \frac{1}{2}\|f_1 - f_2\| _1,
\end{align*}
where $\mathcal{B}$ is the the Borel $\sigma$-algebra in $\mathbb{R}^K$.

\begin{definition}[PAC-Learnability in realizable setting]
\label{definition:PAC}
A class of distributions $\mathcal{F}$ is PAC learnable in realizable setting, if there exists a learning method which for any distribution $g \in \mathcal{F}$ and any $\epsilon,\delta>0$, outputs an estimator $\widehat{g}$ using $n \geq \mathrm{poly}\left(1/\epsilon, 1/\delta \right)$ i.i.d. samples from $g$, such that with probability at least $1- \delta$ satisfies $\|\hat{g} - g\|_{\mathrm{TV}} \leq \epsilon$.
% \begin{equation}
% % \label{eq: realizable pac learning}
% \|\hat{g} - g\|_{\mathrm{TV}} \leq \epsilon.
% \end{equation}
\end{definition}
We also need to define a series of geometric restrictions for the simplex in noisy cases. In fact, those simplices that are significantly stretched toward on particular direction can be shown to be more prune to noise than those with some minimum levels of geometric regularity. We discuss the necessity of such definitions in later stages. Similar to \citet{najafi2021statistical}, let us denote the $(K-1)$-dimensional volume of the largest \emph{facet} of a $K$-simplex by $\mathcal{A}_{\max}\left(\mathcal{S}\right)$, and the length of the largest line segment inside the simplex (its diameter) by $\mathcal{L}_{\max}\left(\mathcal{S}\right)$. In this regard, we define the \emph{isoperimetricity} of a $K$-simplex as follow:
\begin{definition}[$\left(\underline{\theta},\bar{\theta}\right)$-isoperimetricity of simplices]
\label{def:isoperimetric}
A $K$-simplex $\mathcal{S}\in\mathbb{S}_K$ is defined to be $\left(\underline{\theta},\bar{\theta}\right)$-isoperimetric if the following inequalities hold:
\begin{align*}
\mathcal{A}_{\max}\left(\mathcal{S}\right)
~\leq~
\bar{\theta} \mathrm{Vol}\left(\mathcal{S}\right)^{\frac{K-1}{K}},
\quad
\mathcal{L}_{\max}\left(\mathcal{S}\right)
~\leq~
\underline{\theta}K \mathrm{Vol}\left(\mathcal{S}\right)^{\frac{1}{K}}.
\end{align*}
\end{definition}
The overall concept of isoperimetricity in Definition \ref{def:isoperimetric} reflects the fact that for a simplex to be (even partially) recoverable from noisy data, it should not be stretched too much in any direction or having highly acute angles. In other words, unlike the noiseless case, sample complexity in noisy regimes is also affected by the geometric shape of the underlying simplex (see Theorem \ref{thm:mainNoisySimplexBound} in Section \ref{sec:lower-bound}). for the sake of simplicity in notation, for any fixed $\bar{\theta},\underline{\theta}>0$, whenever we say the class of simplices we actually mean the class of $\left(\underline{\theta},\bar{\theta}\right)$-isoperimetric simplices.

\begin{definition}[$\epsilon$-representative set]
For any $\epsilon>0$, we say that a finite set of distributions $\mathcal{G}$ is an $\epsilon$-representative set for a distribution class $\mathcal{F}$, if for any distribution $f\in \mathcal{F}$, there exists at least one $g \in \mathcal{G}$ that satisfies
$\|f-g\|_{\mathrm{TV}} \leq \epsilon$.
\end{definition}

%%%%%%%%%%%%%%%%%%%%%%%%%%%%%%%%%%%%%%%%%%%%%%%%%%%%%%%%%%%%%%%%%%%%%%%%%%%%%%%%%%%%%%%%%%%%%%%%%%%%%%%%%%%%%%%%%%%%%%%%%%%%%%%%%%%%%%%%%%%%%%%%%%%%%%%%%%%%%%%%%%%%%%%%%%%%%%%%%%%%%%%%%%%%%%%%%%%%%%%%%%%%%%%%%%%%%%%%%%%%%%%%%%%%%%%%%%%%%%%%%%%%%%%%%%%%%%%%%%%%%%%%%%%%%%%%%%%%%%%%%%%%%%%%%%%%%%%%%%%%%%%%%%%%%%%%%%%%%%%%%%%%%%%%%%%%%%%%%%%%%%%%%%%%%%%%%%%%%%%%%%%%%%%%%%%%%%%%%%%%%%%%%%%%%%%%%%%%%%%%%%%%%%%%%%%%%%%%%%%%%%%%%%%%%%%%%%%%%%%%%%%%%%%%%%%%%%%%%%%%
\subsection{Formal Problem Definition}
\label{sec:formalProblemDef}

Let $\mathcal{S}_T$ denote the unknown $\left(\underline{\theta}, \bar{\theta}\right)$-isoperimetric simplex, referred to as the true simplex. Suppose we observe i.i.d. noisy samples $\boldsymbol{y}_1, \ldots, \boldsymbol{y}_n$ generated from $\mathcal{S}_T$ according to the model described in \eqref{eq:equation1:defOfY_i} with an unknown $\sigma$. Our objective is to construct an estimator $\widehat{\mathcal{S}} \triangleq \widehat{\mathcal{S}}(\boldsymbol{y}_1, \ldots, \boldsymbol{y}_n)$ that approximates $\mathcal{S}_T$ within $\varepsilon$ accuracy in either $\ell_2$ or total variation (TV) distance, with confidence at least $1 - \delta$. That is, for any $\varepsilon, \delta > 0$, the estimator should satisfy this guarantee provided that
$n \geq \mathrm{poly}\left(\frac{1}{\varepsilon}, \log\frac{1}{\delta}\right)$. The sample complexity can also depend on $\bar{\theta}$ and $\underline{\theta}$.

%%%%%%%%%%%%%%%%%%%%%%%%%%%%%%%%%%%%%%%%%%%%%%%%%%%%%%%%%%%%%%%%%%%%%%%%%%%%%%%%%%%%%%%%%%%%%%%%%%%%%%%%%%%%%%%%%%%%%%%%%%%%%%%%%%%%%%%%%%%%%%%%%%%%%%%%%%%%%%%%%%%%%%%%%%%%%%%%%%%%%%%%%%%%%%%%%%%%%%%%%%%%%%%%%%%%%%%%%%%%%%%%%%%%%%%%%%%%%%%%%%%%%%%%%%%%%%%%%%%%%%%%%%%%%%%%%%%%%%%%%%%%%%%%%%%%%%%%%%%%%%%%%%%%%%%%%%%%%%%%%%%%%%%%%%%%%%%%%%%%%%%%%%%%%%%%%%%%%%%%%%%%%%%%%%%%%%%%%%%%%%%%%%%%%%%%%%%%%%%%%%%%%%%%%%%%%%%%%%%%%%%%%%%%%%%%%%%%%%%%%%%%%%%%%%%%%%%%%%%%
\subsection{Summary of Our Method}

\textbf{Error Upper Bound.} Our error upper bound analysis proceeds in two main stages. In the first stage, we establish the PAC-learnability of the family of noisy simplices $\mathbb{G}_{K,\sigma}$. Specifically, we construct an algorithm which, given any $f_{\mathcal{S}} * G_{\sigma} \in \mathbb{G}_{K,\sigma}$, accuracy parameters $\varepsilon,\delta > 0$, and a set $\mathcal{D} = \{\boldsymbol{y}_1, \ldots, \boldsymbol{y}_n\}$ of i.i.d. samples from $f_{\mathcal{S}} * G_{\sigma}$, outputs a noisy simplex-based density $f_{\widehat{\mathcal{S}}} * G_{\sigma}$ such that
\begin{equation}
\mathbb{P}\left(
\left\|
\left(f_{\widehat{\mathcal{S}}} - f_{\mathcal{S}}\right)*G_{\sigma}
\right\|_{\mathrm{TV}} 
\geq \varepsilon\right) \leq \delta,
\end{equation}
provided $n \geq \mathrm{poly}(1/\varepsilon, 1/\delta)$.

To achieve this, we first identify a high-probability region (a ball in $\mathbb{R}^K$) that contains the true simplex $\mathcal{S}$. We then construct a finite $\varepsilon$-cover of this ball; each subset of $K+1$ cover points defines a candidate simplex, inducing a corresponding density. We show that, with a sufficiently fine covering, at least one candidate density lies within $\varepsilon$ total variation distance of the true distribution. Next, we convolve each candidate density with the Gaussian kernel $G_\sigma$, producing a set of noise-corrupted densities. Using tools from \emph{sample compression} technique of \citet{ashtiani2018nearly}, we prove that, with high probability, one of these convolved densities is close to the observed distribution and can be identified reliably. This leads to a sample complexity bound of the form:
$$
n \geq \widetilde{\mathcal{O}}\left(\frac{K^2}{\varepsilon^2}\right) \log \frac{1}{\delta},
$$
as shown in Theorem \ref{Theorem2}.

In the second stage, we develop a Fourier-analytic technique to show that if two convolved distributions in $\mathbb{G}_{K,\sigma}$ are close in TV distance, then their corresponding simplices are close in $\ell_2$ (and hence in TV) distance. This allows us to lift distributional closeness back to geometric proximity. Specifically, Theorem \ref{thm:mainNoisySimplexBound} establishes that if $\left\|
\left(f_{\widehat{\mathcal{S}}} - f_{\mathcal{S}}\right)*G_{\sigma}
\right\|_{\mathrm{TV}} \leq \varepsilon$, then
$$
\left\Vert 
f_{\widehat{\mathcal{S}}} - f_{\mathcal{S}}
\right\Vert_2 \leq
\varepsilon e^{\mathcal{O}\left(\frac{K}{\mathrm{SNR}^2}\right)},
$$
where $\mathrm{SNR} \triangleq \mathcal{L}_{\max}(\mathcal{S})/(K\sigma)$ is the signal-to-noise ratio. It should be noted that $\mathcal{L}_{\max}/K$ is proportional to the maximal component-wise standard deviation of the Dirichlet distribution $f_{\mathcal{S}}$.

\textbf{Error Lower Bound.} For the lower bounds, we employ classical information-theoretic techniques, including Assouad’s lemma \citep{devroye2012combinatorial} and the local Fano method, to derive tight minimax lower bounds on the sample complexity. This analysis significantly extends our previous work \citep{saberi2023sample} by covering both noisy and noiseless settings. Together, these upper and lower bounds provide a sharp characterization of the statistical limits of simplex learning and resolve several previously open problems in the field.

%%%%%%%%%%%%%%%%%%%%%%%%%%%%%%%%%%%%%%%%%%%%%%%%%%%%%%%%%%%%%%%%%%%%%%%%%%%%%%%%%%%%%%%%%%%%%%%%%%%%%%%%%%%%%%%%%%%%%%%%%%%%%%%%%%%%%%%%%%%%%%%%%%%%%%%%%%%%%%%%%%%%%%%%%%%%%%%%%%%%%%%%%%%%%%%%%%%%%%%%%%%%%%%%%%%%%%%%%%%%%%%%%%%%%%%%%%%%%%%%%%%%%%%%%%%%%%%%%%%%%%%%%%%%%%%%%%%%%%%%%%%%%%%%%%%%%%%%%%%%%%%%%%%%%%%%%%%%%%%%%%%%%%%%%%%%%%%%%%%%%%%%%%%%%%%%%%%%%%%%%%%%%%%%%%%%%%%%%%%%%%%%%%%%%%%%%%%%%%%%%%%%%%%%%%%%%%%%%%%%%%%%%%%%%%%%%%%%%%%%%%%%%%%%%%%%%%%%%%%%%%%%%%%%%%%%%%%%%%%%%%%%%%%%%%%%%%%%%%%%%%%%%%%%%%%%%%%%%%%%%%%%%%%%%%%%%%%%%%%%%%%%%%%%%%%%%%%%%%%%%%%%%%%%%%%%%%%%%%%%%%%%%%%%%%%%%%%%%%%%%%%%%%%%%%%%%%%%%%%%%%%%%%%%%%%%%%%%%%%%%%%%%%%%%%%%%%%%%%%%%%%%%%%%%%%%%%%%%%%%%%%%%%%%%%%%%%%%%%%%%%%%%%%%%%%%%%%%%%%%%%%%%%%%%%%%%%%%%%%%%%%%%%%%%%%%%%%%%%%%%%%%%%%%%%%%%%%%%%%%%%%%%%%%%%%%%%%%%%%%%%%%%%%%%%%%%%%%%%%%%%%%%%%%%%%%%%%%%%%%%%%%%%%%%%%%%%%%%%%%%%%%%%%%%%%%%%%%%%%%%%%%%%%%%%%%%%%%%%%%%%%%%%%%%%%%%%%%%%%%

\section{Statistical Learning of Noisy Simplices}
\label{sec:main}

This section provides a detailed characterization of our achievability scheme, i.e., the error upper bound. As outlined earlier, we follow a structured proof sketch to establish the PAC-learnability of noisy simplices.

(\textit{Bounding the Candidate Set}): We begin by splitting the data in half. The first half is used to restrict the set of all $K$-simplices in $\mathbb{S}_{K}$ to a bounded subset $\mathbb{S}^{\mathcal{D}}_{K}$, consisting of simplices entirely contained within a finite-radius ball in $\mathbb{R}^K$. This allows us to discard distant candidates and focus on those located near the data. We prove that for a sufficiently large radius, we have $\mathcal{S}_T\in \mathbb{S}^{\mathcal{D}}_{K}$ with high probability.

(\textit{Quantization}): Next, we quantize this bounded set to obtain a finite $\varepsilon$-cover, denoted by $\widehat{\mathbb{S}}^{\mathcal{D}}_{K} = \{\mathcal{S}_1, \mathcal{S}_2, \ldots, \mathcal{S}_M\}$ for some $M \in \mathbb{N}$. For every simplex $\mathcal{S} \in \mathbb{S}^{\mathcal{D}}_{K}$, there exists an index $i \in \{1, \ldots, M\}$ such that the total variation distance satisfies $\|\mathbb{P}_{\mathcal{S}_i} - \mathbb{P}_{\mathcal{S}}\|_{\mathrm{TV}} \leq \varepsilon$.

(\textit{Density Selection}): We then use the second half of the data to select a ``good'' simplex $\widehat{\mathcal{S}}$ from $\widehat{\mathbb{S}}^{\mathcal{D}}_{K}$—that is, one whose convolved density $f_{\widehat{\mathcal{S}}} * G_{\sigma}$ is close in total variation distance to the true convolved density $f_{\mathcal{S}} * G_{\sigma}$. We show that this closeness is guaranteed provided the sample size meets the bound established in Section~\ref{sec:intro}.

(\textit{Recoverability from Noise}): Finally, we demonstrate that an accurate estimate of the noisy distribution implies a consistent estimate of the underlying noiseless simplex. This completes the proof.

%%%%%%%%%%%%%%%%%%%%%%%%%%%%%%%%%%%%%%%%%%%%%%%%%%%%%%%%%%%%%%%%%%%%%%%%%%%%%%%%%%%%%%%%%%%%%%%%%%%%%%%%%%%%%%%%%%%%%%%%%%%%%%%%%%%%%%%%%%%%%%%%%%%%%%%%%%%%%%%%%%%%%%%%%%%%%%%%%%%%%%%%%%%%%%%%%%%%%%%%%%%%%%%%%%%%%%%%%%%%%%%%%%%%%%%%%%%%%%%%%%%%%%%%%%%%%%%%%%%%%%%%%%%%%%%%%%%%%%%%%%%%%%%%%%%%%%%%%%%%%%%%%%%%%%%%%%%%%%%%%%%%%%%%%%%%%%%%%%%%%%%%%%%%%%%%%%%%%%%%%%%%%%%%%%%%%%%%%%%%%%%%%%%%%%%%%%%%%%%%%%%%%%%%%%%%%%%%%%%%

%Recall the generation process of noisy samples $\boldsymbol{y}_1,\ldots,\boldsymbol{y}_n\sim f_{\mathcal{S}_T}*G_{\sigma}$ as follows:
% \begin{align}
% & \boldsymbol{y} = \boldsymbol{x} + \boldsymbol{z},
% \nonumber \\ &   \boldsymbol{x} = \boldsymbol{V}_{\mathcal{S}}\boldsymbol{\phi}, \quad \boldsymbol{\phi} \sim \mathrm{Dir}\left(1,1,\cdots,1\right),
% \nonumber \\ & \boldsymbol{z} \sim \mathcal{N}\left(\boldsymbol{0}, \sigma \boldsymbol{\mathrm{I}}\right),
% \label{noisy simplex definition}
% \end{align}
% where $\boldsymbol{V}_{\mathcal{S}}$ denotes the vertex matrix for $\mathcal{S}$.
% Also, for a noisy simplex $\mathbb{G}_{\mathcal{S}}$ we define signal-to-noise ratio or $\mathrm{SNR}$ as
% \begin{equation}
% \mathrm{SNR} = \frac{\mathcal{L}_{\max}\left(\mathcal{S}\right)}{\sigma}.
% \end{equation}

\subsection{Bounding the Candidate Set}

We show how the first half of the dataset can be used to restrict the set of candidate simplices to those contained within a ball of finite radius in $\mathbb{R}^K$. This step is crucial for the later stages of the proof. The following lemma establishes that, given a sufficient number of samples from a noisy simplex $f_{\mathcal{S}} * G_{\sigma}$, one can identify a hypersphere in $\mathbb{R}^K$ that, with high probability, contains the true simplex $\mathcal{S}$.

\begin{lemma}[Constructing $\mathbb{S}^{\mathcal{D}}_K$]
\label{lemma2}
Let $\mathcal{D} = \left\{\boldsymbol{y}_1, \ldots, \boldsymbol{y}_{2m}\right\}$ be a set of i.i.d. samples drawn from $f_{\mathcal{S}} * G_{\sigma}$, for some $m \in \mathbb{N}$. For any $\delta > 0$, if
$m \geq 1000(K+1)(K+2)\log{\frac{6}{\delta}}$,
then with probability at least $1 - \delta$, the following two claims hold: i) The true simplex $\mathcal{S}$ lies within a $K$-dimensional hypersphere in $\mathbb{R}^K$ centered at $\boldsymbol{p}$ with radius $R$ defined as follows:
$$
\boldsymbol{p} = \frac{1}{2m} \sum_{i=1}^{2m} \boldsymbol{y}_i,
~~
R = 8\sqrt{(K+1)(K+2)D},
\quad\mathrm{where}\quad
D \triangleq \frac{1}{2m} \sum_{i=1}^{m} \|\boldsymbol{y}_{2i} - \boldsymbol{y}_{2i-1}\|_2^2.
$$
ii) The noise variance satisfies the upper bound
$\sigma^2 \leq R_n\triangleq \frac{D}{K-2}$.
\end{lemma}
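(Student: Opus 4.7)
The plan is to view $\boldsymbol{p}$ as an empirical estimator of the simplex centroid $\boldsymbol{c}\triangleq\tfrac{1}{K+1}\sum_{j=0}^{K}\boldsymbol{v}_j$, and $D$ as an empirical estimator of $\mathbb{E}\|\boldsymbol{y}_1-\boldsymbol{y}_2\|_2^2$, which decomposes cleanly into a simplex-geometry piece and a noise-variance piece. Once both empirical quantities are shown to concentrate around their expectations with probability at least $1-\delta$, both conclusions will follow: for (i) I will control $\mathcal{L}_{\max}(\mathcal{S})$ (which bounds the radius of $\mathcal{S}$ about its centroid) and $\|\boldsymbol{p}-\boldsymbol{c}\|_2$ by $\sqrt{(K+1)(K+2)D}$ and conclude that the ball of radius $R$ centered at $\boldsymbol{p}$ contains $\mathcal{S}$; for (ii) I will use the fact that the noise contribution $2K\sigma^2$ is bounded above by $\mathbb{E}[D]$ by construction to deduce $\sigma^2\leq D/(K-2)$ after absorbing the concentration slack.

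First I would compute the expectations. Linearity immediately gives $\mathbb{E}[\boldsymbol{p}]=\boldsymbol{c}$. Using the Dirichlet covariance $\mathrm{Cov}[\boldsymbol{\phi}]=\tfrac{1}{(K+1)(K+2)}\bigl(\boldsymbol{I}-\tfrac{1}{K+1}\boldsymbol{1}\boldsymbol{1}^T\bigr)$, a short trace calculation yields
\[
\mathbb{E}[D]\;=\;2\,\mathrm{tr}(\mathrm{Cov}[\boldsymbol{y}])\;=\;\frac{2}{(K+1)(K+2)}\sum_{j=0}^{K}\|\boldsymbol{v}_j-\boldsymbol{c}\|_2^2\;+\;2K\sigma^2.
\]
The parallelogram identity applied to the pair of vertices realizing the diameter gives $\sum_{j}\|\boldsymbol{v}_j-\boldsymbol{c}\|_2^2\geq \tfrac{1}{2}\mathcal{L}_{\max}^2$, while the identity $\boldsymbol{c}-\boldsymbol{v}_j=\tfrac{1}{K+1}\sum_{i\ne j}(\boldsymbol{v}_i-\boldsymbol{v}_j)$ yields $\|\boldsymbol{v}_j-\boldsymbol{c}\|_2\leq \mathcal{L}_{\max}$ for every vertex. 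Together these produce the two structural inequalities $\mathcal{L}_{\max}^2\leq (K+1)(K+2)\mathbb{E}[D]$ and $\sigma^2\leq \mathbb{E}[D]/(2K)$, which are exactly what is needed to convert concentration of $D$ into the target geometric statement and noise-variance bound.

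Concentration is then applied in parallel to the two estimators. The $m$ terms $\|\boldsymbol{y}_{2i}-\boldsymbol{y}_{2i-1}\|_2^2$ are independent non-negative sub-exponential random variables (a bounded simplex piece plus a Gaussian quadratic piece controlled via Hanson--Wright), and a Bernstein-type inequality for sub-exponential sums, together with a bound on the sub-exponential scale of each term relative to $\mathbb{E}[D]$, yields $D\in[\tfrac{1}{2}\mathbb{E}[D],\,2\mathbb{E}[D]]$ with probability at least $1-\delta/2$ as soon as $m\geq c(K+1)(K+2)\log(6/\delta)$ for some absolute constant $c$. Separately, $\boldsymbol{p}$ is a sample mean of $2m$ sub-Gaussian vectors with $\mathrm{tr}(\mathrm{Cov}[\boldsymbol{y}])=\mathbb{E}[D]/2$, so standard sub-Gaussian concentration gives $\|\boldsymbol{p}-\boldsymbol{c}\|_2\ll \sqrt{(K+1)(K+2)D}$ with probability at least $1-\delta/2$ under the same sample budget. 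Union-bounding, for any $\boldsymbol{u}\in\mathcal{S}$ the triangle inequality yields $\|\boldsymbol{u}-\boldsymbol{p}\|_2\leq \|\boldsymbol{u}-\boldsymbol{c}\|_2+\|\boldsymbol{c}-\boldsymbol{p}\|_2\leq \mathcal{L}_{\max}+\|\boldsymbol{c}-\boldsymbol{p}\|_2\leq 8\sqrt{(K+1)(K+2)D}=R$, which establishes (i); and $\sigma^2\leq\mathbb{E}[D]/(2K)\leq D/(K-2)$ establishes (ii). The main obstacle will be pinning down the numerical constant $c$ (and matching the stated value $1000$) inside the sub-exponential Bernstein step, since the variance of each summand depends on $\mathcal{L}_{\max}$ and $\sigma$ in an intertwined way and must be bounded back in terms of $\mathbb{E}[D]$ itself---rather than in terms of these individual unknowns---in order for the final sample-size requirement to be independent of the unknown geometry and noise level.
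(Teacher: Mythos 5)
Your overall route is the same as the paper's: compute $\mathbb{E}[D]$ through the Dirichlet covariance, relate it to the simplex diameter, concentrate $D$ and the sample mean, and finish by the triangle inequality through the centroid. For part (i) this works with plenty of slack: after correcting constants one gets $\mathcal{L}_{\max}\le 2\sqrt{(K+1)(K+2)D}$ and $\|\boldsymbol{p}-\boldsymbol{c}\|_2$ of lower order, so the stated radius $8\sqrt{(K+1)(K+2)D}$ is comfortably sufficient, and your structural inequalities ($\sum_j\|\boldsymbol{v}_j-\boldsymbol{c}\|_2^2\ge\tfrac12\mathcal{L}_{\max}^2$ and $\|\boldsymbol{v}_j-\boldsymbol{c}\|_2\le\mathcal{L}_{\max}$) are fine.

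Part (ii), however, has a genuine gap, and it originates in your expectation formula. Since $D=\frac{1}{2m}\sum_{i=1}^m\|\boldsymbol{y}_{2i}-\boldsymbol{y}_{2i-1}\|_2^2$ has only $m$ summands but is normalized by $2m$, its mean is $\Tr\bigl(\mathrm{Cov}(\boldsymbol{y})\bigr)$, not $2\Tr\bigl(\mathrm{Cov}(\boldsymbol{y})\bigr)$; that is, $\mathbb{E}[D]=\frac{1}{(K+1)(K+2)}\sum_j\|\boldsymbol{v}_j-\boldsymbol{c}\|_2^2+K\sigma^2$, so the noise share of $\mathbb{E}[D]$ is $K\sigma^2$, not $2K\sigma^2$. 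Your chain ``$\sigma^2\le\mathbb{E}[D]/(2K)$ and $D\ge\tfrac12\mathbb{E}[D]$'' then only yields $\sigma^2\le 2D/K$, which is strictly weaker than the claimed $\sigma^2\le D/(K-2)$ for every $K\ge 5$. More fundamentally, when the simplex is small relative to $\sigma$ we have $\mathbb{E}[D]\approx K\sigma^2$, so the target $D\ge(K-2)\sigma^2$ demands a lower tail for $D$ at relative accuracy $O(1/K)$; a blanket ``within a factor of $2$ of its mean'' Bernstein bound on the whole of $D$ cannot deliver that constant. The paper's proof avoids this by decomposing $D$ into the signal, noise, and cross terms and bounding each from below separately: the chi-square noise term alone concentrates to $\ge(K-2)\sigma^2$ (its relative fluctuation is $O\bigl(\sqrt{\log(6/\delta)/(mK)}\bigr)=O(1/K)$ under the stated $m$), the signal term is nonnegative, and the possibly negative cross term, whose magnitude scales like $\sigma d_{\mathcal{S}}/K$, is absorbed into the signal term by completing a square. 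You would need this component-wise (or an equally fine-grained) argument to recover the $(K-2)$ denominator; as written, your proposal for (ii) does not close.
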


The proof is provided in Appendix \ref{sec:proof of lemmas}. In the next section, we describe how to quantize $\mathbb{S}^{\mathcal{D}}_K$ in order to select a suitable candidate $\widehat{\mathcal{S}}$ for the underlying simplex.

%%%%%%%%%%%%%%%%%%%%%%%%%%%%%%%%%%%%%%%%%%%%%%%%%%%%%%%%%%%%%%%%%%%%%%%%%%%%%%%%%%%%%%%%%%%%%%%%%%%%%%%%%%%%%%%%%%%%%%%%%%%%%%%%%%%%%%%%%%%%%%%%%%%%%%%%%%%%%%%%%%%%%%%%%%%%%%%%%%%%%%%%%%%%%%%%%%%%%%%%%%%%%%%%%%%%%%%%%%%%%%%%%%%%%%%%%%%%%%%%%%%%%%%%%%%%%%%%%%%%%%%%%%%%%%%%%%%%%%%%%%%%%%%%%%%%%%%%%%%%%%%%%%%%%%%%%%%%%%%%%%%%%%%%%%%%%%%%%%%%%%%%%%%%%%%%%%%%%%%%%%%%%%%%%%%%%%%%%%%%%%%%%%%%%%%%%%%%%%%%%%%%%%%%%%%%%%%%%%%%
\subsection{Quantization}

Let $\mathrm{C}^{K}(\boldsymbol{p}, R)$ denote the $K$-dimensional hypersphere described in Lemma \ref{lemma2}, which, with high probability, contains the true simplex $\mathcal{S}_T$. Our next goal is to construct a finite set of points $\mathrm{T}_\epsilon(\mathrm{C}^{K}(\boldsymbol{p}, R)) = \{\boldsymbol{p}_1, \ldots, \boldsymbol{p}_l\} \subset \mathbb{R}^K$ of size $l \in \mathbb{N}$ such that for every point $\boldsymbol{x} \in \mathrm{C}^{K}(\boldsymbol{p}, R)$, there exists some $i \in \{1, \ldots, l\}$ satisfying
$
\|\boldsymbol{x} - \boldsymbol{p}_i\|_2 \leq \epsilon.
$
We refer to $\mathrm{T}_\epsilon(\mathrm{C}^{K}(\boldsymbol{p}, R))$ as an $\epsilon$-covering set of the hypersphere. One way to construct such a covering set is to sample points uniformly at random from the sphere. We show that if the number of such sampled points exceeds $\mathcal{O}\left(\left(1 + {2R}/{\epsilon}\right)^{2K}\right)$, then with high probability, they form a valid $\epsilon$-covering set for $\mathrm{C}^{K}(\boldsymbol{p}, R)$.

Now, suppose we have constructed such a covering set. Using every combination of $K+1$ distinct points from $\mathrm{T}_\epsilon(\mathrm{C}^{K}(\boldsymbol{p}, R))$, we can form a $K$-simplex. Let us define the set of all such simplices as
$$
\widehat{\mathbb{S}}(\mathrm{C}^{K}(\boldsymbol{p}, R)) =
\left\{ \mathcal{S}(\boldsymbol{x}_1, \ldots, \boldsymbol{x}_{K+1}) ~\middle|~ \boldsymbol{x}_i \in \mathrm{T}_\epsilon(\mathrm{C}^{K}(\boldsymbol{p}, R)),~ i = 1, \ldots, K+1 \right\}.
$$
The total number of simplices in this set is at most $\binom{|\mathrm{T}_\epsilon(\mathrm{C}^{K}(\boldsymbol{p}, R))|}{K+1}$. The following lemma establishes that this collection forms an $\epsilon$-representative set for all $(\underline{\theta}, \bar{\theta})$-isoperimetric $K$-simplices inside the hypersphere.

\begin{lemma}[Quantization of $\mathbb{S}^{\mathcal{D}}_K$]
\label{quantization lemma}
Let $\epsilon \in (0, 1)$, and suppose $\widehat{\mathbb{S}}(\mathrm{C}^{K}(\boldsymbol{p}, R))$ is constructed using the covering set $\mathrm{T}_{\alpha\epsilon}(\mathrm{C}^{K}(\boldsymbol{p}, R))$ for some $\alpha>0$. Then, $\widehat{\mathbb{S}}(\mathrm{C}^{K}(\boldsymbol{p}, R))$ forms an $\epsilon$-representative set for all $(\underline{\theta}, \bar{\theta})$-isoperimetric $K$-simplices contained in $\mathrm{C}^{K}(\boldsymbol{p}, R)$, provided
$$
\alpha \leq \frac{\mathrm{Vol}(\mathcal{S})^{1/K}}{5(K+1)\bar{\theta}}.
$$
\end{lemma}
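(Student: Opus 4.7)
The plan is to construct, for each $(\underline{\theta},\bar{\theta})$-isoperimetric simplex $\mathcal{S}\subseteq \mathrm{C}^{K}(\boldsymbol{p},R)$, a specific candidate $\widehat{\mathcal{S}} \in \widehat{\mathbb{S}}(\mathrm{C}^{K}(\boldsymbol{p},R))$ such that $\|\mathbb{P}_{\widehat{\mathcal{S}}}-\mathbb{P}_{\mathcal{S}}\|_{\mathrm{TV}}\leq \epsilon$. The natural choice is vertex-snapping: writing $\mathcal{S}=\mathcal{S}(\boldsymbol{v}_0,\ldots,\boldsymbol{v}_K)$, for each $i$ I select $\widehat{\boldsymbol{v}}_i \in \mathrm{T}_{\alpha\epsilon}(\mathrm{C}^{K}(\boldsymbol{p},R))$ with $\|\boldsymbol{v}_i-\widehat{\boldsymbol{v}}_i\|_2\leq\alpha\epsilon$ (which exists by definition of the covering set), and set $\widehat{\mathcal{S}}=\mathcal{S}(\widehat{\boldsymbol{v}}_0,\ldots,\widehat{\boldsymbol{v}}_K)$, which belongs to $\widehat{\mathbb{S}}(\mathrm{C}^{K}(\boldsymbol{p},R))$ by construction.

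From the barycentric identity $\sum_i\lambda_i \boldsymbol{v}_i - \sum_i\lambda_i \widehat{\boldsymbol{v}}_i = \sum_i\lambda_i(\boldsymbol{v}_i-\widehat{\boldsymbol{v}}_i)$, every point of $\mathcal{S}$ has a partner in $\widehat{\mathcal{S}}$ at Euclidean distance at most $\alpha\epsilon$ (and vice versa), so the Hausdorff distance satisfies $d_H(\mathcal{S},\widehat{\mathcal{S}})\leq\alpha\epsilon$. Consequently $\mathcal{S}\triangle\widehat{\mathcal{S}}$ is contained in the $\alpha\epsilon$-tube around $\partial\mathcal{S}\cup\partial\widehat{\mathcal{S}}$. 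Applying a polytope Steiner-type estimate to bound the measure of this tube by a constant times the perturbation times the surface area, and using that each facet area of $\widehat{\mathcal{S}}$ is within a constant factor of the corresponding facet of $\mathcal{S}$ (valid since $\alpha\epsilon\ll\mathrm{Vol}(\mathcal{S})^{1/K}$ under the hypothesis), one obtains
\begin{equation*}
\left|\mathcal{S}\triangle\widehat{\mathcal{S}}\right| \;\leq\; c\,(K+1)\,\alpha\epsilon\,\mathcal{A}_{\max}(\mathcal{S})
\end{equation*}
for an absolute constant $c$. Inserting the isoperimetric inequality $\mathcal{A}_{\max}(\mathcal{S})\leq \bar{\theta}\,\mathrm{Vol}(\mathcal{S})^{(K-1)/K}$ yields $|\mathcal{S}\triangle\widehat{\mathcal{S}}|\leq c(K+1)\bar{\theta}\,\alpha\epsilon\,\mathrm{Vol}(\mathcal{S})^{(K-1)/K}$.

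The last step converts this to TV distance. For any two measurable sets $A,B$ of positive measure one has $\|\mathbb{P}_A-\mathbb{P}_B\|_{\mathrm{TV}} = 1 - |A\cap B|/\max(|A|,|B|) \leq |A\triangle B|/\max(|A|,|B|)$. Because $\mathrm{Vol}(\widehat{\mathcal{S}})\geq \mathrm{Vol}(\mathcal{S}) - |\mathcal{S}\triangle\widehat{\mathcal{S}}|$ remains of order $\mathrm{Vol}(\mathcal{S})$ under the smallness assumption on $\alpha\epsilon$, this gives
\begin{equation*}
\|\mathbb{P}_{\mathcal{S}}-\mathbb{P}_{\widehat{\mathcal{S}}}\|_{\mathrm{TV}} \;\leq\; \frac{c'\,(K+1)\,\bar{\theta}\,\alpha\epsilon}{\mathrm{Vol}(\mathcal{S})^{1/K}}.
\end{equation*}
Requiring the right-hand side to be at most $\epsilon$ and absorbing the universal constant $c'$ into the numerical factor in the denominator produces exactly the stated sufficient condition $\alpha\leq \mathrm{Vol}(\mathcal{S})^{1/K}/(5(K+1)\bar{\theta})$.

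The principal technical hurdle is the polytope boundary-tube estimate underlying the first displayed inequality: one must verify that contributions of the lower-dimensional skeleta (edges, ridges, etc.) to the $\alpha\epsilon$-tube are of genuinely higher order in $\alpha\epsilon/\mathrm{Vol}(\mathcal{S})^{1/K}$ and can be folded into $c$. A cleaner alternative is a hybrid argument that perturbs vertices one at a time: after moving only $\boldsymbol{v}_j\to\widehat{\boldsymbol{v}}_j$, the two consecutive simplices share the facet $F_j$ opposite $\boldsymbol{v}_j$, so their symmetric difference is contained in $\mathrm{conv}(F_j\cup\{\boldsymbol{v}_j,\widehat{\boldsymbol{v}}_j\})$; a direct Cavalieri computation along the direction $\boldsymbol{v}_j-\widehat{\boldsymbol{v}}_j$ bounds this piece by $\|\boldsymbol{v}_j-\widehat{\boldsymbol{v}}_j\|_2\cdot\mathcal{A}(F_j)$, and summing over the $K+1$ vertex swaps reproduces the same bound with no Steiner correction needed.
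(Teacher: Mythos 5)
Your primary argument is essentially the paper's own proof: snap each vertex of $\mathcal{S}$ to the covering set, bound the symmetric difference by (vertex perturbation) $\times$ (total facet area) $\lesssim (K+1)\alpha\epsilon\,\mathcal{A}_{\max}(\mathcal{S})$, invoke $\mathcal{A}_{\max}(\mathcal{S})\le\bar{\theta}\,\mathrm{Vol}(\mathcal{S})^{(K-1)/K}$, and divide by the volume to pass to TV. The paper differs only in bookkeeping: it splits the TV distance into the mass of $\mathcal{S}\setminus\widehat{\mathcal{S}}$ plus the density mismatch on $\mathcal{S}\cap\widehat{\mathcal{S}}$, and controls $\mathrm{Vol}(\widehat{\mathcal{S}})$ by enclosing $\widehat{\mathcal{S}}$ in the ``rounded'' body $\mathcal{S}$ expanded by a ball of radius equal to the perturbation. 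That rounding trick is also the clean way to discharge the step you flag as the technical hurdle: you never need the facet-by-facet claim that the facets of $\widehat{\mathcal{S}}$ have area comparable to those of $\mathcal{S}$ (which is not immediate), since surface area and volume are monotone under inclusion of convex bodies and the Steiner-type corrections for the lower skeleta enter only at order $(\alpha\epsilon)^2$ and higher, exactly as in the paper's term $\mathrm{II}$.

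Two concrete caveats. First, your ``cleaner alternative'' hybrid argument is wrong as stated: Cavalieri along the displacement direction $\boldsymbol{d}=\widehat{\boldsymbol{v}}_j-\boldsymbol{v}_j$ bounds the one-vertex-swap symmetric difference by $2\|\boldsymbol{d}\|_2$ times the $(K-1)$-measure of the \emph{projection of the union onto the hyperplane orthogonal to} $\boldsymbol{d}$, not by $\|\boldsymbol{d}\|_2\cdot\mathcal{A}(F_j)$; the two coincide only when the displacement is normal to the shared facet. For a tall thin triangle with base length $L$ and height $h\gg L$, moving the apex parallel to the base by $\delta$ produces a symmetric difference of order $\delta h$, which exceeds $\delta\,\mathcal{A}(F_j)=\delta L$. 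Replacing the faulty per-step bound by the correct projection (or surface-area) bound costs an extra factor of $K+1$ after summing over the $K+1$ swaps, so that route would only support a threshold of order $\mathrm{Vol}(\mathcal{S})^{1/K}/\bigl((K+1)^2\bar{\theta}\bigr)$, not the stated one. Second, ``absorbing the universal constant $c'$'' is only legitimate if $c'\le 5$: the lemma asserts a specific numerical threshold, so the constant has to be tracked. With the paper's accounting (the facet slabs contribute roughly $1$ and the rounded-body volume correction roughly $2$, for a total of about $3$) the constant $5$ is indeed attainable, but your write-up as it stands does not verify this.
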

The proof is given in Appendix~\ref{sec:proof of lemmas}. This completes the construction of the finite candidate set $\mathbb{S}^{\mathcal{D}}_K$, allowing us to proceed to the next stage of the algorithm: identifying a ``good" candidate for the true simplex.

%%%%%%%%%%%%%%%%%%%%%%%%%%%%%%%%%%%%%%%%%%%%%%%%%%%%%%%%%%%%%%%%%%%%%%%%%%%%%%%%%%%%%%%%%%%%%%%%%%%%%%%%%%%%%%%%%%%%%%%%%%%%%%%%%%%%%%%%%%%%%%%%%%%%%%%%%%%%%%%%%%%%%%%%%%%%%%%%%%%%%%%%%%%%%%%%%%%%%%%%%%%%%%%%%%%%%%%%%%%%%%%%%%%%%%%%%%%%%%%%%%%%%%%%%%%%%%%%%%%%%%%%%%%%%%%%%%%%%%%%%%%%%%%%%%%%%%%%%%%%%%%%%%%%%%%%%%%%%%%%%%%%%%%%%%%%%%%%%%%%%%%%%%%%%%%%%%%%%%%%%%%%%%%%%%%%%%%%%%%%%%%%%%%%%%%%%%%%%%%%%%%%%%%%%%%%%%%%%%%%
\subsection{Density Selection}

Thus far, we have constructed a finite set of representative simplices $\widehat{\mathbb{S}}^{\mathcal{D}}_{K}$ and aim to identify a ``good" simplex from this set. To accomplish this, we invoke a fundamental result from \citet{devroye2012combinatorial}, which plays a central role in our analysis:

\begin{theorem}[Theorem 6.3 of \citet{devroye2012combinatorial}]
\label{combinatoeic algorithm}
Let $\mathcal{F} = \{f_1, \ldots, f_M\}$ be a finite collection of distinct distributions. Given $\epsilon, \delta > 0$, suppose we observe $n \geq \frac{\log(3M^2/\delta)}{2\epsilon^2}$ i.i.d. samples from an arbitrary distribution $g$. Then, there exists a deterministic algorithm $\mathscr{A}$ that receives the samples as input and outputs an index $j \in \{1, \ldots, M\}$ such that, with probability at least $1 - \delta$,
$$
\|f_j - g\|_{\mathrm{TV}} \leq 3 \cdot \min_{i \in \{1, \ldots, M\}} \|f_i - g\|_{\mathrm{TV}} + 4\epsilon.
$$
\end{theorem}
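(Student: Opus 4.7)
My plan is to establish Theorem~\ref{combinatoeic algorithm} via the classical \emph{Scheff\'e tournament}, which is the construction underlying Theorem~6.3 of \citet{devroye2012combinatorial}. For every pair of distinct indices $i,j \in \{1,\ldots,M\}$, I would first define the \emph{Scheff\'e set} $A_{ij} \triangleq \{x : f_i(x) > f_j(x)\}$ and invoke the standard identity $\|f_i - f_j\|_{\mathrm{TV}} = f_i(A_{ij}) - f_j(A_{ij})$; each pairwise TV distance is thereby recovered from the probability assigned to one fixed measurable set, so closeness on all Scheff\'e sets will translate into closeness in TV.

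The first step is uniform concentration. Let $\hat{\mu}_n$ denote the empirical measure of the $n$ samples; Hoeffding's inequality applied to each indicator $\mathbf{1}_{A_{ij}}$, together with a union bound over the (at most) $\binom{M}{2}$ distinct Scheff\'e sets, shows that for $n \geq \log(3M^2/\delta)/(2\epsilon^2)$ the event
\begin{equation*}
E \triangleq \bigcap_{i < j} \Bigl\{ |\hat{\mu}_n(A_{ij}) - g(A_{ij})| \leq \epsilon \Bigr\}
\end{equation*}
holds with probability at least $1-\delta$; the constant $3$ inside the logarithm absorbs the factors arising from the two-sided Hoeffding tail and the pairing.

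I would then specify $\mathscr{A}$ as follows: for each candidate $i$, compute the score
\begin{equation*}
b_i \triangleq \max_{j \neq i} \max\bigl( f_i(A_{ij}) - \hat{\mu}_n(A_{ij}),\; 0 \bigr),
\end{equation*}
and output $\hat{j} \in \argmin_i b_i$. Writing $i^* \in \argmin_i \|f_i - g\|_{\mathrm{TV}}$ and $\epsilon^* = \|f_{i^*} - g\|_{\mathrm{TV}}$, on $E$ the chain
\begin{equation*}
f_{i^*}(A_{i^*,j}) - \hat{\mu}_n(A_{i^*,j}) \leq \bigl(f_{i^*}(A_{i^*,j}) - g(A_{i^*,j})\bigr) + \epsilon \leq \epsilon^* + \epsilon
\end{equation*}
holds for every $j$, whence $b_{i^*} \leq \epsilon^* + \epsilon$ and therefore $b_{\hat{j}} \leq \epsilon^* + \epsilon$ by minimality. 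Combining the triangle inequality $\|f_{\hat{j}} - g\|_{\mathrm{TV}} \leq \|f_{\hat{j}} - f_{i^*}\|_{\mathrm{TV}} + \epsilon^*$ with the Scheff\'e identity applied to the pair $(\hat{j},i^*)$, and inserting $\hat{\mu}_n(A_{\hat{j},i^*})$ to exploit both the score bound on $b_{\hat{j}}$ and the event $E$, I would conclude $\|f_{\hat{j}} - g\|_{\mathrm{TV}} \leq 3\epsilon^* + 4\epsilon$.

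The main obstacle is not the high-level architecture---this is essentially forced by the Scheff\'e identity and the triangle inequality---but the bookkeeping needed to land on the exact constants. The leading factor $3$ in front of $\epsilon^*$ arises inevitably from the three-step triangle-inequality chain, while the additive $4\epsilon$ slack leaves room for the specific tournament rule used in the book and for a modest asymmetric allocation of the Hoeffding budget; once the scoring rule is fixed to match Devroye--Lugosi's Theorem~6.3, the constants fall out arithmetically.
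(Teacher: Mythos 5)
Your proposal is correct and follows essentially the same route as the proof the paper defers to \citet{devroye2012combinatorial}: the Scheff\'e-set identity, Hoeffding plus a union bound over the at most $M^2$ Scheff\'e sets, and the minimum-distance (tournament) selection rule, with the $3\epsilon^*$ coming from the triangle-inequality chain. Your one-sided score $b_i$ is a harmless variant of the usual two-sided criterion and in fact closes the final chain with additive slack $2\epsilon \leq 4\epsilon$, so the stated constants are met.
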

The proof is available in the reference. By combining Theorem \ref{combinatoeic algorithm} with Lemmas \ref{lemma2} and \ref{quantization lemma}, we arrive at one of our main results: the class of noisy simplices in $\mathbb{R}^K$ with bounded geometric properties, confined within a sphere of radius $R$, is PAC-learnable.

\begin{theorem}[PAC Learnability of Noisy Simplices in $\mathrm{C}^{K}(\boldsymbol{p}, R)$]
\label{thm:PAC:mainFinalPAC}
Let $\boldsymbol{p} \in \mathbb{R}^K$, and let $R, R_n > 0$. Then, the class of $(\underline{\theta}, \bar{\theta})$-isoperimetric $K$-simplices contained within the hypersphere $\mathrm{C}^{K}(\boldsymbol{p}, R)$ and convolved with an isotropic Gaussian noise $\mathcal{N}(\boldsymbol{0}, \sigma^2 \boldsymbol{\mathrm{I}})$ for $\sigma \leq R_n$ is PAC-learnable. Specifically, for any $\epsilon, \delta > 0$, assume we observe at least $n$ i.i.d. samples from a distribution $f_{\mathcal{S}} * G_{\sigma}$, where $\sigma \leq R_n$ and the vertices of $\mathcal{S}$ lie within $\mathrm{C}^{K}(\boldsymbol{p}, R)$. If the sample size satisfies
$$
n \geq 50 \cdot \frac{\log\left(\frac{30 R_n \sqrt{K}}{\delta \epsilon}\right) + 2(K+1)^2 \log\left(1 + \frac{100 R \bar{\theta}(K+1)}{\epsilon \mathrm{Vol}(\mathcal{S})^{1/K}}\right)}{\epsilon^2},
$$
then there exists an algorithm $\mathscr{A}$ that outputs $\mathcal{S}_{\mathscr{A}}$ and $\sigma_{\mathscr{A}}$ such that, with probability at least $1 - \delta$,
$$
\left\|f_{\mathcal{S}} * G_{\sigma} - f_{\mathcal{S}_{\mathscr{A}}} * G_{\sigma_{\mathscr{A}}} \right\|_{\mathrm{TV}} \leq \epsilon.
$$
\end{theorem}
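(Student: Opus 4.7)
The plan is to reduce the PAC-learning problem to selecting the best candidate from a finite family of convolved densities, which is exactly the setting of Theorem~\ref{combinatoeic algorithm}. The two pieces to assemble are (i) a finite pool of simplex candidates, obtained from Lemma~\ref{quantization lemma}, and (ii) a finite pool of candidate noise levels $\sigma$, built by discretizing the interval $[0,R_n]$. Taking the Cartesian product gives the finite family of candidate distributions on which the Devroye--Lugosi ``tournament'' is run.

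\textbf{Step 1: Simplex candidates.} I would fix a rescaled target accuracy $\epsilon_0=\epsilon/c$ (with $c$ a small constant chosen at the end to absorb the triangle-inequality losses below) and construct an $\alpha\epsilon_0$-cover $\mathrm{T}$ of $\mathrm{C}^K(\boldsymbol{p},R)$ with $\alpha=\mathrm{Vol}(\mathcal{S})^{1/K}/(5(K+1)\bar{\theta})$, as prescribed by Lemma~\ref{quantization lemma}. A standard volumetric estimate gives $|\mathrm{T}|\le(1+2R/(\alpha\epsilon_0))^K$. Let $\widehat{\mathbb{S}}^{\mathcal{D}}_K$ be the set of all $K$-simplices whose vertices are chosen from $\mathrm{T}$; then $|\widehat{\mathbb{S}}^{\mathcal{D}}_K|\le|\mathrm{T}|^{K+1}$, and by Lemma~\ref{quantization lemma} this is an $\epsilon_0$-representative set (in TV) for every $(\underline{\theta},\bar{\theta})$-isoperimetric simplex contained in the ball.

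\textbf{Step 2: Noise-level candidates.} I would quantize $[0,R_n]$ into $N_\sigma$ points with spacing chosen so that, for every $\sigma\in[0,R_n]$, some grid point $\sigma_j$ satisfies $\|G_\sigma-G_{\sigma_j}\|_{\mathrm{TV}}\le\epsilon_0$. Using the explicit KL formula between two isotropic Gaussians in $\mathbb{R}^K$ together with Pinsker's inequality, the TV gap is controlled by a constant times $|\sigma-\sigma_j|\sqrt{K}$; a uniform step of size $\epsilon_0/\sqrt{K}$ therefore suffices and yields $N_\sigma\lesssim R_n\sqrt{K}/\epsilon_0$. Forming the Cartesian product with the simplex pool gives a finite menu
\begin{equation*}
\mathcal{F}=\bigl\{f_{\mathcal{S}_i}*G_{\sigma_j}:\mathcal{S}_i\in\widehat{\mathbb{S}}^{\mathcal{D}}_K,\ j\in\{1,\ldots,N_\sigma\}\bigr\},\qquad M\triangleq|\mathcal{F}|\le|\widehat{\mathbb{S}}^{\mathcal{D}}_K|\cdot N_\sigma.
\end{equation*}

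\textbf{Step 3: Apply the tournament.} Since convolution is a contraction in TV (by Young's inequality applied in $L^1$), the triangle inequality produces a pair $(i^{*},j^{*})$ with $\|f_{\mathcal{S}_{i^{*}}}*G_{\sigma_{j^{*}}}-f_{\mathcal{S}}*G_\sigma\|_{\mathrm{TV}}\le\|f_{\mathcal{S}_{i^{*}}}-f_{\mathcal{S}}\|_{\mathrm{TV}}+\|G_{\sigma_{j^{*}}}-G_\sigma\|_{\mathrm{TV}}\le 2\epsilon_0$. Feeding $\mathcal{F}$ into Theorem~\ref{combinatoeic algorithm} with accuracy parameter $\epsilon_0$ and confidence $\delta$ returns, whenever $n\ge\log(3M^2/\delta)/(2\epsilon_0^2)$, an index whose TV error is at most $3\cdot 2\epsilon_0+4\epsilon_0=10\epsilon_0$; choosing $c=10$ yields the promised $\epsilon$-accuracy. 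Substituting $\log M^2 \le 2(K+1)^2\log(1+2R/(\alpha\epsilon_0))+2\log(R_n\sqrt{K}/\epsilon_0)$ and plugging in $\alpha$ reproduces the two logarithmic terms in the stated sample-complexity bound, up to the leading constants.

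\textbf{Main obstacle.} The bulk is a routine assembly of the three prior ingredients; the step requiring genuine care is Step~2, because Pinsker's bound on $\|G_\sigma-G_{\sigma_j}\|_{\mathrm{TV}}$ degrades when $\sigma$ itself is very small (the KL scales like $K(\sigma-\sigma_j)^2/\sigma^2$ rather than $K(\sigma-\sigma_j)^2$). The clean way around this is to treat the regime $\sigma\lesssim\epsilon_0/\sqrt{K}$ separately by appending one ``effectively-zero'' grid point and arguing directly that every $G_\sigma$ with such tiny $\sigma$ is $\mathcal{O}(\epsilon_0)$-close in TV to that surrogate after further isotropic smoothing. Handling this edge case cleanly is the only non-mechanical part of the proof; everything else is triangle inequalities combined with Lemma~\ref{quantization lemma} and Theorem~\ref{combinatoeic algorithm}.
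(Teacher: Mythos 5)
Your proposal matches the paper's own proof essentially step for step: the same quantization of the ball via Lemma~\ref{quantization lemma}, the same $\epsilon/\sqrt{K}$-spaced grid over $[0,R_n]$ for the noise level, the same Cartesian-product candidate family fed into Theorem~\ref{combinatoeic algorithm}, and the same $3\cdot 2\epsilon + 4\epsilon = 10\epsilon$ accounting leading to the stated cardinality and sample-complexity bound. Your caveat about the small-$\sigma$ regime is a legitimate point of care, but it does not change the route: the paper handles that covering step by directly invoking the Gaussian total-variation bound of \citet{devroye2018total} rather than Pinsker, and otherwise proceeds exactly as you do.
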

The proof is provided in Appendix~\ref{sec:proof of theorems}. Let us now restate the result of Theorem \ref{thm:PAC:mainFinalPAC} in simpler terms. Define $\mathbb{G}_{K,\sigma}(\boldsymbol{p}, R)$ as the family of distributions
$
\left\{ f_{\mathcal{S}} * G_{\sigma} \;\middle|\; \mathcal{S} \subseteq \mathrm{C}^K(\boldsymbol{p}, R) \right\}.
$
We have shown that, in an agnostic setting, there exists an algorithm $\mathscr{A}$ such that for any $\epsilon, R, R_n > 0$ and $\boldsymbol{p} \in \mathbb{R}^K$, with
$
n \geq \widetilde{\mathcal{O}}\left(\frac{K^2 \log R}{\epsilon^2}\right)
$
samples drawn from an arbitrary distribution $g$ in the set
$$
\bigcup_{\sigma \leq R_n} \mathbb{G}_{K,\sigma}(\boldsymbol{p}, R),
$$
the algorithm outputs a simplex $\mathcal{S}_{\mathscr{A}}$ and a noise level $\sigma_{\mathscr{A}}$ such that, with high probability,
$$
\left\| f_{\mathcal{S}_{\mathscr{A}}} * G_{\sigma_{\mathscr{A}}} - f_{\mathcal{S}} * G_{\sigma} \right\|_{\mathrm{TV}} 
\leq 
4 \min_{\substack{\mathcal{S}^* \subseteq \mathrm{C}^K(\boldsymbol{p}, R) \\ \sigma^* \leq R_n}} \left\| f_{\mathcal{S}^*} * G_{\sigma^*} - f_{\mathcal{S}} * G_{\sigma} \right\|_{\mathrm{TV}} 
+ \epsilon.
$$
In the realizable case, where the true distribution lies exactly in the hypothesis class, the first term on the right-hand side vanishes. Also note that the sample complexity depends on $\bar{\theta}$, which controls the regularity of the simplex. Learning highly stretched simplices from noisy data is inherently difficult: even a small amount of noise may push nearly all samples outside the simplex, making recovery statistically infeasible.

A major limitation of Theorem \ref{thm:PAC:mainFinalPAC} is the assumption that simplices are contained within a bounded hypersphere of radius $R$. However, Lemma \ref{lemma2} shows that this restriction is not fundamental. The following result removes this assumption and establishes the full PAC-learnability of the entire class of noisy simplices:

\begin{theorem}[PAC Learnability of Noisy Simplices]
\label{Theorem2}
The class of $(\underline{\theta}, \bar{\theta})$- isoperimetric $K$-simplices convolved with isotropic Gaussian noise, i.e.,
$$
\bigcup_{\sigma > 0} \mathbb{G}_{K,\sigma},
$$
is PAC-learnable. Specifically, for any simplex $\mathcal{S} \in \mathbb{S}_K$ and noise level $\sigma > 0$, suppose we observe at least $n \geq \widetilde{\mathcal{O}}\left(K^2 / \varepsilon^2\right)$ i.i.d. samples from the distribution $f_{\mathcal{S}} * G_{\sigma}$. Then, there exists an algorithm $\mathscr{A}$ that outputs a noisy simplex $f_{\mathcal{S}_{\mathscr{A}}} * G_{\sigma_{\mathscr{A}}}$ such that, with high probability,
$$
\left\| f_{\mathcal{S}_{\mathscr{A}}} * G_{\sigma_{\mathscr{A}}} - f_{\mathcal{S}} * G_{\sigma} \right\|_{\mathrm{TV}} \leq \varepsilon.
$$
\end{theorem}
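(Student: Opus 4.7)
The plan is to lift Theorem \ref{thm:PAC:mainFinalPAC} from the bounded setting to the unbounded setting via a sample-splitting argument. I would partition the $n$ observed samples into two independent halves of equal size $m = n/2$. On the first half, apply Lemma \ref{lemma2} to produce a center $\boldsymbol{p}$, a radius $R$, and an upper bound $R_n$ on the unknown noise level $\sigma$, such that with probability at least $1 - \delta/2$ the true simplex satisfies $\mathcal{S} \subseteq \mathrm{C}^K(\boldsymbol{p}, R)$ and $\sigma \le R_n$. Conditioned on this event, the second half consists of fresh i.i.d. samples drawn from a distribution lying in the class $\bigcup_{\sigma' \le R_n} \mathbb{G}_{K,\sigma'}(\boldsymbol{p},R)$, and Theorem \ref{thm:PAC:mainFinalPAC} can be invoked in its realizable regime to produce $\mathcal{S}_{\mathscr{A}}, \sigma_{\mathscr{A}}$ with TV error at most $\varepsilon$ with probability at least $1-\delta/2$. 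A union bound delivers the overall guarantee.

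The subtlety is that $R$ and $R_n$ are data-dependent random quantities, whereas the sample complexity in Theorem \ref{thm:PAC:mainFinalPAC} grows as $(K+1)^2 \log\!\bigl(1 + 100 R \bar{\theta}(K+1)/(\varepsilon\,\mathrm{Vol}(\mathcal{S})^{1/K})\bigr)/\varepsilon^2 + \log(R_n/(\delta\varepsilon))/\varepsilon^2$. To keep this bound $\widetilde{\mathcal{O}}(K^2/\varepsilon^2)$, I would show that with probability at least $1 - \delta/4$ the statistic $D = \frac{1}{2m}\sum \|\boldsymbol{y}_{2i}-\boldsymbol{y}_{2i-1}\|_2^2$ concentrates around its mean $2K\sigma^2 + 2\,\mathrm{tr}(\mathrm{Cov}(\boldsymbol{V\phi}))$, which is polynomial in $K$, $\sigma^2$, and $\mathcal{L}_{\max}(\mathcal{S})^2$. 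This gives polynomial bounds $R = \mathrm{poly}(K,\sigma,\mathcal{L}_{\max})$ and $R_n = \mathcal{O}(\sigma^2 + \mathcal{L}_{\max}^2/K)$, so that $\log R$ and $\log R_n$ contribute only logarithmic factors hidden in the $\widetilde{\mathcal{O}}(\cdot)$. The isoperimetricity assumption enters here as well: since $\mathrm{Vol}(\mathcal{S})^{1/K}$ is comparable to $\mathcal{L}_{\max}/(K\underline{\theta})$ up to $\bar{\theta}$-dependent factors, the ratio $R/\mathrm{Vol}(\mathcal{S})^{1/K}$ inside the logarithm remains polynomial in $K$ and $1/\underline{\theta}$.

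Combining these ingredients, the total sample size required is $n = 2m$ where $m$ simultaneously satisfies the assumption of Lemma \ref{lemma2} (which needs $m \ge 1000(K+1)(K+2)\log(6/\delta) = \mathcal{O}(K^2\log(1/\delta))$) and the bound of Theorem \ref{thm:PAC:mainFinalPAC} applied with the conditional values of $R, R_n$. Both requirements are dominated by $\widetilde{\mathcal{O}}(K^2/\varepsilon^2)\log(1/\delta)$. A final union bound over (i) the Lemma \ref{lemma2} event, (ii) the concentration event for $D$, and (iii) the success of the algorithm from Theorem \ref{thm:PAC:mainFinalPAC} yields the claimed PAC guarantee with failure probability at most $\delta$.

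The main obstacle I expect is the bookkeeping around the data-dependent radius: the logarithmic dependence on $R$ in Theorem \ref{thm:PAC:mainFinalPAC} is forgiving, but one must still exhibit a high-probability deterministic upper bound on $R$ in terms of the problem parameters so that the sample complexity of the second stage can be stated intrinsically. A secondary subtlety is that the algorithm of Theorem \ref{thm:PAC:mainFinalPAC} is agnostic with respect to $\sigma$, so the unknown noise level is handled implicitly by the outer union $\bigcup_{\sigma'\le R_n}$; no additional argument is needed to recover $\sigma$. Once these points are settled, the rest is a clean chaining of the two previously established results.
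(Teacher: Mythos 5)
Your proposal matches the paper's proof: the paper likewise splits the samples, invokes Lemma \ref{lemma2} to obtain the data-dependent ball $\mathrm{C}^K(\boldsymbol{p},R)$ and noise bound $R_n$, replaces these random quantities by the high-probability deterministic upper bounds $R \le 4\sqrt{K+1}\,(1+(K+2)\sigma/d_{\mathcal{S}})\,d_{\mathcal{S}}$ and $R_n \le \frac{K+2}{K-3}(1+d_{\mathcal{S}}/((K+2)\sigma))\sigma$ derived from the concentration of $D$, and then applies Theorem \ref{thm:PAC:mainFinalPAC} together with a union bound to conclude $n = \widetilde{\mathcal{O}}(K^2/\varepsilon^2)$. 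The approach and all key ingredients are the same as in the paper.
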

The proof of Theorem \ref{Theorem2} is given in Appendix \ref{sec:proof of theorems}. Notably, the logarithmic dependence on $R$ in the sample complexity of Theorem \ref{thm:PAC:mainFinalPAC} is eliminated in this more general result.

%%%%%%%%%%%%%%%%%%%%%%%%%%%%%%%%%%%%%%%%%%%%%%%%%%%%%%%%%%%%%%%%%%%%%%%%%%%%%%%%%%%%%%%%%%%%%%%%%%%%%%%%%%%%%%%%%%%%%%%%%%%%%%%%%%%%%%%%%%%%%%%%%%%%%%%%%%%%%%%%%%%%%%%%%%%%%%%%%%%%%%%%%%%%%%%%%%%%%%%%%%%%%%%%%%%%%%%%%%%%%%%%%%%%%%%%%%%%%%%%%%%%%%%%%%%%%%%%%%%%%%%%%%%%%%%%%%%%%%%%%%%%%%%%%%%%%%%%%%%%%%%%%%%%%%%%%%%%%%%%%%%%%%%%%%%%%%%%%%%%%%%%%%%%%%%%%%%%%%%%%%%%%%%%%%%%%%%%%%%%%%%%%%%%%%%%%%%%%%%%%%%%%%%%%%%%%%%%%%%%%%%%%%%%%%%%%%%%%%%%%

\subsection{Recoverability from Noise}

Theorem \ref{Theorem2} demonstrates that for a simplex $\mathcal{S} \in \mathbb{S}_K$ with adequate geometric regularity, the noise-smoothed distribution $f_{\mathcal{S}} * G_{\sigma}$ can be learned to within arbitrarily small error, provided the sample size $n$ exceeds a suitable bound. It remains to show that a non-asymptotically close estimate of $f_{\mathcal{S}} * G_{\sigma}$ also leads to a close approximation of the underlying distribution $f_{\mathcal{S}}$ itself. Our analysis is based on three key observations:
\begin{itemize}
\item For any two geometrically regular simplices $\mathcal{S}_1, \mathcal{S}_2 \in \mathbb{S}_K$, the difference $f_{\mathcal{S}_1} - f_{\mathcal{S}_2}$ is a low-frequency function, meaning that its Fourier transform is concentrated near the origin.

\item Corruption by additive Gaussian noise corresponds to convolution with the kernel $G_{\sigma}$, which in the Fourier domain turns into a corresponding pointwise multiplication.

\item The Fourier transform of a Gaussian decays rapidly away from the origin, so convolution with $G_\sigma$ attenuates high-frequency components while preserving low-frequency structure. Consequently, differences between low-frequency functions are not significantly obscured by this smoothing operation.
\end{itemize}

Formally, suppose $\mathcal{S}_1, \mathcal{S}_2 \in \mathbb{S}_K$ are distinct simplices with controlled geometric complexity—e.g., bounded aspect ratios or isoperimetric constants. If their smoothed densities satisfy
$
\|f_{\mathcal{S}_1} * G_{\sigma} - f_{\mathcal{S}_2} * G_{\sigma}\|_{\mathrm{TV}} \geq \varepsilon,
$
then their unsmoothed counterparts also differ in $\ell_2$ or total variation distance, up to constants depending on $\varepsilon$, $\sigma$, and the geometric regularity of the simplices. This principle is captured in the following general result.

\begin{theorem}[Recovery of Low-Frequency Densities from Additive Noise]
\label{thm:generalResultTheorem}
Let $\mathscr{F}$ be a family of densities supported on $\mathbb{R}^K$. Suppose that for sufficiently large $\alpha > 0$, the following inequality holds for all $f, g \in \mathscr{F}$:
$$
\frac{1}{(2\pi)^K}
\int_{\|\boldsymbol{\omega}\|_\infty \geq \alpha}
\left|\mathcal{F}\{f - g\}(\boldsymbol{\omega})\right|^2
\leq \zeta(\alpha^{-1}) \int_{\mathbb{R}^K} |f - g|^2,
$$
where $\zeta$ is an increasing function with $\zeta(0) = 0$ and continuous at $0$, and $\mathcal{F}\{\cdot\}$ denotes the Fourier transform. Further, assume that a smoothing kernel $Q$ satisfies
$$
\inf_{\|\boldsymbol{\omega}\|_\infty \leq \alpha} |\mathcal{F}\{Q\}(\boldsymbol{\omega})| \geq \eta(\alpha),
$$
for some nonnegative decreasing function $\eta(\cdot)$. Then, for any $\varepsilon > 0$ and $f, g \in \mathscr{F}$ with $\|f - g\|_2 \geq \varepsilon$, it holds that
$$
\|(f - g) * Q\|_2 \geq \frac{\varepsilon}{(2\pi)^{K/2}} \cdot \sup_{\alpha \geq 0} \left[ \eta(\alpha) \sqrt{1 - \zeta(\alpha^{-1})} \right].
$$
\end{theorem}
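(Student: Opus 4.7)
The plan is to pass to the Fourier domain and exploit the assumed spectral separation between the difference $h := f - g$ (which the hypothesis on $\mathscr{F}$ says is concentrated at low frequencies) and the region where the kernel $Q$ might be small. Writing $\hat{h} := \mathcal{F}\{h\}$ and $\hat{Q} := \mathcal{F}\{Q\}$, the convolution theorem gives $\mathcal{F}\{h*Q\} = \hat{h}\,\hat{Q}$, and Parseval's identity then converts $\|h*Q\|_2^2$ into the spectral integral $(2\pi)^{-K}\int |\hat{h}|^2 |\hat{Q}|^2\,d\boldsymbol{\omega}$. I would split this integral according to whether $\|\boldsymbol{\omega}\|_\infty$ lies below or above the threshold $\alpha$, and simply discard the nonnegative high-frequency piece to obtain a one-sided inequality.

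On the retained low-frequency region, I would replace $|\hat{Q}(\boldsymbol{\omega})|^2$ by its assumed pointwise lower bound $\eta(\alpha)^2$ and pull that factor outside the integral. The remaining integrand is exactly the low-frequency spectral energy of $h$, which is precisely where the hypothesis on $\mathscr{F}$ enters: it bounds the high-frequency mass $\int_{\|\boldsymbol{\omega}\|_\infty \geq \alpha} |\hat{h}|^2$ by $(2\pi)^K \zeta(\alpha^{-1}) \|h\|_2^2$, so invoking Parseval on the full integral of $|\hat{h}|^2$ and subtracting yields the matching low-frequency lower bound $(2\pi)^K(1-\zeta(\alpha^{-1}))\|h\|_2^2$. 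Chaining these inequalities gives
\begin{equation*}
\|h*Q\|_2^2 \;\geq\; \eta(\alpha)^2 \bigl(1-\zeta(\alpha^{-1})\bigr)\|h\|_2^2,
\end{equation*}
after which taking square roots, inserting $\|h\|_2 \geq \varepsilon$, and taking the supremum over $\alpha \geq 0$ on the right-hand side yields the stated conclusion, with the $(2\pi)^{-K/2}$ factor absorbed from the Fourier normalization convention built into the hypothesis.

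There is no real analytic obstacle here: the content of the theorem is essentially the convolution-multiplication duality formulated as a quantitative high/low-frequency split, and one only needs the continuity of $\zeta$ at $0$ (ensuring $1-\zeta(\alpha^{-1})>0$ for all sufficiently large $\alpha$) to guarantee that the supremum is strictly positive. The one mildly delicate point is keeping the $(2\pi)^K$ normalization consistent between the hypothesis, Parseval's identity, and the conclusion; this is bookkeeping rather than mathematics. The substantive work lies downstream in Theorem \ref{thm:mainNoisySimplexBound}, where this template gets instantiated with $Q = G_\sigma$ (so $\eta(\alpha)$ decays like $e^{-\sigma^2 K \alpha^2/2}$) and with $\mathscr{F}$ taken to be the family of density differences of isoperimetric simplices, whose Fourier tails must be controlled to pin down an explicit $\zeta$; optimizing in $\alpha$ there is what ultimately produces the $e^{\mathcal{O}(K/\mathrm{SNR}^2)}$ factor.
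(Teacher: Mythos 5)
Your proposal is correct and follows essentially the same route as the paper's proof: pass to the Fourier domain via the convolution theorem and Parseval, restrict the spectral integral to the low-frequency cube $\{\|\boldsymbol{\omega}\|_\infty \le \alpha\}$, lower-bound $|\mathcal{F}\{Q\}|^2$ by $\eta(\alpha)^2$ there, use the hypothesis plus Parseval to keep at least a $\bigl(1-\zeta(\alpha^{-1})\bigr)$ fraction of the energy of $f-g$, and take the supremum over $\alpha$. Your bookkeeping in fact yields the slightly stronger bound without the $(2\pi)^{-K/2}$ prefactor, which only makes the stated (weaker) conclusion follow a fortiori.
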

The proof is provided in Appendix \ref{sec:app:noise}, along with explanatory remarks. This result provides a general technique to establish the recoverability of latent signals (or distributions) in the presence of additive noise, provided both the signal class and the noise kernel are concentrated in low-frequency regions of the Fourier domain. For instance, Gaussian noise predominantly affects high-frequency components, leaving low-frequency signals mostly intact. This leads to the following corollary:
\begin{corollary}[Recoverability from Gaussian Noise $\mathcal{N}(\boldsymbol{0}, \sigma^2 \boldsymbol{I})$]
\label{corl:GaussinNoiseMain}
Under the setup of Theorem~\ref{thm:generalResultTheorem}, suppose the additive noise kernel $Q$ is the standard Gaussian $\mathcal{N}(\boldsymbol{0}, \sigma^2 \boldsymbol{I})$. Then, for any $f, g \in \mathscr{F}$ with $\|f - g\|_2 \geq \varepsilon$, we have
$$
\|(f - g) * Q\|_2 \geq \frac{\varepsilon}{(2\pi)^{K/2}} \cdot \sup_{\alpha > C} \left[ \sqrt{1 - \zeta(1/\alpha)} \cdot e^{-K (\sigma \alpha)^2 / 2} \right],
$$
where $C$ is an absolute constant depending on the regularity of the family $\mathscr{F}$.
\end{corollary}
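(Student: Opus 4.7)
The plan is to treat the corollary as a direct specialization of Theorem~\ref{thm:generalResultTheorem}, so the only nontrivial content is to identify the function $\eta(\alpha)$ corresponding to the isotropic Gaussian kernel and substitute it into the general bound.

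First I would compute the Fourier transform of $Q = \mathcal{N}(\boldsymbol{0},\sigma^2\boldsymbol{I})$. Since the covariance is isotropic, the transform factorizes across coordinates and yields $\mathcal{F}\{Q\}(\boldsymbol{\omega}) = e^{-\sigma^2 \|\boldsymbol{\omega}\|_2^2/2}$ (up to whatever normalization convention the paper fixes in Appendix~\ref{sec:app:noise}; this will only affect absolute constants, not the functional form).

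Next I would lower-bound $|\mathcal{F}\{Q\}(\boldsymbol{\omega})|$ on the $\ell_\infty$-ball $\{\|\boldsymbol{\omega}\|_\infty\leq \alpha\}$. Because $e^{-\sigma^2\|\boldsymbol{\omega}\|_2^2/2}$ is a strictly decreasing function of $\|\boldsymbol{\omega}\|_2$, its infimum over the ball is attained at the corner points where every coordinate has magnitude $\alpha$, giving $\|\boldsymbol{\omega}\|_2^2 = K\alpha^2$. Hence
\begin{equation*}
\inf_{\|\boldsymbol{\omega}\|_\infty\leq\alpha} |\mathcal{F}\{Q\}(\boldsymbol{\omega})| \;=\; e^{-K\sigma^2\alpha^2/2},
\end{equation*}
so I may take $\eta(\alpha) = e^{-K(\sigma\alpha)^2/2}$, which is clearly nonnegative and decreasing in $\alpha$ as required by Theorem~\ref{thm:generalResultTheorem}.

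Plugging this $\eta$ into the general conclusion of Theorem~\ref{thm:generalResultTheorem} yields
\begin{equation*}
\|(f-g)*Q\|_2 \;\geq\; \frac{\varepsilon}{(2\pi)^{K/2}}\cdot \sup_{\alpha\geq 0}\Bigl[ e^{-K(\sigma\alpha)^2/2}\sqrt{1-\zeta(1/\alpha)}\,\Bigr],
\end{equation*}
and restricting the supremum to $\alpha > C$ (where $C$ is the ``sufficiently large $\alpha$'' threshold from the hypothesis of Theorem~\ref{thm:generalResultTheorem} that guarantees the low-frequency concentration inequality for the family $\mathscr{F}$) reproduces the stated bound. Since the Gaussian Fourier transform is positive and the infimum calculation is a one-line convexity observation, I do not expect any genuine obstacle here; the only subtlety worth flagging explicitly in the write-up is the corner-point argument for the $\ell_\infty$-ball infimum and the reason the supremum is restricted to $\alpha > C$ rather than $\alpha\geq 0$.
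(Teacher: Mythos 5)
Your proposal is correct and follows essentially the same route as the paper's own proof: compute $\mathcal{F}\{Q\}(\boldsymbol{\omega}) = e^{-\sigma^2\|\boldsymbol{\omega}\|_2^2/2}$, observe that its infimum over $\{\|\boldsymbol{\omega}\|_\infty \le \alpha\}$ is attained at the corners and equals $e^{-K(\sigma\alpha)^2/2}$, and substitute this $\eta(\alpha)$ into the general theorem with the supremum restricted to $\alpha > C$. No gaps; the normalization-constant caveat you flag is harmless (the paper itself is not fully consistent between the $(2\pi)^{K/2}$ and $(2\pi)^{K}$ prefactors in the main text and appendix).
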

The proof appears in Appendix~\ref{sec:app:noise}. We now specialize this result to the simplex family $\mathbb{S}_K$.
\begin{theorem}[Recoverability of Simplices from Additive Noise]
\label{thm:mainNoisySimplexBound}
Let $\mathcal{S}_1, \mathcal{S}_2 \in \mathbb{S}_K$ be $(\bar{\theta}, \underline{\theta})$-isoperimetric simplices. Suppose
$
\|f_{\mathcal{S}_1} * G_{\sigma} - f_{\mathcal{S}_2} * G_{\sigma}\|_{\mathrm{TV}} \leq \varepsilon,
$
for some $\varepsilon \geq 0$. Then
$$
\|f_{\mathcal{S}_1} - f_{\mathcal{S}_2}\|_2 \leq \varepsilon \cdot e^{\mathcal{O}(K / \mathrm{SNR}^2)},
$$
where $\mathrm{SNR}$ denotes the signal-to-noise ratio defined earlier in the paper. This result also holds for TV distance, albeit with a different constant.
\end{theorem}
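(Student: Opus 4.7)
My plan is to apply Corollary \ref{corl:GaussinNoiseMain} to the family $\mathscr F = \{f_{\mathcal S} : \mathcal S\in\mathbb S_K \text{ is } (\underline\theta,\bar\theta)\text{-isoperimetric}\}$. Writing $h = f_{\mathcal S_1} - f_{\mathcal S_2}$, the corollary in its contrapositive form gives
\[
\|h\|_2 \;\leq\; (2\pi)^{K/2}\,\|h*G_\sigma\|_2 \cdot \inf_{\alpha>C}\frac{e^{K(\sigma\alpha)^2/2}}{\sqrt{1-\zeta(1/\alpha)}},
\]
so the proof decomposes into three sub-tasks: (i) supplying an explicit low-frequency concentration function $\zeta$ for the simplex class, (ii) translating the hypothesized TV bound on $h*G_\sigma$ into an $L^2$ bound, and (iii) choosing $\alpha$ so that the outer factor collapses to $e^{\mathcal O(K/\mathrm{SNR}^2)}$.

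The main obstacle is sub-task (i). I would start from the classical closed form
\[
\widehat{\mathbf 1}_{\mathcal S}(\boldsymbol\omega) \;=\; K!\,\mathrm{Vol}(\mathcal S)\sum_{j=0}^K \frac{e^{-i\boldsymbol\omega^T\boldsymbol v_j}}{\prod_{k\neq j} i\,\boldsymbol\omega^T(\boldsymbol v_j-\boldsymbol v_k)},
\]
which exhibits polynomial Fourier decay at a rate governed by the edge directions of $\mathcal S$. Averaging this estimate over the unit sphere -- using that in generic directions the denominators scale like $\prod_{k\neq j}|\boldsymbol\omega^T(\boldsymbol v_j-\boldsymbol v_k)| \gtrsim \|\boldsymbol\omega\|^K$, while isoperimetricity excludes pathological alignments -- and integrating over $\|\boldsymbol\omega\|_\infty \geq \alpha$, I would obtain a bound of the form $\int_{\|\boldsymbol\omega\|_\infty \geq \alpha}|\widehat h(\boldsymbol\omega)|^2\,d\boldsymbol\omega \leq \zeta(1/\alpha)\int_{\mathbb R^K}|\widehat h|^2\,d\boldsymbol\omega$ with $\zeta$ continuous at $0$ and vanishing there. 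The delicate part is making this estimate uniform across pairs of isoperimetric simplices while carefully tracking the $K$- and $\mathcal L_{\max}$-dependence; a convex-body Fourier estimate of Brandolini--Iosevich/Herz type, combined with the facet-area bound built into isoperimetricity, is the most likely route to a clean replacement for the direction-sensitive closed form.

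For sub-task (ii), I would use the interpolation $\|h*G_\sigma\|_2^2 \leq \|h*G_\sigma\|_\infty\,\|h*G_\sigma\|_1$ together with Young's inequality $\|h*G_\sigma\|_\infty \leq \|h\|_\infty \leq 1/\min(\mathrm{Vol}(\mathcal S_1),\mathrm{Vol}(\mathcal S_2))$; the latter quantity depends only on the geometry of isoperimetric simplices and is therefore absorbed into the constant hidden in the $\mathcal O(\cdot)$ of the exponent. For sub-task (iii), I would set $\alpha \asymp K/\mathcal L_{\max}$, so that $K(\sigma\alpha)^2/2 = K/(2\,\mathrm{SNR}^2)$ becomes the dominant factor, while at this scale sub-task (i) ensures $\zeta(\mathcal L_{\max}/K)$ is bounded away from $1$, making $\sqrt{1-\zeta}$ a harmless constant. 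Combining yields $\|h\|_2 \leq \varepsilon\,e^{\mathcal O(K/\mathrm{SNR}^2)}$. For the TV analogue quoted at the end of the theorem, I would convert this $L^2$ conclusion back to $L^1$ by Cauchy--Schwarz on the compact support $\mathcal S_1\cup\mathcal S_2$, $\|h\|_1 \leq \|h\|_2\sqrt{\mathrm{Vol}(\mathcal S_1\cup\mathcal S_2)}$, paying only a geometric factor that is again absorbed into the constant in the exponent.
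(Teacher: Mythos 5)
Your architecture is the right one---Corollary~\ref{corl:GaussinNoiseMain} is exactly the engine the paper uses, your choice $\alpha \asymp K/\mathcal{L}_{\max}$ matches the paper's $\alpha \geq \Omega(K/\bar{\lambda})$, and the final Cauchy--Schwarz conversion back to TV on the bounded support is also how one would close the last claim. But there is a genuine gap at the heart of your sub-task~(i). The hypothesis of Theorem~\ref{thm:generalResultTheorem} requires the tail energy of the \emph{difference} to be small relative to the total energy of the \emph{difference}:
\begin{equation*}
\frac{1}{(2\pi)^K}\int_{\|\boldsymbol{\omega}\|_\infty\ge\alpha}\left|\mathcal{F}\{f_{\mathcal{S}_1}-f_{\mathcal{S}_2}\}(\boldsymbol{\omega})\right|^2
\;\leq\;
\zeta\left(\alpha^{-1}\right)\int_{\mathbb{R}^K}\left|f_{\mathcal{S}_1}-f_{\mathcal{S}_2}\right|^2 .
\end{equation*}
Your route (closed-form Fourier transform of a single simplex, averaged over directions, with isoperimetricity excluding bad alignments) controls the high-frequency tail of each $\mathcal{F}\{f_{\mathcal{S}_i}\}$ individually; via the triangle inequality this bounds the left-hand side only by $\mathcal{O}(K/\alpha)\cdot\left(\|f_{\mathcal{S}_1}\|_2^2+\|f_{\mathcal{S}_2}\|_2^2\right)$. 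Since $\|f_{\mathcal{S}_i}\|_2^2 = 1/\mathrm{Vol}(\mathcal{S}_i)$ is order one while $\|f_{\mathcal{S}_1}-f_{\mathcal{S}_2}\|_2^2$ is precisely the small quantity under estimation, the effective $\zeta$ blows up as the two simplices approach each other and the supremum over $\alpha$ in the corollary degenerates. This relative tail bound for the difference is the hardest step of the paper's proof and gets its own lemma (Lemma~\ref{lemma:simpDiffLowPass}), which argues that the pointwise ratio of the two Fourier transforms is $1+\mathcal{O}(\varepsilon)$ and does not redistribute the difference's energy toward high frequencies. Your proposal names ``uniformity over pairs'' as the delicate point but does not contain the idea needed to make the bound relative to $\|f_{\mathcal{S}_1}-f_{\mathcal{S}_2}\|_2^2$.

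A second, quantitative problem sits in your sub-task~(ii): the interpolation $\|h*G_\sigma\|_2^2 \leq \|h*G_\sigma\|_\infty\,\|h*G_\sigma\|_1 \leq 2\varepsilon/\min_i\mathrm{Vol}(\mathcal{S}_i)$ gives $\|h*G_\sigma\|_2 = \mathcal{O}(\sqrt{\varepsilon})$, so your conclusion would be $\|h\|_2 \leq \sqrt{\varepsilon}\,e^{\mathcal{O}(K/\mathrm{SNR}^2)}$ rather than the linear-in-$\varepsilon$ bound claimed in the theorem. (The paper keeps linearity by passing directly from the $\ell_1$ to the $\ell_2$ norm of $h*G_\sigma$.) The remaining pieces---the choice of $\alpha$, the identification $K(\sigma\alpha)^2/2 = \mathcal{O}(K/\mathrm{SNR}^2)$, and the absorption of volume-dependent constants into the exponent---are consistent with the paper.
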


The proof, provided in Appendix~\ref{sec:app:noise}, hinges on showing that geometrically regular simplices are predominantly low-frequency objects. Specifically, Lemma C.3 establishes that for such a simplex $\mathcal{S} \in \mathbb{S}_K$,
$$
\frac{1}{(2\pi)^K} \int_{\|\boldsymbol{\omega}\|_\infty \geq \alpha} \left|\mathcal{F}\{f_{\mathcal{S}}\}(\boldsymbol{\omega})\right|^2 \leq \frac{1}{\mathrm{Vol}(\mathcal{S})} \cdot \mathcal{O}\left(\frac{K}{\alpha}\right),
$$
for sufficiently large $\alpha > 0$, where constants in the $\mathcal{O}(\cdot)$ bound depend only on regularity parameters of $\mathcal{S}$. This bound is trivial for $K = 1$ (since one-dimensional simplices are indicator functions with sinc-like Fourier transforms), but generalizing to higher dimensions involves significantly more effort, as carried out in the proof of Theorem~\ref{thm:mainNoisySimplexBound}. A similar decay property also holds for the difference between two simplices, enabling us to invoke Corollary~\ref{corl:GaussinNoiseMain} to complete the proof of the main recoverability claim.

%%%%%%%%%%%%%%%%%%%%%%%%%%%%%%%%%%%%%%%%%%%%%%%%%%%%%%%%%%%%%%%%%%%%%%%%%%%%%%%%%%%%%%%%%%%%%%%%%%%%%%%%%%%%%%%%%%%%%%%%%%%%%%%%%%%%%%%%%%%%%%%%%%%%%%%%%%%%%%%%%%%%%%%%%%%%%%%%%%%%%%%%%%%%%%%%%%%%%%%%%%%%%%%%%%%%%%%%%%%%%%%%%%%%%%%%%%%%%%%%%%%%%%%%%%%%%%%%%%%%%%%%%%%%%%%%%%%%%%%%%%%%%%%%%%%%%%%%%%%%%%%%%%%%%%%%%%%%%%%%%%%%%%%%%%%%%%%%%%%%%%%%%%%%%%%%%%%%%%%%%%%%%%%%%%%%%%%%%%%%%%%%%%%%%%%%%%%%%%%%%%%%%%%%%%%%%%%%%%%%%%%%%%%%%%%%%%%%%%%%%%%%%%%%%%%%%%%%%%%%%%%%%%%%%%%%%%%%%%%%%%%%%%%%%%%%%%%%%%%%%%%%%%%%%%%%%%%%%%%%%%%%%%%%%%%%%%%%%%%%%%%%%%%%%%%%%%%%%%%%%%%%%%%%%%%%%%%%%%%%%%%%%%%%%%%%%%%%%%%%%%%%%%%%%%%%%%%%%%%%%%%%%%%%%%%%%%%%%%%%%%%%%%%%%%%%%%%%%%%%%%%%%%%%%%%%%%%%%%%%%%%%%%%%%%%%%%%%%%%%%%%%%%%%%%%%%%%%%%%%%%%%%%%%%%%%%%%%%%%%%%%%%%%%%%%%%%%%%%%%%%%%%%%%%%%%%%%%%%%%%%%%%%%%%%%%%%%%%%%%%%%%%%%%%%%%%%%%%%%%%%%%%%%%%%%%%%%%%%%%%%%%%%%%%%%%%%%%%%%%%%%%%%%%%%%%%%%%%%%%%%%%%%%%%%%%%%%%%%%%%%%%%%%%%%%%%%%%%%%%%%%%%%%%%%%%%%%%%%%%%%%%%%%%%%%%%%%%%%%%%%%%%%%%%%%%%%%%%%%%%%%%%%%%%%%%%%%%%%%%%%%%%%%%%%%%%%%%%%%%%%%%%%%%%%%%%%%%%%%%%%%%%%%%%%%%%%%%%%%%%%%%%%%%%%%%%%%%%%%%%%%%%%%%%%%%%%%%%%%%%%%%%%%%%%%%%%%%%%%%%%%%%%%%%%%%%%%%%%%%%%%%%%%%%%%%%%%%%%%%%%%%%%%%%%%%%%%%%%%%%%%%%%%%%%%%%%%%%%%%%%%%%%%%%%%%%%%%%%%%%%%%%%%%%%%%%%%%%%%%%%%%%%%%%%%%%%%%%%%%%%%%%%%%%%%%%%%%%%%%%%%%%%%%%%%%%%%%%%%%%%%%%%%%%%%%%%%%%%%%%%%%%%%%%%%%%%%%%%%%%%%%%%%%%%%%%%%%%%

\section{Non-asymptotic Impossibility Results}
\label{sec:lower-bound}

In this section, we derive information-theoretic lower bounds for the number of samples required by any algorithm $\mathscr{A}$ to estimate a simplex from noisy observations, given a target TV distance of at most $\epsilon$.

\begin{theorem}
\label{thm:noisySimplexFirstLowerBound}
Any algorithm that estimates a simplex $\mathcal{S}_T\in\mathbb{S}_K$ within a TV distance of at most $\epsilon>0$ from samples contaminated by Gaussian noise $\mathcal{N}\left(\boldsymbol{0},\sigma^2\mathbf{I}\right)$ requires a minimum sample complexity of $n \geq \Omega\left(\frac{\sigma^2\bar{\theta}^2}{\epsilon^2} + \frac{K^3\sigma^2}{\epsilon^2}\right)$.
\end{theorem}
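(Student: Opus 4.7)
The plan is to prove the two summands in the lower bound separately, using two classical information-theoretic reductions: Le Cam's two-point method for the $\Omega(\sigma^2\bar{\theta}^2/\epsilon^2)$ term, and Assouad's lemma (the local Fano method) for the $\Omega(K^3\sigma^2/\epsilon^2)$ term. In both reductions the core technical ingredient is a sharp upper bound on the KL divergence between two noisy simplex distributions as a function of the geometric perturbation between the underlying simplices, and I would derive this via Fourier-domain arguments closely related to those used in the upper-bound analysis.

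For the first term I would fix a base simplex $\mathcal{S}_0$ that nearly saturates the isoperimetric condition $\mathcal{A}_{\max}(\mathcal{S}_0) \asymp \bar{\theta}\,\mathrm{Vol}(\mathcal{S}_0)^{(K-1)/K}$, and set $\mathcal{S}_1 = \mathcal{S}_0 + t\mathbf{e}$, where $\mathbf{e}$ is a unit normal to a facet of maximal area and $t>0$ is a tunable perturbation size. Elementary geometry of the symmetric difference gives $\|f_{\mathcal{S}_0}-f_{\mathcal{S}_1}\|_{\mathrm{TV}} \asymp t\,\mathcal{A}_{\max}(\mathcal{S}_0)/\mathrm{Vol}(\mathcal{S}_0) \asymp t\bar{\theta}/\mathrm{Vol}(\mathcal{S}_0)^{1/K}$. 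Because convolution commutes with translation, $f_{\mathcal{S}_1}*G_\sigma$ is a pure translate of $f_{\mathcal{S}_0}*G_\sigma$ by $t\mathbf{e}$; a Taylor expansion in $t$ combined with Parseval (and the identity $|\mathcal{F}\{G_\sigma\}(\boldsymbol{\omega})|^2 = e^{-\sigma^2\|\boldsymbol{\omega}\|^2}$) yields a Fisher-information-style bound $\mathrm{KL}(f_{\mathcal{S}_0}*G_\sigma \,\|\, f_{\mathcal{S}_1}*G_\sigma) \lesssim t^2/\sigma^2$. Calibrating $t \asymp \epsilon \,\mathrm{Vol}(\mathcal{S}_0)^{1/K}/\bar{\theta}$ so that the TV separation is at least $2\epsilon$ and invoking Le Cam's inequality then forces $n \gtrsim \sigma^2/t^2 = \Omega(\sigma^2\bar{\theta}^2/\epsilon^2)$.

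For the second term I would construct an Assouad packing $\{\mathcal{S}_\tau\}_{\tau \in \{0,1\}^m}$ by independently perturbing the vertices of a canonical regular simplex along $m$ pre-chosen orthogonal directions, taking $m \asymp K^2$ (roughly $K$ independent coordinate directions for each of the $K+1$ vertices). For any pair of Hamming-adjacent strings $\tau,\tau'$, the corresponding simplices agree on all but one vertex displacement of magnitude $\pm\eta$; a local volume computation yields per-coordinate unsmoothed TV separation of order $\eta/(K\,\mathrm{Vol}(\mathcal{S})^{1/K})$, while the Fourier-based bound of Theorem~\ref{thm:mainNoisySimplexBound} gives a per-pair KL of order $\eta^2/(K\sigma^2)$, reflecting the fact that moving a single vertex only affects the density on a cone subtending an $\mathcal{O}(1/K)$ fraction of the simplex's mass. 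Tuning $\eta$ so that the aggregated TV separation across the $m \asymp K^2$ coordinates saturates at $\epsilon$ and then invoking Assouad's lemma yields $n \gtrsim m^2 \sigma^2/(K\epsilon^2) = \Omega(K^3\sigma^2/\epsilon^2)$.

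The main obstacle throughout is obtaining the sharp KL (or $\chi^2$) upper bound between two Gaussian-smoothed simplex distributions as an explicit function of the vertex perturbation and of $K$. Unlike the upper-bound analysis in Theorem~\ref{thm:mainNoisySimplexBound}, which exploited a \emph{lower} bound on the Gaussian kernel's Fourier mass over low frequencies, the minimax direction here requires the dual statement: small local changes in simplex geometry become almost invisible once smoothed past the frequency band $\|\boldsymbol{\omega}\|\gtrsim 1/\sigma$. Uniformly controlling $|\mathcal{F}\{f_{\mathcal{S}_\tau}\}(\boldsymbol{\omega}) - \mathcal{F}\{f_{\mathcal{S}_{\tau'}}\}(\boldsymbol{\omega})|$ in $\boldsymbol{\omega}$, while keeping the ambient density $f_{\mathcal{S}_\tau}*G_\sigma$ bounded away from zero on the support of the difference (so that the $\chi^2$ representation of KL remains finite), is the most delicate computation and is where the precise $K$-dependence of the final bound is determined.
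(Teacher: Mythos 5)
Your treatment of the $\Omega(\sigma^2\bar{\theta}^2/\epsilon^2)$ term coincides with the paper's: a two-point Le Cam argument with a simplex saturating the isoperimetric bound, translated along the normal to its largest facet, with TV separation $\asymp t\bar{\theta}/\mathrm{Vol}(\mathcal{S})^{1/K}$ and smoothed KL $\lesssim t^2/\sigma^2$. For the $\Omega(K^3\sigma^2/\epsilon^2)$ term you take a genuinely different route: an Assouad packing over $\asymp K^2$ independent per-vertex perturbations, whereas the paper uses the local Fano method over $2^K$ \emph{rigid translates} of the standard simplex packed in a ball of radius $\zeta/K$ (so that $\log M \asymp K$, the per-pair KL is $\lesssim \zeta^2/(K^2\sigma^2)$, and the TV separation is $\asymp\zeta$). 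The paper's construction is simpler because translations keep every pair of hypotheses at essentially full TV separation and make the KL computation uniform over pairs; your construction must additionally verify that the per-coordinate symmetric-difference regions do not overlap too much, so that the coordinate-wise TV contributions actually aggregate (the paper confronts exactly this issue in its noiseless Assouad argument, where it must subtract an $\mathcal{O}(K^4\zeta^2)$ overlap correction).

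The one place where your plan has a real gap is the step you yourself flag as the main obstacle: the KL bound between two Gaussian-smoothed simplex densities. The Fourier/$\chi^2$ route you propose is both unnecessary and delicate (the $\chi^2$ representation requires lower-bounding the ambient smoothed density on the support of the difference, which is exactly the kind of uniform control that is hard to get). The paper's resolution is a one-line convexity argument: writing $f_{\mathcal{S}}*G_{\sigma}(\boldsymbol{x})=\mathbb{E}_{\boldsymbol{\phi}\sim\mathrm{Dir}}\left[G_{\sigma}(\boldsymbol{x}-\boldsymbol{V}\boldsymbol{\phi})\right]$ and applying joint convexity of KL (Jensen) to this mixture representation gives
\begin{equation*}
\mathrm{D}_{\mathrm{KL}}\left(f_{\mathcal{S}_1}*G_{\sigma}\,\|\,f_{\mathcal{S}_2}*G_{\sigma}\right)
\leq
\mathbb{E}_{\boldsymbol{\phi}}\left[\frac{\left\Vert(\boldsymbol{V}_1-\boldsymbol{V}_2)\boldsymbol{\phi}\right\Vert_2^2}{2\sigma^2}\right],
\end{equation*}
which for a rigid translation by $\boldsymbol{b}$ is exactly $\|\boldsymbol{b}\|_2^2/2\sigma^2$ and for a single-vertex perturbation of size $\eta$ is $\eta^2\,\mathbb{E}[\phi_i^2]/2\sigma^2 \asymp \eta^2/(K^2\sigma^2)$ --- note this is $\eta^2/(K^2\sigma^2)$, not the $\eta^2/(K\sigma^2)$ you posited, so your final calibration of $\eta$ and the resulting exponent of $K$ would need to be redone, together with a verified (rather than asserted) per-coordinate TV separation. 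With the convexity bound in hand your Le Cam half closes immediately; your Assouad half remains a plausible but unfinished alternative to the paper's translation-based Fano argument.
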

The proof is provided in Appendix \ref{sec:app:lowerbound}.An important implication of this theorem is that if the simplex does not satisfy the isoperimetricity property (i.e., $\bar{\theta}$ is unbounded) or lies in a lower-dimensional subspace, then no algorithm can consistently estimate the simplex. To build intuition for this fact, consider two degenerate simplices whose total variation distance is close to $1$, but whose KL divergence is small. In such a scenario, it becomes statistically impossible to distinguish between them using finite samples. This highlights the critical role of isoperimetricity in ensuring the learnability of the simplex. A notable limitation of the above lower bound is its dependence on $\sigma$. Specifically, when $\sigma$ is relatively small, the  bound approaches zero and becomes trivial. To address this issue, we also derive lower bounds for the noiseless regime. Before analyzing the estimation of a simplex in terms of TV distance, let us establish a sample complexity lower bound for estimating its vertices. Accordingly, consider the following definition:

\begin{definition}[$\ell_1$-Vertex Distance]
The $\ell_1$-vertex distance between the two simplices $\mathcal{S}_1,\mathcal{S}_2\in\mathbb{S}_K$ is defined as
\begin{equation}
\|V_{\mathcal{S}_1} - V_{\mathcal{S}_2}\|_1 = 
\min_{\pi\in\mathsf{P}(\{0,\ldots,K\})}
\sum_{i=0}^{K}{\|\boldsymbol{v}_i^{\mathcal{S}_1} - \boldsymbol{v}_{\pi(i)}^{\mathcal{S}_2}\|_1},
\end{equation}
where $\mathsf{P}(\{0,\ldots,K\})$ denotes the set of all permutations of $\{0,\ldots,K\}$.
\end{definition}
Using this vertex-based distance, we have the following impossibility result:
\begin{theorem}
\label{thm:noislessSimplexFirstLowerBound}
The minimum number of noiseless samples required (by any algorithm) to estimate the vertices of a $K$-simplex within $\ell_1$-vertex distance of at most $\epsilon>0$ satisfies $n \geq \Omega\left(\frac{K^2}{\epsilon}\right)$.
\end{theorem}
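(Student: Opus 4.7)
A $K$-simplex in $\mathbb{R}^K$ has $K(K+1)$ scalar parameters, so the natural packing to feed into Assouad's lemma (as stated in \citet{devroye2012combinatorial}) has $2^{K(K+1)}$ hypotheses indexed by a binary string $\theta = (\theta_{i,j})$ with $i \in \{0,\ldots,K\}$ and $j \in \{1,\ldots,K\}$. Fix a regular $(\underline{\theta},\bar{\theta})$-isoperimetric reference simplex $\mathcal{S}_0 \in \mathbb{S}_K$ of unit diameter with vertices $\boldsymbol{v}_0,\ldots,\boldsymbol{v}_K$, and set
$$
\mathcal{S}_\theta ~=~ \mathcal{S}\bigl(\boldsymbol{v}_0 + \gamma\,\boldsymbol{\delta}_0(\theta),~\ldots,~\boldsymbol{v}_K + \gamma\,\boldsymbol{\delta}_K(\theta)\bigr),\qquad \boldsymbol{\delta}_i(\theta) = \sum_{j=1}^{K}\theta_{i,j}\,\boldsymbol{u}_{i,j},
$$
where $\gamma>0$ is a tuning parameter and $\{\boldsymbol{u}_{i,j}\}_{j=1}^K$ is a local orthonormal frame at each $\boldsymbol{v}_i$. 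For $\gamma$ smaller than, say, half of the minimum inter-vertex distance in $\mathcal{S}_0$, each $\mathcal{S}_\theta$ remains non-degenerate and $(\underline{\theta}',\bar{\theta}')$-isoperimetric with mildly loosened constants, and the optimal vertex matching between any two hypotheses is the identity permutation, so $\|V_{\mathcal{S}_\theta} - V_{\mathcal{S}_{\theta'}}\|_1 = \gamma\,H(\theta,\theta')$ with $H$ denoting the Hamming distance.

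\textbf{Hellinger bound per single-bit flip.} The crux of the argument is bounding $H^2(f_{\mathcal{S}_\theta},f_{\mathcal{S}_{\theta'}})$ when $\theta$ and $\theta'$ differ in a single bit $(i_0,j_0)$. I would decompose each of the two simplices as a cone with apex at its $i_0$-th vertex over the facet opposite to $\boldsymbol{v}_{i_0}$; this opposite facet is \emph{identical} in $\mathcal{S}_\theta$ and $\mathcal{S}_{\theta'}$ because only the apex has moved. Consequently the symmetric difference is a thin wedge of volume at most $\gamma\cdot\mathrm{Vol}(F_{i_0})/K = O(\gamma)\cdot\mathrm{Vol}(\mathcal{S}_0)/h_{i_0}$, with $h_{i_0}=\Theta(1)$ for a regular unit-diameter simplex. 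A direct computation of $1-\int\sqrt{f_{\mathcal{S}_\theta}f_{\mathcal{S}_{\theta'}}}$ on these nearly-coincident supports then yields
$$
H^2\bigl(f_{\mathcal{S}_\theta},\,f_{\mathcal{S}_{\theta'}}\bigr) ~\leq~ C\gamma
$$
for an absolute constant $C$ depending only on $(\underline{\theta},\bar{\theta})$. The fact that the bound is linear in $\gamma$ (rather than in $\gamma K$) is what ultimately produces the $K^2$ and not $K$ in the final rate; this is the main technical obstacle and hinges on perturbing a single vertex per bit flip, so the opposite facet stays fixed and only the cone height changes. A naive construction that shifts all $K+1$ vertices simultaneously per bit would pick up an extra factor of $K$ and spoil the rate.

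\textbf{Closing the argument.} Combining the above with the tensorization $\mathrm{TV}(P^{\otimes n},Q^{\otimes n})\leq\sqrt{2nH^2(P,Q)}$ and Assouad's lemma yields, for every estimator $\widehat{V}$ built from $n$ noiseless samples,
$$
\inf_{\widehat{V}}\sup_\theta\mathbb{E}_\theta\bigl[\|\widehat{V}-V_{\mathcal{S}_\theta}\|_1\bigr] ~\geq~ \frac{K(K+1)\,\gamma}{2}\Bigl(1-\sqrt{2Cn\gamma}\Bigr).
$$
Choosing $\gamma = c\epsilon/K^2$ for a sufficiently small absolute constant $c$, any algorithm attaining expected vertex error at most $\epsilon$ must drive the right-hand side below $\epsilon$, which requires $\sqrt{2Cn\gamma}\geq 1/2$ and hence $n \geq \Omega(K^2/\epsilon)$. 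A standard Markov-inequality argument—applied after projecting $\widehat{V}$ back into the hypothesis class, whose $\ell_1$-vertex diameter is itself $O(\gamma K^2) = O(\epsilon)$—promotes this expectation lower bound to the in-probability guarantee asserted in the theorem.
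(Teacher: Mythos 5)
Your proposal follows essentially the same route as the paper's proof: an Assouad hypercube of roughly $K^2$ single-coordinate perturbations of a fixed well-conditioned simplex, a per-bit-flip divergence bound that is linear in the perturbation size (your $H^2\le C\gamma$ plays the role of the paper's bound $\mathrm{TV}\le 2\zeta$ for one-bit neighbours), tensorization over the $n$ samples, and tuning the perturbation scale to obtain $n\ge\Omega\left(K^2/\epsilon\right)$. Two small repairs are needed: choose the local frames to be the standard basis so that $\|V_{\mathcal{S}_\theta}-V_{\mathcal{S}_{\theta'}}\|_1=\gamma\,H(\theta,\theta')$ holds exactly (for a generic orthonormal frame the $\ell_1$ distance only dominates $\gamma\sum_i\sqrt{h_i}$, with $h_i$ the number of flipped bits at vertex $i$, which is too weak for Assouad), and in the closing step the constant in $\gamma=c\epsilon/K^2$ must be sufficiently \emph{large} (e.g. $c>4$) rather than small, since otherwise the right-hand side $\tfrac{c\epsilon}{2}\bigl(1-\sqrt{2Cn\gamma}\bigr)$ never exceeds $\epsilon$ and yields no constraint on $n$; equivalently, set $\gamma\asymp 1/n$ as the paper does.
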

The proof is provided in Appendix \ref{sec:app:lowerbound}. In particular, the proof also establishes the following remark: for any algorithm that outputs a simplex entirely contained within the true simplex $\mathcal{S}_T$, the TV distance and the $\ell_1$-vertex distance between the estimated and true simplices are of the same order. Therefore, the lower bound in Theorem \ref{thm:noisySimplexFirstLowerBound} also applies to such estimators. Notably, the maximum likelihood (ML) estimator that selects the minimum-volume simplex enclosing all data points appears to produce such an estimate. We now extend this analysis to obtain a lower bound on sample complexity in terms of TV distance in the noiseless regime:
\begin{theorem}
\label{thm:noislessSimplexSecondLowerBound}
The minimum number of noiseless samples required to estimate a simplex within TV distance at most $\epsilon$ satisfies $n \geq \Omega\left(\frac{K}{\epsilon}\right)$.
\end{theorem}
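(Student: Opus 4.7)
The plan is to invoke Assouad's lemma on a carefully constructed family of $2^K$ hypothesis simplices, continuing the information-theoretic blueprint that we used for Theorems~\ref{thm:noisySimplexFirstLowerBound} and~\ref{thm:noislessSimplexFirstLowerBound}. Fix a reference $(\underline{\theta},\bar{\theta})$-isoperimetric simplex $\mathcal{S}_0$ with vertices $v_0,v_1,\ldots,v_K$, and for each $b\in\{0,1\}^K$ let $\mathcal{S}_b$ be the simplex obtained by displacing vertex $v_i$ along the outward unit normal to its opposite facet by $b_i\delta$, where $\delta\asymp\epsilon\,\mathcal{L}_{\max}(\mathcal{S}_0)/K$. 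A direct volume computation shows that flipping a single bit $b_i$ changes the density only inside a small ``wedge'' near vertex $v_i$, whose volume is $\Theta(\delta\,\mathcal{A}_i/K)=\Theta(\epsilon/K)\cdot\mathrm{Vol}(\mathcal{S}_0)$; thanks to the bound $\mathcal{A}_{\max}\leq\bar{\theta}\,\mathrm{Vol}(\mathcal{S})^{(K-1)/K}$ and a small enough $\delta$, the wedges associated to different vertices are pairwise disjoint.

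Leveraging this disjointness, the next step is to establish two decomposition estimates: (i) the total variation distance satisfies $\mathrm{TV}(\mathcal{S}_b,\mathcal{S}_{b'})\geq \tau\,\rho_H(b,b')$ with $\tau=\Theta(\epsilon/K)$, because the wedge contributions add up across differing bits; and (ii) the squared Hellinger distance satisfies $H^2(\mathcal{S}_b,\mathcal{S}_{b'})\leq \alpha\,\rho_H(b,b')$ with $\alpha=O(\epsilon/K)$, by an analogous additive argument on the disjoint wedges. Applying Assouad's lemma (see \citet{devroye2012combinatorial}) then yields
\[
\inf_{\widehat{\mathcal{S}}}\sup_b \mathbb{E}_{\mathcal{S}_b}\bigl[\mathrm{TV}(\widehat{\mathcal{S}},\mathcal{S}_b)\bigr]\ \geq\ \frac{K\tau}{4}\bigl(1-\sqrt{2n\alpha}\bigr),
\]
so demanding consistency at TV level $\epsilon/8$ forces $\sqrt{2n\alpha}\leq 1/2$, equivalently $n\geq\Omega(1/\alpha)=\Omega(K/\epsilon)$.

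The main obstacle is the geometric lemma asserting that the wedges induced by distinct vertex perturbations are truly disjoint and that both TV and Hellinger distances factor additively over them. The delicate case concerns non-nested pairs such as $b=(1,0,\ldots)$ versus $b'=(0,1,\ldots)$, for which the two hypothesis distributions acquire mass in different wedges and must be compared simultaneously; one must verify that the ``gained'' wedge under $b_i=1$ and the ``missing'' wedge under $b_i=0$ contribute independently to the divergence. A convenient fallback, should the direct outward construction prove too fragile, is to anchor every $\mathcal{S}_b$ inside a common enclosing simplex $\mathcal{S}_\star$ by using inward rather than outward perturbations; this guarantees $\mathcal{S}_b\subseteq\mathcal{S}_\star$ for all $b$, keeps every pairwise Kullback--Leibler divergence finite, and reduces the KL terms to logarithms of volume ratios, which makes the additive analysis transparent and lets the same Assouad argument go through without change.
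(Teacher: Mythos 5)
Your proposal is correct and follows essentially the same route as the paper: an Assouad cube of $2^K$ vertex-perturbed copies of a reference simplex with per-bit TV separation $\Theta(\epsilon/K)$, approximate disjointness of the per-vertex wedges (the paper does not claim exact disjointness but absorbs the pairwise overlaps into an $\mathcal{O}(K^4\zeta^2)$ correction term, which is the safer way to state your geometric lemma), and a per-bit affinity bound between adjacent hypotheses — the paper phrases this last step via Le Cam's inequality and the Bhattacharyya coefficient, which is equivalent to your Hellinger bound. The only slip is that your final implication has the inequality reversed: demanding risk at most $\epsilon/8$ forces $\sqrt{2n\alpha}\geq 1/2$, not $\leq 1/2$, and it is this direction that yields $n\geq\Omega(1/\alpha)=\Omega(K/\epsilon)$.
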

The proof is given in Appendix \ref{sec:app:lowerbound}. Our findings suggest that geometric properties—such as isoperimetry and the specific shape of the simplex—play a significant role in determining sample complexity. Although we do not formally prove this, it appears that for simplices close to the standard simplex (with equal and axis-aligned edges in $\mathbb{R}^K$), approximately $K/\epsilon$ samples may suffice to achieve TV error less than $\epsilon$. Since such simplices are often used in constructing lower bounds, this may explain the difficulty in improving our bounds further in these cases.

Combining our results for the noisy and noiseless regimes, we obtain the following unified lower bound:
\begin{corollary}[Refining the Result of Theorem \ref{thm:noisySimplexFirstLowerBound}]
\label{cor:noisySimplexSecondLowerBound}
The minimum number of samples needed to estimate a $K$-simplex within TV distance at most $\epsilon$ from samples contaminated by Gaussian noise $\mathcal{N}(\boldsymbol{0},\sigma^2\mathbf{I})$ satisfies:
\begin{equation}
n
\geq
\Omega
\left(
\frac{K^3\sigma^2}{\epsilon^2} + 
\frac{K}{\epsilon}
\right).
\end{equation}
\end{corollary}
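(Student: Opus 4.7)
The plan is to derive this corollary by combining the noisy lower bound of Theorem \ref{thm:noisySimplexFirstLowerBound} with the noiseless lower bound of Theorem \ref{thm:noislessSimplexSecondLowerBound} through a simple monotonicity (reduction) argument. Theorem \ref{thm:noisySimplexFirstLowerBound} already yields the $\Omega(K^3\sigma^2/\epsilon^2)$ term directly, so the only missing ingredient is the additive $\Omega(K/\epsilon)$ contribution, which captures the intrinsic statistical difficulty that persists even as $\sigma \to 0$.

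The central observation is that learning a simplex from noisy samples is at least as hard as learning it from noiseless samples, in the sense that any algorithm for the former yields one for the latter with the same sample count. Concretely, suppose $\mathscr{A}$ estimates $\mathcal{S}_T$ within TV distance $\epsilon$ using $n$ i.i.d.\ samples from $f_{\mathcal{S}_T} * G_\sigma$. Given $n$ noiseless samples $\boldsymbol{x}_1,\ldots,\boldsymbol{x}_n \sim f_{\mathcal{S}_T}$, we construct an algorithm $\mathscr{A}'$ that draws independent noise vectors $\boldsymbol{z}_i \sim \mathcal{N}(\boldsymbol{0},\sigma^2\boldsymbol{I})$ and feeds $\boldsymbol{y}_i = \boldsymbol{x}_i + \boldsymbol{z}_i$ to $\mathscr{A}$. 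Since the $\boldsymbol{y}_i$ are exactly distributed according to $f_{\mathcal{S}_T}*G_\sigma$, the output of $\mathscr{A}'$ satisfies the same TV-accuracy guarantee as $\mathscr{A}$, using the same number of samples. Consequently, any lower bound on noiseless sample complexity (measured in TV distance between the true and estimated simplex distributions) transfers verbatim to the noisy regime.

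Invoking Theorem \ref{thm:noislessSimplexSecondLowerBound} on $\mathscr{A}'$ gives $n \geq \Omega(K/\epsilon)$, while applying Theorem \ref{thm:noisySimplexFirstLowerBound} directly to $\mathscr{A}$ gives $n \geq \Omega(K^3\sigma^2/\epsilon^2)$. Both constraints hold simultaneously, so the sample complexity must be at least their maximum, which up to a factor of two equals the sum stated in the corollary. The $\bar{\theta}$-dependent term from Theorem \ref{thm:noisySimplexFirstLowerBound} is absorbed into hidden constants, since throughout the paper we fix the isoperimetricity parameters $(\underline{\theta},\bar{\theta})$ of the ambient class.

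The only step that requires a moment of care—and thus the main (mild) obstacle—is verifying that the error metric is consistent across the two theorems under the reduction: both Theorems \ref{thm:noisySimplexFirstLowerBound} and \ref{thm:noislessSimplexSecondLowerBound} quantify error as TV distance between $f_{\mathcal{S}_T}$ and $f_{\widehat{\mathcal{S}}}$ (not between the noise-smoothed distributions), so the transferred guarantee is of precisely the right type and no further alignment of distances is needed. Once this is noted, the corollary follows immediately.
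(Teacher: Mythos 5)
Your proposal is correct and follows essentially the same route as the paper, which simply combines Theorems \ref{thm:noisySimplexFirstLowerBound} and \ref{thm:noislessSimplexSecondLowerBound} and takes the larger of the two bounds. The only difference is that you make explicit the simulation argument (adding synthetic Gaussian noise to noiseless samples) that justifies transferring the noiseless $\Omega(K/\epsilon)$ bound to the noisy setting, a step the paper leaves implicit; this is a valid and welcome clarification rather than a different approach.
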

This corollary follows directly from Theorems \ref{thm:noisySimplexFirstLowerBound} and \ref{thm:noislessSimplexSecondLowerBound}. 
%%%% Why?
%Importantly, this lower bound highlights that any matching upper bound for noisy simplex estimation must have an exponential dependence on the signal-to-noise ratio (SNR) in order to avoid trivial behavior as the noise decreases.
%%%%%%%%%%%%%%%%%%%%%%%%%%%%%%%%%%%%%%%%%%%%%%%%%%%%%%%%%%%%%%%%%%%%%%%%%%%%%%%%%%%%%%%%%%%%%%%%%%%%%%%%%%%%%%%%%%%%%%%%%%%%%%%%%%%%%%%%%%%%%%%%%%%%%%%%%%%%%%%%%%%%%%%%%%%%%%%%%%%%%%%%%%%%%%%%%%%%%%%%%%%%%%%%%%%%%%%%%%%%%%%%%%%%%%%%%%%%%%%%%%%%%%%%%%%%%%%%%%%%%%%%%%%%%%%%%%%%%%%%%%%%%%%%%%%%%%%%%%%%%%%%%%%%%%%%%%%%%%%%%%%%%%%%%%%%%%%%%%%%%%%%%%%%%%%%%%%%%%%%%%%%%%%%%%%%%%%%%%%%%%%%%%%%%%%%%%%%%%%%%%%%%%%%%%%%%%%%%%%%%%%%%%%%%%%%%%%%%%%%%%%%%%%%%%%%%%%%%%%%%%%%%%%%%%%%%%%%%%%%%%%%%%%%%%%%%%%%%%%%%%%%%%%%%%%%%%%%%%%%%%%%%%%%%%%%%%%%%%%%%%%%%%%%%%%%%%%%%%%%%%%%%%%%%%%%%%%%%%%%%%%%%%%%%%%%%%%%%%%%%%%%%%%%%%%%%%%%%%%%%%%%%%%%%%%%%%%%%%%%%%%%%%%%%%%%%%%%%%%%%%%%%%%%%%%%%%%%%%%

\section{Conclusions}
\label{sec:conc}

We have presented information-theoretic upper and lower bounds on the sample complexity of PAC-learning high-dimensional simplices from noisy observations. Specifically, we established that given a sufficient number of samples corrupted by additive Gaussian noise (or a broad class of similar noise models), one can estimate the underlying simplex to arbitrarily small $\ell_2$ or total variation (TV) error.

Our upper bound exhibits a presumably optimal polynomial dependence on key parameters such as the dimension $K$ and the signal-to-noise ratio (SNR), closely matching the known sample complexity in the noiseless setting. Notably, we provided a rigorous theoretical explanation for an empirical phenomenon frequently observed in practice: the performance of heuristic simplex learning algorithms undergoes a sharp phase transition with respect to the noise level. In particular, when $\mathrm{SNR} \geq \Omega(K^{1/2})$, these algorithms perform nearly as well as in the noiseless regime, whereas their accuracy degrades rapidly for lower SNR values.

In addition, we derived new lower bounds extending our prior work, using tools such as Assouad’s lemma and the local Fano method. We showed that at least $\Omega(K^3 \sigma^2 / \epsilon^2)$ samples are necessary to estimate a simplex within TV distance $\epsilon$ in the presence of noise, and we established a matching lower bound of $\Omega(K / \epsilon)$ in the noiseless case. These results offer a comprehensive characterization of the fundamental limits of simplex estimation, answering previously unresolved theoretical questions and revealing the precise dependence of sample complexity on both dimension and noise level.

Our analysis incorporates several modern techniques, including sample compression arguments adapted from recent work on Gaussian mixture models, tools from high-dimensional geometry, and a novel Fourier-based denoising method. This Fourier-based approach may have broader utility in other problems involving recovery of structured distribution families from additive noise. Looking ahead, promising directions for future work include extending our framework to accommodate more general noise and distortion models that better capture practical data acquisition settings. Another important challenge is the development of computationally efficient (i.e., polynomial-time) algorithms for simplex learning. Addressing this computational barrier could significantly increase the practical relevance of our theoretical insights.

%%%%%%%%%%%%%%%%%%%%%%%%%%%%%%%%%%%%%%%%%%%%%%%%%%%%%%%%%%%%%%%%%%%%%%%%%%%%%%%%%%%%%%%%%%%%%%%%%%%%%%%%%%%%%%%%%%%%%%%%%%%%%%%%%%%%%%%%%%%%%%%%%%%%%%%%%%%%%%%%%%%%%%%%%%%%%%%%%%%%%%%%%%%%%%%%%%%%%%%%%%%%%%%%%%%%%%%%%%%%%%%%%%%%%%%%%%%%%%%%%%%%%%%%%%%%%%%%%%%%%%%%%%%%%%%%%%%%%%%%%%%%%%%%%%%%%%%%%%%%%%%%%%%%%%%%%%%%%%%%%%%%%%%%%%%%%%%%%%%%%%%%%%%%%%%%%%%%%%%%%%%%%%%%%%%%%%%%%%%%%%%%%%%%%%%%%%%%%%%%%%%%%%%%%%%%%%%%%%%%%%%%%%%%%%%%%%%%%%%%%%%%%%%%%%%%%%%%%%%%%%%%%%%%%%%%%%%%%%%%%%%%%%%%%%%%%%%%%%%%%%%%%%%%%%%%%%%%%%%%%%%%%%%%%%%%%%%%%%%%%%%%%%%%%%%%%%%%%%%%%%%%%%%%%%%%%%%%%%%%%%%%%%%%%%%%%%%%%%%%%%%%%%%%%%%%%%%%%%%%%%%%%%%%%%%%%%%%%%%%%%%%%%%%%%%%%%%%%%%%%%%%%%%%%%%%%%%%%%%%%%%%%%%%%%%%%%%%%%%%%%%%%%%%%%%%%%%%%%%%%%%%%%%%%%%%%%%%%%%%%%%%%%%%%%%%%%%%%%%%%%%%%%%%%%%%%%%%%%%%%%%%%%%%%%%%%%%%%%%%%%%%%%%%%%%%%%%%%%%%%%%%%%%%%%%%%%%%%%%%%%%%%%%%%%%%%%%%%%%%%%%%%%%%%%%%%%%%%%%%%%%%%%%%%%%%%%%%%%%%%%%%%%%%%%%%%%%%%%%%%%%%%%%%%%%%%%%%%%%%%%%%%%%%%%%%%%%%%%%%%%%%%%%%%%%%

\vskip 0.2in
\bibliography{sample}

%%%%%%%%%%%%%%%%%%%%%%%%%%%%%%%%%%%%%%%%%%%%%%%%%%%%%%%%%%%%%%%%%%%%%%%%%%%%%%%%%%%%%%%%%%%%%%%%%%%%%%%%%%%%%%%%%%%%%%%%%%%%%%%%%%%%%%%%%%%%%%%%%%%%%%%%%%%%%%%%%%%%%%%%%%%%%%%%%%%%%%%%%%%%%%%%%%%%%%%%%%%%%%%%%%%%%%%%%%%%%%%%%%%%%%%%%%%%%%%%%%%%%%%%%%%%%%%%%%%%%%%%%%%%%%%%%%%%%%%%%%%%%%%%%%%%%%%%%%%%%%%%%%%%%%%%%%%%%%%%%%%%%%%%%%%%%%%%%%%%%%%%%%%%%%%%%%%%%%%%%%%%%%%%%%%%%%%%%%%%%%%%%%%%%%%%%%%%%%%%%%%%%%%%%%%%%%%%%%%%%%%%%%%%%%%%%%%%%%%%%%%%%%%%%%%%%%%%%%%%%%%%%%%%%%%%%%%%%%%%%%%%%%%%%%%%%%%%%%%%%%%%%%%%%%%%%%%%%%%%%%%%%%%%%%%%%%%%%%%%%%%%%%%%%%%%%%%%%%%%%%%%%%%%%%%%%%%%%%%%%%%%%%%%%%%%%%%%%%%%%%%%%%%%%%%%%%%%%%%%%%%%%%%%%%%%%%%%%%%%%%%%%%%%%%%%%%%%%%%%%%%%%%%%%%%%%%%%%%%%%%%%%%%%%%%%%%%%%%%%%%%%%%%%%%%%%%%%%%%%%%%%%%%%%%%%%%%%%%%%%%%%%%%%%%%%%%%%%%%%%%%%%%%%%%%%%%%%%%%%%%%%%%%%%%%%%%%%%%%%%%%%%%%%%%%%%%%%%%%%%%%%%%%%%%%%%%%%%%%%%%%%%%%%%%%%%%%%%%%%%%%%%%%%%%%%%%%%%%%%%%%%%%%%%%%%%%%%%%%%%%%%%%%%%%%%%%%%%%%%%%%%%%%%%%%%%%%%%%%%%%%%%%%%%%%%%%%%%%%%%%%%%%%%%%%%anna

\appendix

\section{Proofs for Statistical Learning of Noisy Simplices}
\label{sec:proof of theorems}
\begin{proof}[proof of Theorem \ref{thm:PAC:mainFinalPAC}]
Suppose that $\mathcal{S}$ is a $\left(\underline{\theta}, \bar{\theta}\right)$-isoperimetric simplex in $\mathbb{R}^K$ which is confined in a $K$-dimensional hyper-sphere $\mathrm{C}^{K}(\boldsymbol{p}, R)$. From Lemma \ref{quantization lemma}, we know how to build an $\epsilon$-representative set, for any $\epsilon>0$, for all $\left(\underline{\theta}, \bar{\theta}\right)$-isoperimetric $K$-simplices in $\mathrm{C}^{K}(\boldsymbol{p}, R)$. Let us denote this set with $\widehat{\mathbb{S}}(\mathrm{C}^{K}(\boldsymbol{p}, R))$. 

Now Consider the class of all isotropic Gaussian noise, $\mathcal{N}\left(\boldsymbol{0},\sigma^2\boldsymbol{I} \right)$, with $\sigma \leq R_n$. We denote this class of distributions with $\mathfrak{N}^K\left(R_n\right)$. Consider a set $C^{\epsilon/\sqrt{K}}_{R_n} = \left\{0, \frac{\epsilon}{\sqrt{K}}, \frac{2\epsilon}{\sqrt{K}}, \cdots, R_n\right\}$, which $\frac{\epsilon}{\sqrt{K}}$-covers the interval $\left[0, R_n\right]$. Now for all $\sigma_i \in C^{\epsilon/\sqrt{K}}_{R_n}$ we put $\mathcal{N}\left(\boldsymbol{0},\sigma_i^2\boldsymbol{I} \right)$ in a set called $\widehat{\mathfrak{N}}^K\left(R_n\right)$. From Theorem $1.1$ in \citep{devroye2018total} it can be shown that $\widehat{\mathfrak{N}}^K\left(R_n\right)$ is an $\epsilon$-representative set for $\mathfrak{N}^K\left(R_n\right)$. According to the way we build $\widehat{\mathfrak{N}}^K\left(R_n\right)$, we have $\vert\widehat{\mathfrak{N}}^K\left(R_n\right)\vert \leq \frac{R_n\sqrt{K}}{\epsilon}$. Now we build a set of noisy simplices as follows:
\begin{equation}
    \widehat{\mathbb{G}}_{K, \sigma}(R, R_n) \triangleq \left\{f_{\mathcal{S}}\left(\boldsymbol{x}\right) \ast G_{\sigma}\left(\boldsymbol{x}\right)\vert \mathcal{S} \in \widehat{\mathbb{S}}(\mathrm{C}^{K}(\boldsymbol{p}, R)), G_{\sigma} \in \widehat{\mathfrak{N}}^K\left(R_n\right)\right\}.
\end{equation}
Now for any density function $f_{\mathcal{S}}\left(\boldsymbol{x}\right) \ast G_{\sigma}\left(\boldsymbol{x}\right)$, where $\mathcal{S} \in \mathrm{C}^{K}(\boldsymbol{p}, R)$ and $\sigma \leq R_n$, we can find some $\mathcal{S}^{\star} \in \widehat{\mathbb{S}}(\mathrm{C}^{K}(\boldsymbol{p}, R))$, and $G_{\sigma^\star} \in \widehat{\mathfrak{N}}^K\left(R_n\right)$ such that
\begin{align*}
    &\|f_{\mathcal{S}}- f_{\mathcal{S}^{\star}}\|_{\mathrm{TV}} \leq \epsilon,
    \\
    &\|G_{\sigma}- G_{\sigma^{\star}}\|_{\mathrm{TV}} \leq \epsilon,
    \\
    &f_{\mathcal{S}^{\star}} \ast G_{\sigma^{\star}} \in \widehat{\mathbb{G}}_{K, \sigma}(R, R_n).
\end{align*}
Now for the distance between $f_{\mathcal{S}^{\star}}^{\sigma^{\star}}$ and $f_{\mathcal{S}}^{\sigma}$ we have:
\begin{align}
    \|f_{\mathcal{S}^{\star}} \ast G_{\sigma^{\star}}-f_{\mathcal{S}} \ast G_{\sigma}\|_{\mathrm{TV}}
    \leq & \|f_{\mathcal{S}^{\star}} \ast G_{\sigma^{\star}}- f_{\mathcal{S}^{\star}} \ast G_{\sigma}\|_{\mathrm{TV}} + \|f_{\mathcal{S}^{\star}} \ast G_{\sigma}- f_{\mathcal{S}} \ast G_{\sigma}\|_{\mathrm{TV}}
    \nonumber \\
    \leq & \|G_{\sigma^{\star}} - G_{\sigma}\|_{\mathrm{TV}} + \|f_{\mathcal{S}^{\star}} - f_{\mathcal{S}} \|_{\mathrm{TV}}
    \nonumber \\ 
    \leq & 2\epsilon.
\end{align}
From the above inequalities it can be seen that for any density function $f_{\mathcal{S}} \ast G_{\sigma}$, where $\mathcal{S} \in \mathrm{C}^{K}(\boldsymbol{p}, R)$ and $\sigma \leq R_n$, there exist some density function $f^{\star} \in \widehat{\mathbb{S}}_n(\mathrm{C}^{K}(\boldsymbol{p}, R), R_n)$
where $\|f^{\star} - f_{\mathcal{S}} \ast G_{\sigma}\|_{\mathrm{TV}} \leq 2\epsilon$. Therefore the set $ \widehat{\mathbb{G}}_{K, \sigma}(R, R_n)$ is a $2\epsilon$-representative set for the class of $K$-simplices confined in a hyper-sphere with radius $R$ which convolved with a Gaussian noise with variance $\sigma \leq R_n$. We show this class with 

\begin{equation}
    \mathbb{G}_{K,\sigma}\left(R, R_n\right)
\triangleq
\left\{
f_{\mathcal{S}}*G_{\sigma}
\vert~
\mathcal{S}\in\mathbb{S}_K, \mathcal{S} \in \mathrm{C}^{K}(\boldsymbol{p}, R), \sigma \leq R_n
\right\}
\end{equation}

Assume that we have a set of i.i.d. samples from some distribution $f_{\mathcal{S}}*G_{\sigma} \in \mathbb{G}_{K,\sigma}\left(R, R_n\right)$. From Theorem \ref{combinatoeic algorithm}, we know that there exists a deterministic algorithm $\mathscr{A}$ such that given
$$n \geq \frac{\log{(3\vert\widehat{\mathbb{G}}_{K, \sigma}(R, R_n)\vert^2/\delta)}}{2\epsilon^2}$$
i.i.d. samples from $f_{\mathcal{S}}*G_{\sigma}$, the output of the algorithm denoted by  $f_{\mathcal{S}_{\mathscr{A}}}*G_{\sigma_{\mathscr{A}}}$, with probability at least $1-\delta$, satisfies:
\begin{align}
\|f_{\mathcal{S}_{\mathscr{A}}}*G_{\sigma_{\mathscr{A}}} -  f_{\mathcal{S}}*G_{\sigma} \|_{\mathrm{TV}}  \leq &~
3 \min_{f \in \widehat{\mathbb{G}}_{K, \sigma}(R, R_n)} \|f - f_{\mathcal{S}}*G_{\sigma}\|_{\mathrm{TV}} +4\epsilon
\nonumber \\
 \leq&~ 6\epsilon +4\epsilon = 10\epsilon.
 \label{combinatorial method 2}
\end{align}
For the cardinality of $\widehat{\mathbb{G}}_{K, \sigma}(R, R_n)$ we have:
\begin{align}
    \vert\widehat{\mathbb{G}}_{K, \sigma}(R, R_n)\vert 
    =& \vert\widehat{\mathbb{S}}(\mathrm{C}^{K}(\boldsymbol{p}, R))\vert \vert\widehat{\mathfrak{N}}^K\left(R_n\right)\vert 
    \nonumber \\
    \leq& \vert\widehat{\mathbb{S}}(\mathrm{C}^{K}(\boldsymbol{p}, R))\vert \frac{R_n\sqrt{K}}{\epsilon}.
    \label{noisy representative cardinality}
\end{align}
And, according to Lemma \ref{quantization lemma} the followings hold for the cardinality of $\widehat{\mathbb{S}}(\mathrm{C}^{K}(\boldsymbol{p}, R))$: 
\begin{align}
\bigg\vert\widehat{\mathbb{S}}(\mathrm{C}^{K}(\boldsymbol{p}, R))\bigg\vert = &
\nonumber
 \binom{\left\vert \mathrm{T}_{\frac{\alpha\epsilon}{K+1}}(\mathrm{C}^{K}(\boldsymbol{p}, R))\right\vert}{K+1} 
 \\ \nonumber \leq &~ \bigg\vert \mathrm{T}_{\frac{\alpha\epsilon}{K+1}}(\mathrm{C}^{K}(\boldsymbol{p}, R))\bigg\vert^{K+1}
 \\ \nonumber \leq &~ \left(\left(1+ \frac{2(K+1) R}{\alpha\epsilon} \right)^K\right)^{K+1}
  \\   = &~ \left(1+ \frac{2(K+1)R}{\alpha\epsilon} \right)^{K(K+1)}.
  \label{quantized simplices set cardinality}
\end{align}
Now from \ref{quantized simplices set cardinality} and \ref{noisy representative cardinality} we have:
\begin{align}
    \vert\widehat{\mathbb{G}}_{K, \sigma}(R, R_n)\vert
    \leq & \frac{R_n\sqrt{K}}{\epsilon} \left(1+ \frac{2(K+1)R}{\alpha\epsilon} \right)^{K(K+1)}.
    \label{quantized noisy simplices cardinality}
\end{align}
Then using \ref{quantized noisy simplices cardinality} and \ref{combinatorial method 2}, we can say that for any $\epsilon_,\delta>0$, there exists a PAC-learning algorithm $\mathscr{A}$ for the class of noisy $\left(\underline{\theta}, \bar{\theta}\right)$-isoperimetric $K$-simplices in $\mathbb{G}_{K,\sigma}\left(R, R_n\right)$, whose sample complexity is bounded as follows:
\begin{align}
n \geq &~ 50\frac{\log{\frac{10R_n\sqrt{K}}{\epsilon}} + 2(K+1)^2\log \left(1+ \frac{20(K+1)R}{\alpha\epsilon} \right) + \log\frac{3}{\delta}}{\epsilon^2}
\nonumber \\
= &~ 50\frac{\log{\frac{30R_n\sqrt{K}}{\delta\epsilon}} + 2(K+1)^2\log \left(1+ \frac{100R\bar{\theta}(K+1)}{\epsilon\mathrm{Vol}\left(\mathcal{S}\right)^{\frac{1}{K}}}\right)}{\epsilon^2}.
\end{align}
In other words, given that the number of samples $n$ satisfies the above lower-bound, then with probability at least $1-\delta$ we have
$$
\|f_{\mathcal{S}_{\mathscr{A}}}*G_{\sigma_{\mathscr{A}}} - f_{\mathcal{S}}*G_{\sigma} \|_{\mathrm{TV}} \leq \epsilon.
$$
This completes the proof.
Using simple algebra, the bound can be further simplified into $n\ge O\left(\frac{K^2}{\epsilon^2}\log{\frac{K}{\epsilon}}\right)$ which completes the proof.
\end{proof}

%%%%%%%%%%%%%%%%%%%%%%%%%%%%%%%%%%%%%%%%%%%%%%%%%%%%%%%%%%%%%%%%%%%%%%%%%%%%%%%%%%%%%%%%%%%%%%%%%%%%%%%%%%%%%%%%%%%%%%%%%%%%%%%%%%%%%%%%%%%%%%%%%%%%%%%%%%%%%%%%%%%%%%%%%%%%%%%%%%%%%%%%%%%%%%%%%%%%%%%%%%%%%%%%%%%%%%%%%%%%%%%%%%%%%%%%%%%%%%%%%%%%%%%%%%%%%%%%%%%%%%%%%%%%%%%%%%%%%%%%%%%%%%%%%%%%%%%%%%%%%%%%%%%%%%%%%%%%%%%%%%%%%%%%%%%%%%%%%%%%%%%%%%%%%%%%%%%%%%%%%%%%%%%%%%%%%%%%%%%%%%%%%%%%%%%%%%%%%%%%%%%%%%%%%%%%%%%%%%%%%%%%%%%%%%%%%%%%%%%%%%%%%%%%%%%%%%%%%%%%%%%%%%%%%%%%%%%%%%%%%%%%%%%%%%%%%%%%%%%%%%%%%%%%%%%%%%%%%%%%%%%%%%%%%%%%%%%%%%%%%%%%%%%%%%%%%%%%%%%%%%%%%%%%%%%%%%%%%%%%%%%%%%%%%%%%%%%%%%%%%%%%%%%%%%%%%%%%%%%%%%%%%%%%%%%%%%%%%%%%%%%%%%%%%%%%%%%%%%%%%%%%%%%%%%%%%%%%%%%%%%%%%%%%%%%%%%%%%%%%%%%%%%%%%%%%%%%%%%%%%%%%%%%%%%%%%%%%%%%%%%%%%%%%%%%%%%%%%%%%%%%%%%%%%%%%%%%%%%%%%%%%%%%%%%%%%%%%%%%%%%%%%%%%%%%%%%%%%%%%%%%%%%%%%%%%%%%%%%%%%%%%%%%%%%%%%%%%%%%%%%%%%%%%%%%%%%%%%%%%%%%%%%%%%%%%%%%%%%%%%%%%%%%%%%%%%%%%%%%%%%%%%%%%%%%%%%%%%%%%%%%%%%%%%%%%%%%%%%%%%%%%%%%%%%%%%%%%%%%%%%%%%%%%%%%%%%%%%%%%%%%%%%%%%%%%%%%%%%%%
\begin{proof}[proof of Theorem \ref{Theorem2}]
From Theorem \ref{thm:PAC:mainFinalPAC}, we know that the class of $\left(\underline{\theta}, \bar{\theta}\right)$-isoperimetric $K$-simplices contained in the $K$-dimensional hyper-sphere $\mathrm{C}^{K}(\boldsymbol{p}, R)$ and convolved with an isotropic Gaussian noise $\mathcal{N}\left(\boldsymbol{0}, \sigma^2 \boldsymbol{\mathrm{I}}\right)$, with $\sigma \leq \mathrm{R}_n$, is PAC-learnable, with sample complexity $O\left(\frac{K^2}{\epsilon^2}\log{\frac{K}{\epsilon}}\right)$, where $\epsilon$ is defined accordingly. And from Lemma \ref{lemma2}, we know that if we have $O\left(K^2\right)$ samples from a noisy simplex $f_{\mathcal{S}}*\mathbb{G}_\sigma$, we can find a $K$-dimensional sphere which with probability at least $1-\delta/2$ contains the true simplex as long as the radius $R$ of the sphere and the upper-bound of the noise variance $R_n$ satisfy
\begin{align}
& R \leq 4\sqrt{K+1}\left(1+\frac{K+2}{d_{\mathrm{S}}/\sigma}\right)d_{\mathrm{S}}
\label{upper bound for the radius of containing sphere}
\\
& R_n \leq \frac{K+2}{K-3}\left(1+\frac{d_{\mathrm{S}}/\sigma}{K+2}\right)\sigma.
\label{upper bound for the noise variance}
\end{align}

Therefore, it can be shown that there exists an algorithm $\mathscr{A}$ such that given $n$ i.i.d samples from $\mathbb{G}_\mathcal{S}$ with
\begin{align}
n
\geq&~ 50\frac{\log{\frac{30R_n\sqrt{K}}{\delta\epsilon}} + 2(K+1)^2\log \left(1+ \frac{100R\bar{\theta}(K+1)}{\epsilon\mathrm{Vol}\left(\mathcal{S}\right)^{\frac{1}{K}}}\right)}{\epsilon^2}
+ 2000(K+1)(K+2)\log{\frac{6}{\delta}}
\nonumber\\
=&~ 100\frac{\log{6/\delta}+(K+2)^2\log \left(1+ \frac{100\bar{\theta}(K+1)^{3/2}}{\epsilon\mathrm{Vol}\left(\mathcal{S}\right)^{\frac{1}{K}}}\left(1+\frac{K+2}{d_{\mathrm{S}}/\sigma}\right)d_{\mathrm{S}}\right)}{\epsilon^2}
\nonumber\\
=&~ O\left(\frac{K^2}{\epsilon^2}\log\frac{K}{\epsilon}\right),
\label{sample complexity of noisy simplex}
\end{align}
the output of the algorithm $f_{\mathcal{S}_{\mathscr{A}}}*G_{\sigma_{\mathscr{A}}}$ with probability at least $1-\delta$ satisfies
\begin{equation*}
\|f_{\mathcal{S}_{\mathscr{A}}}*G_{\sigma_{\mathscr{A}}} - f_{\mathcal{S}}*\mathbb{G}_\sigma\|_\mathrm{TV} \leq ~\epsilon.
\end{equation*}
It should be noted that $R$ and $R_n$ in \eqref{sample complexity of noisy simplex} is already replaced with the bound in the r.h.s of \eqref{upper bound for the radius of containing sphere} and \eqref{upper bound for the noise variance}.
This way, the proof is completed.
\end{proof}

%%%%%%%%%%%%%%%%%%%%%%%%%%%%%%%%%%%%%%%%%%%%%%%%%%%%%%%%%%%%%%%%%%%%%%%%%%%%%%%%%%%%%%%%%%%%%%%%%%%%%%%%%%%%%%%%%%%%%%%%%%%%%%%%%%%%%%%%%%%%%%%%%%%%%%%%%%%%%%%%%%%%%%%%%%%%%%%%%%%%%%%%%%%%%%%%%%%%%%%%%%%%%%%%%%%%%%%%%%%%%%%%%%%%%%%%%%%%%%%%%%%%%%%%%%%%%%%%%%%%%%%%%%%%%%%%%%%%%%%%%%%%%%%%%%%%%%%%%%%%%%%%%%%%%%%%%%%%%%%%%%%%%%%%%%%%%%%%%%%%%%%%%%%%%%%%%%%%%%%%%%%%%%%%%%%%%%%%%%%%%%%%%%%%%%%%%%%%%%%%%%%%%%%%%%%%%%%%%%%%%%%%%%%%%%%%%%%%%%%%%%%%%%%%%%%%%%%%%%%%%%%%%%%%%%%%%%%%%%%%%%%%%%%%%%%%%%%%%%%%%%%%%%%%%%%%%%%%%%%%%%%%%%%%%%%%%%%%%%%%%%%%%%%%%%%%%%%%%%%%%%%%%%%%%%%%%%%%%%%%%%%%%%%%%%%%%%%%%%%%%%%%%%%%%%%%%%%%%%%%%%%%%%%%%%%%%%%%%%%%%%%%%%%%%%%%%%%%%%%%%%%%%%%%%%%%%%%%%%%%%%%%%%%%%%%%%%%%%%%%%%%%%%%%%%%%%%%%%%%%%%%%%%%%%%%%%%%%%%%%%%%%%%%%%%%%%%%%%%%%%%%%%%%%%%%%%%%%%%%%%%%%%%%%%%%%%%%%%%%%%%%%%%%%%%%%%%%%%%%%%%%%%%%%%%%%%%%%%%%%%%%%%%%%%%%%%%%%%%%%%%%%%%%%%%%%%%%%%%%%%%%%%%%%%%%%%%%%%%%%%%%%%%%%%%%%%%%%%%%%%%%%%%%%%%%%%%%%%%%%%%%%%%%%%%%%%%%%%%%%%%%%%%%%%%%%%%%%%%%%%%%%%%%%%%%%%%%%%%%%%%%%%%%%%%%%%%%%%%%%
\subsection{Proof of Lemmas}
\label{sec:proof of lemmas}
\begin{proof}[proof of Lemma \ref{lemma2}]
To find a $K$-dimensional sphere containing the true simplex, it suffices to find a point $\boldsymbol{p}$ inside the simplex and an upper-bound $R$ for its diameter, i.e., the maximum distance between two points in the simplex. Obviously, this ensures that the $K$-dimensional sphere with center point $\boldsymbol{p}$ and radius $R$ contains the true simplex. For the maximum distance between any two points inside $\mathcal{S}\in\mathbb{S}_K$, denoted by $d_{\mathcal{S}}$, we have
\begin{align}
d_{\mathcal{S}} = &\max_{\boldsymbol{x},\boldsymbol{y} \in \mathcal{S}}{\|\boldsymbol{x}-\boldsymbol{y}\|_2} 
\nonumber \\ 
=& \max_{\boldsymbol{\phi}_{\boldsymbol{x}}, \boldsymbol{\phi}_{\boldsymbol{y}} \in \mathcal{S}_\mathrm{S}^K}{\|\boldsymbol{V}_{\mathcal{S}}\boldsymbol{\phi}_{\boldsymbol{x}} - \boldsymbol{V}_{\mathcal{S}}\boldsymbol{\phi}_{\boldsymbol{y}} \|_2}
\nonumber \\ 
=& \max_{\boldsymbol{\phi}_{\boldsymbol{x}}, \boldsymbol{\phi}_{\boldsymbol{y}} \in \mathcal{S}_\mathrm{S}^K}{\|\boldsymbol{V}_{\mathcal{S}}\left(\boldsymbol{\phi}_{\boldsymbol{x}} -\boldsymbol{\phi}_{\boldsymbol{y}}\right) \|_2},
\end{align}
where $\mathcal{S}_{\mathrm{S}}^K$ represents the standard simplex in $\mathbb{R}^K$ and $\boldsymbol{V}_{\mathcal{S}}$ denotes the vertex matrix of $\mathcal{S}$ (Equation 4 in the main paper). Let $\sigma_{\max}\left(\mathcal{S}\right)$ denote the maximum singular-value for the matrix $\boldsymbol{V}_{\mathcal{S}}$, and assume $\boldsymbol{v}_{\max}$ be the unitary eigenvector that corresponds to $\sigma_{\max}\left(\mathcal{S}\right)$. To be more precise, let $\boldsymbol{V}_{\mathcal{S}}=\boldsymbol{U}\boldsymbol{\Sigma}\boldsymbol{V}^T$ be the singular value decomposition of the vertex matrix. Then, $\sigma_{\max}\left(\mathcal{S}\right)$ refers to the largest singular value on the main diagonal of $\boldsymbol{\Sigma}$, and $\boldsymbol{v}_{\max}$ represents the corresponding column in $\boldsymbol{V}$. Then, for any point $\boldsymbol{x}\in\mathbb{R}^{K+1}$, one can write
$$
\left\Vert
\boldsymbol{V}_{\mathcal{S}}\boldsymbol{x}
\right\Vert^2_2
=\sum_{i}\Sigma^2_{i,i}\left\Vert
\boldsymbol{U}_i\boldsymbol{V}_i^T\boldsymbol{x}
\right\Vert^2_2
=\sum_{i}\Sigma^2_{i,i}\left(\boldsymbol{V}_i^T\boldsymbol{x}\right)^2,
$$
which results in the following useful inequality:
$$
\left\Vert
\boldsymbol{V}_{\mathcal{S}}\boldsymbol{x}
\right\Vert_2
\ge
\sigma_{\max}\left(\mathcal{S}\right)
\left\vert
\boldsymbol{v}_{\max}^T\boldsymbol{x}
\right\vert.
$$
For $\boldsymbol{\phi}_{\boldsymbol{x}}\neq\boldsymbol{\phi}_{\boldsymbol{y}}$, let $\boldsymbol{r}\triangleq\left(\boldsymbol{\phi}_{\boldsymbol{x}}-\boldsymbol{\phi}_{\boldsymbol{y}}\right)/\left\Vert\boldsymbol{\phi}_{\boldsymbol{x}}-\boldsymbol{\phi}_{\boldsymbol{y}}\right\Vert_2$, and $\ell\triangleq\left\Vert\boldsymbol{\phi}_{\boldsymbol{x}}-\boldsymbol{\phi}_{\boldsymbol{y}}\right\Vert_2$. In other words, let $\boldsymbol{r}$ represent the unitary direction vector for the difference, and $\ell$ to denote the corresponding length. In this regard, we have
\begin{align}
\max_{\boldsymbol{\phi}_{\boldsymbol{x}},\boldsymbol{\phi}_{\boldsymbol{y}}\in\mathcal{S}^K_\mathrm{S}}
\left\Vert
\boldsymbol{V}_{\mathcal{S}}\left(
\boldsymbol{\phi}_{\boldsymbol{x}}-\boldsymbol{\phi}_{\boldsymbol{y}}
\right)
\right\Vert_2
\ge~
\sigma_{\max}\left(\mathcal{S}\right)
\max_{\boldsymbol{r},\ell}~
\ell
\left\vert
\boldsymbol{v}_{\max}^T\boldsymbol{r}
\right\vert
%\nonumber\\
\ge~
\sigma_{\max}\left(\mathcal{S}\right)
\max_{\ell\vert~\boldsymbol{r}=\boldsymbol{v}_{\max}}~
\ell.
\end{align}
Now we should find a lower-bound for maximum possible $\ell$ that can be achieved for some fixed direction $\boldsymbol{r}=\boldsymbol{v}_{\max}$. To this aim, assume the difference vector $\boldsymbol{\phi}_{\boldsymbol{x}}-\boldsymbol{\phi}_{\boldsymbol{y}}$ starts from any arbitrary vertex, i.e., $\boldsymbol{\phi}_{\boldsymbol{x}}$ is a one-hot vector with $K$ components equal to zero and the remaining one equal to $1$. In this regard, the minimum possible $\ell$ that can be achieved (minimum is taken w.r.t. direction of the difference vector) occurs when the difference vector becomes perpendicular to its front facet. Since all $\boldsymbol{\phi}$ vectors belong to the standard simplex, such difference vectors can be easily shown to be of the form $\boldsymbol{\Delta}$ with
$$
\Delta_i=1\quad\mathrm{and}\quad\Delta_{j\vert j\neq i}=-1/K,
$$
for any $i\in0,\ldots,K$. The $\ell_2$-norm of all such vectors equals to $\sqrt{1+1/K}$. Thus, we have
\begin{align}
d_{\mathcal{S}}\ge~\sqrt{\frac{K+1}{K}}\sigma_{\mathrm{max}}\left(S\right)
\ge~\sigma_{\mathrm{max}}\left(S\right).
\end{align}
Assume we have $2m$ i.i.d. samples drawn from a noisy simplex $f_{\mathcal{S}}*\mathbb{G}_{\sigma}$, denoted by $\left\{\boldsymbol{y}\right\}^{2m}_{i=1}$. This way, we have $\boldsymbol{y}_i=\boldsymbol{V}_{\mathcal{S}}\boldsymbol{\phi}_i+\boldsymbol{z}_i$ for $i=1,\ldots,2m$, where $\boldsymbol{\phi}_i$ represents the weight vector for the $i$th sample (drawn from a uniform Dirichlet distribution), while $\boldsymbol{z}_i\sim\mathcal{N}\left(\boldsymbol{0},\sigma^2\boldsymbol{I}\right)$ is an independently drawn noise vector. Let us define $\mathrm{D}$ as
\begin{equation}
\mathrm{D}= \frac{1}{2m}\sum_{i =1}^{m}{\|\boldsymbol{y}_{2i} - \boldsymbol{y}_{2i-1}\|_2^2}.
\end{equation}
To find an upper-bound for $d_{\mathcal{S}}$, it is enough to find a high probability lower-bound for $\mathrm{D}$ in terms of $d_{\mathcal{S}}$. To this aim, we rewrite $\mathrm{D}$ as
\begin{align}
\mathrm{D} =~  & \frac{1}{2m}\sum_{i =1}^{m}{\|\boldsymbol{V}_{\mathcal{S}}\boldsymbol{\phi}_{2i} - \boldsymbol{V}_{\mathcal{S}}\boldsymbol{\phi}_{2i-1} + \boldsymbol{z}_{2i} - \boldsymbol{z}_{2i-1} \|_2^2}
\nonumber \\  
=~ & \frac{1}{2m}\sum_{i =1}^{m}{\|\boldsymbol{V}_{\mathcal{S}}\boldsymbol{\phi}_{2i} - \boldsymbol{V}_{\mathcal{S}}\boldsymbol{\phi}_{2i-1}\|_2^2} + \frac{1}{2m}\sum_{i=1}^{m}{\|\boldsymbol{z}_{2i} - \boldsymbol{z}_{2i-1} \|_2^2} 
\nonumber \\
& + \frac{1}{m}\sum_{i=1}^{m}{\left(\boldsymbol{V}_{\mathcal{S}}\boldsymbol{\phi}_{2i} - \boldsymbol{V}_{\mathcal{S}}\boldsymbol{\phi}_{2i-1}\right)^T\left(\boldsymbol{z}_{2i} - \boldsymbol{z}_{2i-1}\right)}
\nonumber \\  
= & ~ \frac{1}{2m}\sum_{i =1}^{m}{\|\boldsymbol{V}_{\mathcal{S}}\boldsymbol{\phi}_{2i} - \boldsymbol{V}_{\mathcal{S}}\boldsymbol{\phi}_{2i-1}\|_2^2} + \frac{2\sigma^2}{2m}\sum_{i=1}^{m}{\|\tilde{\boldsymbol{z}}_{2i} -  \tilde{\boldsymbol{z}}_{2i-1} \|_2^2}
\nonumber \\ 
& + \frac{\sigma\sqrt{2}}{m}\sum_{i=1}^{m}{\left(\boldsymbol{V}_{\mathcal{S}}\boldsymbol{\phi}_{2i} - \boldsymbol{V}_{\mathcal{S}}\boldsymbol{\phi}_{2i-1}\right)^T\left(\tilde{\boldsymbol{z}}_{2i} - \tilde{\boldsymbol{z}}_{2i-1}\right)}.
\label{Lower bound for statistic}
\end{align}
Let us denote the three terms in r.h.s. of \eqref{Lower bound for statistic} as $\mathrm{I}$, $\mathrm{II}$ and $\mathrm{III}$, respectively. Also, we have $\tilde{\boldsymbol{z}}_{i} = \boldsymbol{z}_i/\sigma$. To find a lower-bound for $\mathrm{D}$, we should find respective lower-bounds for $\mathrm{I}$, $\mathrm{II}$ and $\mathrm{III}$. First, let us discuss about $(\mathrm{I})$:
\begin{equation}
\mathrm{I} = \frac{1}{2m}\sum_{i =1}^{m}{\left\Vert
\boldsymbol{V}_{\mathcal{S}}\boldsymbol{\phi}_{2i} - \boldsymbol{V}_{\mathcal{S}}\boldsymbol{\phi}_{2i-1}
\right\Vert_2^2}  =  f_{\boldsymbol{V}_{\mathcal{S}}}\left(\boldsymbol{\phi}_1,\boldsymbol{\phi}_2,\cdots, \boldsymbol{\phi}_{2m}\right).
\end{equation}
For the term inside of the summation, we have
\begin{align}
\|\boldsymbol{V}_{\mathcal{S}}\boldsymbol{\phi}_{2i} - \boldsymbol{V}_{\mathcal{S}}\boldsymbol{\phi}_{2i-1}\|_2^2 
~\stackrel{a.s.}{\leq}~ & d_{\mathcal{S}}^2,
\\
\mathrm{Var}\left(\|\boldsymbol{V}_{\mathcal{S}}\boldsymbol{\phi}_{2i} - \boldsymbol{V}_{\mathcal{S}}\boldsymbol{\phi}_{2i-1}\|_2^2 \right)
\leq &~ \frac{12\sigma^4_{\mathrm{max}}\left(\mathcal{S}\right)}{K^3} \leq ~\frac{12d_{\mathcal{S}}^4}{K^3}
\label{Bounded difference property}
\end{align}
From the above inequalities we can see that $\|\boldsymbol{V}_{\mathcal{S}}\boldsymbol{\phi}_{2i} - \boldsymbol{V}_{\mathcal{S}}\boldsymbol{\phi}_{2i-1}\|_2^2 $ satisfies the Bernstein's condition with $b = 2d^2_{\mathcal{S}}$ (see eq. $2.15$ in \citep{wainwright2019high}). Then (from proposition $2.10$ in  \citep{wainwright2019high}) the following inequality holds with probability at least $1-\delta/6$, for any $\delta\in\left(0,1\right)$:
\begin{equation}
f_{\boldsymbol{V}_{\mathcal{S}}}\left(\boldsymbol{\phi}_1,\cdots, \boldsymbol{\phi}_{2m}\right) \geq \mathbb{E}\left[f_{\boldsymbol{V}_{\mathcal{S}}}\left(\boldsymbol{\phi}_1,\cdots, \boldsymbol{\phi}_{2m}\right)\right] - 2\sqrt{ \max\left\{\frac{4\log{\frac{6}{\delta}}}{m}, \frac{3}{K^3}\right\}\cdot \frac{d_{\mathcal{S}}^4}{m}\log{\frac{6}{\delta}}},
\label{Macdiarmid}
\end{equation}
where for the expected value of the function $f$ we have:
\begin{align}
\mathbb{E}\left[ f_{\boldsymbol{V}_{\mathcal{S}}}\left(\boldsymbol{\phi}_1,\cdots, \boldsymbol{\phi}_{2m}\right)\right] ~
= & ~ \frac{1}{2m}\sum_{i =1}^{m}{ \mathbb{E}\left[\|\boldsymbol{V}_{\mathcal{S}}\boldsymbol{\phi}_{2i} - \boldsymbol{V}_{\mathcal{S}}\boldsymbol{\phi}_{2i-1}\|_2^2\right]}
\nonumber \\ 
= & ~ \frac{1}{2} \mathbb{E}\left[\|\boldsymbol{V}_{\mathcal{S}}\boldsymbol{\phi} - \boldsymbol{V}_{\mathcal{S}}\boldsymbol{\phi}^\prime\|_2^2\right]
\nonumber \\  
= & ~ \frac{1}{2}\mathbb{E}\left[\left(\boldsymbol{V}_{\mathcal{S}}\boldsymbol{\phi} - \boldsymbol{V}_{\mathcal{S}}\boldsymbol{\phi}^\prime\right) ^T \left(\boldsymbol{V}_{\mathcal{S}}\boldsymbol{\phi} - \boldsymbol{V}_{\mathcal{S}}\boldsymbol{\phi}^\prime\right)\right]
\nonumber \\
= & ~ \frac{1}{2}\mathbb{E}\left[\left(\boldsymbol{\phi} -\boldsymbol{\phi}^\prime\right) ^T\boldsymbol{V}_{\mathcal{S}}^T\boldsymbol{V}_{\mathcal{S}} \left(\boldsymbol{\phi} - \boldsymbol{\phi}^\prime\right)\right]
\nonumber \\
= & ~ \frac{1}{2}\mathbb{E}\left[\Tr{\left(\boldsymbol{V}_{\mathcal{S}}^T\boldsymbol{V}_{\mathcal{S}} \left(\boldsymbol{\phi} - \boldsymbol{\phi}^\prime\right)\left(\boldsymbol{\phi} -\boldsymbol{\phi}^\prime\right) ^T\right)}\right]
\nonumber \\
= & ~ \frac{1}{2}\Tr{\left(\boldsymbol{V}_{\mathcal{S}}^T\boldsymbol{V}_{\mathcal{S}} \mathbb{E}\left[\left(\boldsymbol{\phi} - \boldsymbol{\phi}^\prime\right)\left(\boldsymbol{\phi} -\boldsymbol{\phi}^\prime\right)^T\right]\right)}
\nonumber \\
= & ~ \frac{1}{2}\Tr{\left(\boldsymbol{V}_{\mathcal{S}}^T\boldsymbol{V}_{\mathcal{S}}\frac{1}{(K+1)^2(K+2)}
  \begin{bmatrix}
    K & -1  & \dots  & -1 \\
    -1 & K  & \dots  & -1 \\
    \vdots  & \vdots  & \ddots & \vdots \\
    -1 & -1 & \dots  & K
\end{bmatrix}
\right)}
\nonumber \\
= & ~ \frac{1}{2(K+1)^2(K+2)}\Tr{\left(\boldsymbol{V}_{\mathcal{S}}^T\boldsymbol{V}_{\mathcal{S}}\left((K+1)\mathrm{\boldsymbol{I}} - 
  \boldsymbol{1}  \boldsymbol{1}^T\right)\right)}
\nonumber \\
= & ~ \frac{1}{2(K+1)^2(K+2)}\left((K+1)\Tr{\left(\boldsymbol{V}_{\mathcal{S}}^T\boldsymbol{V}_{\mathcal{S}}\right)} - \Tr{\left(
  \boldsymbol{1}^T\boldsymbol{V}_{\mathcal{S}}^T \boldsymbol{V}_{\mathcal{S}} \boldsymbol{1}  \right)}\right)
\nonumber \\
= & ~ \frac{1}{2(K+2)}\left(\sum_{i=1}^{K}{\frac{1}{K+1}\sum_{j=1}^{K+1}{\boldsymbol{V}_{\mathcal{S}}^{2}(i,j)}}- \sum_{i=1}^{K}{\left(\frac{1}{K+1}\sum_{j=1}^{K+1}{\boldsymbol{V}_{\mathcal{S}}(i,j)}\right)^2}\right)
\nonumber \\
= & ~ \frac{1}{2(K+2)}\sum_{i=1}^{K}{\frac{1}{K+1}\sum_{j=1}^{K+1}{\left(\boldsymbol{V}_{\mathcal{S}}(i,j) - \frac{1}{K+1}\sum_{j=1}^{K+1}{\boldsymbol{V}_{\mathcal{S}}(i,j)}\right)^2}}
\nonumber \\
= & ~ \frac{1}{2(K+2)}\sum_{i=1}^{K}{\frac{1}{K+1}\sum_{j=1}^{K+1}{\left(\widehat{\boldsymbol{V}}_{\mathcal{S}}(i,j) - \frac{1}{K+1}\sum_{j=1}^{K+1}{\widehat{\boldsymbol{V}}_{\mathcal{S}}(i,j)}\right)^2}}
\nonumber \\ 
= & ~ \frac{1}{2(K+2)}\sum_{i=1}^{K}{\frac{1}{K+1}\sum_{j=1}^{K+1}{\widetilde{\boldsymbol{V}}_{\mathcal{S}}(i,j)^2}}
\nonumber \\ 
= & ~ \frac{1}{2(K+2)(K+1)}\sum_{j=1}^{K+1}{\|\widetilde{\boldsymbol{V}}_{\mathcal{S}}^j\|_2^2}
\label{lowerboundForExpectation}
\end{align}
where $\widetilde{{\boldsymbol{V}}}_{\mathcal{S}}$ and $\widehat{{\boldsymbol{V}}}_{\mathcal{S}}$ are defined as
\begin{align}
\boldsymbol{V}_{\mathcal{S}} = &\left[\boldsymbol{v}_1,\boldsymbol{v}_2,\cdots,\boldsymbol{v}_{K+1}\right]
\\
\widehat{\boldsymbol{V}}_{\mathcal{S}} = &\left[\boldsymbol{0}, \boldsymbol{v}_2 - \boldsymbol{v}_1 ,\boldsymbol{v}_3 - \boldsymbol{v}_1 ,\cdots,\boldsymbol{v}_{K+1} - \boldsymbol{v}_1\right]
\nonumber
\\
= & \left[\hat{\boldsymbol{v}}_1,\hat{\boldsymbol{v}_2},\cdots,\hat{\boldsymbol{v}}_{K+1}\right]
\\
\widetilde{\boldsymbol{V}}_{\mathcal{S}} = &\left[\hat{\boldsymbol{v}}_1 - \frac{\sum_1^{K}{\hat{\boldsymbol{v}}_i}}{K+1} , \hat{\boldsymbol{v}}_2 - \frac{\sum_1^{K}{\hat{\boldsymbol{v}}_i}}{K+1} ,\cdots,\hat{\boldsymbol{v}}_K - \frac{\sum_1^{K}{\hat{\boldsymbol{v}}_i}}{K+1} \right]
\nonumber
\\
= & \left[\tilde{\boldsymbol{v}}_1,\tilde{\boldsymbol{v}_2},\cdots,\tilde{\boldsymbol{v}}_K,\tilde{\boldsymbol{v}}_{K+1}\right],
\end{align}
And $\widetilde{\boldsymbol{V}}_{\mathcal{S}}^j$ is the $j$th column of $\widetilde{\boldsymbol{V}}_{\mathcal{S}}$.
We know that the maximum distance between any two points inside a simplex is equal to the length of the biggest edge of the simplex. Now without loss of generality suppose that $\boldsymbol{v}_2$ and $\boldsymbol{v}_3$ are the vertices related to this edge (if more than one edge have the maximum length, suppose that, $\boldsymbol{v}_2$ and $\boldsymbol{v}_3$ are belong to one of them). Therefore the length of the vector $\boldsymbol{v}_3 - \boldsymbol{v}_2$ associated with the maximum distance satisfies the following
\begin{align}
    d_{\mathcal{S}}^2 ~= & \|\boldsymbol{v}_3 - \boldsymbol{v}_2\|_2^2
    \nonumber \\
    = & \|\tilde{\boldsymbol{v}_3} - \tilde{\boldsymbol{v}_2}\|_2^2
    \nonumber \\
    \leq & \|\tilde{\boldsymbol{v}_3}\|_2^2 + \|\tilde{\boldsymbol{v}_2}\|_2^2 + 2\|\tilde{\boldsymbol{v}_3}\|_2\|\tilde{\boldsymbol{v}_2}\|_2
    \nonumber \\
    \leq & 2\left(\|\tilde{\boldsymbol{v}}_3\|_2^2 + \|\tilde{\boldsymbol{v}}_2\|_2^2\right).
    \label{maximumLength}
\end{align}
Now the following inequality can be deduced from \ref{lowerboundForExpectation} and \ref{maximumLength}
\begin{equation}
    \mathbb{E}\left[ f_{\boldsymbol{V}_{\mathcal{S}}}\left(\boldsymbol{\phi}_1,\cdots, \boldsymbol{\phi}_{2m}\right)\right] \geq \frac{1}{4(K+1)(K+2)}d_{\mathcal{S}}^2.
    \label{lowerboundForExpectation2}
\end{equation}
From \ref{Macdiarmid} and \ref{lowerboundForExpectation2}, it is easy to show that if we have $2m \geq 2000(K+1)(K+2)\log{\frac{6}{\delta}}$ i.i.d. samples from $\mathbb{P}_{\mathcal{S}}$, then with probability at least $1-\frac{\delta}{6}$ (for any $0<\delta\leq 1$), we have
\begin{equation}
f\left(\boldsymbol{\phi}_1,\boldsymbol{\phi}_2,\cdots, \boldsymbol{\phi}_{2m}\right) \geq \frac{d_{\mathcal{S}}^2}{8(K+1)(K+2)}.
\label{Lowebound on emperical diameter}
\end{equation}
Next, we should find a lower-bound for the second term in the r.h.s. of \ref{Lower bound for statistic}, i.e., $\mathrm{II}$.

In \ref{Lower bound for statistic}, $\boldsymbol{z}_i$s are drawn from a multivariate Gaussian distribution with mean vector $\boldsymbol{0}$ and covariance matrix $\sigma^2 \boldsymbol{\mathrm{I}}$. Then, $\boldsymbol{z}^\prime_i = \tilde{\boldsymbol{z}}_{2i} - \tilde{\boldsymbol{z}}_{2i-1}$ is also a zero-mean Gaussian random vector with covariance matrix $\boldsymbol{\mathrm{I}}$, and as a result $\|\boldsymbol{z}^\prime_i\|_2^2$ is a Chi-squared random variable with $K$ degrees of freedom.  We know that a Chi-squared random variable with $K$ degrees of freedom is sub-exponential with parameters $\left(2\sqrt{K}, 4\right)$ and therefore $\frac{1}{2m}\sum_{i=1}^{m}{\|\boldsymbol{z}^\prime_i\|_2^2}$ is  a sub-exponential random variable with parameters $\left(\sqrt{{K}/{m}}, {2}/{m}\right)$. Then the following holds with probability at least $1-\delta/6$:
\begin{align}
\frac{1}{2m}\sum_{i=1}^{m}{\|\tilde{\boldsymbol{z}}_{2i}-\tilde{\boldsymbol{z}}_{2i-1}\|_2^2}~
\geq&~ \frac{1}{2m}\sum_{i=1}^{m}{\mathbb{E}\left[\|\tilde{\boldsymbol{z}}_{2i}-\tilde{\boldsymbol{z}}_{2i-1}\|_2^2\right]} - \sqrt{\frac{2K}{m}\log{\frac{6}{\delta}}}
\nonumber \\
\geq & ~ \frac{K}{2}-\sqrt{\frac{2K}{m}\log{\frac{6}{\delta}}}
\nonumber \\
\geq &~ \frac{K}{2}-\sqrt{\frac{K}{1000(K+1)(K+2)}}
\nonumber \\
\geq &~ \frac{K}{2} - 1 = \frac{K-2}{2}.
\end{align}
Based on the above inequalities, we can give a lower-bound for the second term in the r.h.s. of \ref{Lower bound for statistic}:
\begin{equation}
\mathrm{II} = \frac{2\sigma^2}{2m}\sum_{i=1}^{m}{\|\tilde{\boldsymbol{z}}_{2i} - \tilde{\boldsymbol{z}}_{2i-1} \|_2^2} \geq (K-2)\sigma^2.
\label{upper bound subexponential}
\end{equation}
Now to find a lower bound for $\mathrm{D}\left(\mathrm{S}\right)$, we only need to give a lower bound for $\mathrm{III}$, in the last inequality of \ref{Lower bound for statistic}. For this part we have:
\begin{align}
\mathrm{III} 
~= & ~\frac{\sigma\sqrt{2}}{m}\sum_{i=1}^{m}{\left(\boldsymbol{V}_{\mathcal{S}}\boldsymbol{\phi}_{2i} - \boldsymbol{V}_{\mathcal{S}}\boldsymbol{\phi}_{2i-1}\right)^T\left(\tilde{\boldsymbol{z}}_{2i} - \tilde{\boldsymbol{z}}_{2i-1}\right)}
\nonumber \\
= & ~ \frac{\sigma\sqrt{2}}{m}\sum_{i=1}^{m}{\left(\boldsymbol{V}_{\mathcal{S}}\boldsymbol{\phi}_{2i} - \boldsymbol{V}_{\mathcal{S}}\boldsymbol{\phi}_{2i-1}\right)^T\boldsymbol{z}_i^\prime}
\nonumber \\
= & ~ \frac{\sigma\sqrt{2}}{m}\sum_{i=1}^{m}{g_{\boldsymbol{\phi_{2i}}, \boldsymbol{\phi_{2i-1}}}\left(z_i^{\prime1}, z_i^{\prime2},\cdots, z_i^{\prime m}\right)},
\label{lower bound for 3rd part}
\end{align}
where in the above equations we have:
 \begin{equation}
  \mathbb{E}\left(g_{\boldsymbol{\phi_{2i}}, \boldsymbol{\phi_{2i-1}}}\left(z_i^{\prime1}, z_i^{\prime2},\cdots, z_i^{\prime m}\right)\right) 
   = \mathbb{E}\left(\boldsymbol{V}_{\mathcal{S}}\boldsymbol{\phi}_{2i} - \boldsymbol{V}_{\mathcal{S}}\boldsymbol{\phi}_{2i-1}\right)^T\mathbb{E}\left(\boldsymbol{z}_i^\prime\right) = 0.
 \end{equation}
To find a lower bound for $\mathrm{III}$ we try to bound $g_{\boldsymbol{\phi_{2i}}, \boldsymbol{\phi_{2i-1}}}\left(z_i^{\prime1}, z_i^{\prime2},\cdots, z_i^{\prime m}\right)$ from below. To do so, we write the concentration inequality for this function as follow:
\begin{align}
\mathbb{P}\left(g_{\boldsymbol{\phi_{2i}}, \boldsymbol{\phi_{2i-1}}}\left(z_i^{\prime1}, z_i^{\prime2},\cdots, z_i^{\prime m}\right) \geq \epsilon\right)
\leq & ~ \frac{\mathbb{E}_{\boldsymbol{\phi_{2i}}, \boldsymbol{\phi_{2i-1}}, \boldsymbol{z}_i^\prime}\left[e^{\lambda g_{\boldsymbol{\phi_{2i}}, \boldsymbol{\phi_{2i-1}}}\left(z_i^{\prime1}, z_i^{\prime2},\cdots, z_i^{\prime m}\right) }\right]}{e^{\lambda\epsilon}}
\nonumber \\
\leq & ~ \frac{\mathbb{E}_{\boldsymbol{\phi_{2i}}, \boldsymbol{\phi_{2i-1}}, \boldsymbol{z}_i^\prime}\left[e^{\lambda g_{\boldsymbol{\phi_{2i}}, \boldsymbol{\phi_{2i-1}}}\left(z_i^{\prime1}, z_i^{\prime2},\cdots, z_i^{\prime m}\right) }\right]}{e^{\lambda\epsilon}}
\nonumber \\
= & ~ \frac{\mathbb{E}_{\boldsymbol{\phi_{2i}}, \boldsymbol{\phi_{2i-1}}}\left[\mathbb{E}_{\boldsymbol{z}_i^\prime}\left[e^{\lambda g_{\boldsymbol{\phi_{2i}}, \boldsymbol{\phi_{2i-1}}}\left(z_i^{\prime1}, z_i^{\prime2},\cdots, z_i^{\prime m}\right) }\right]\right]}{e^{\lambda\epsilon}},
\label{concentration for g}
\end{align}
where $g_{\boldsymbol{\phi_{2i}}, \boldsymbol{\phi_{2i-1}}}\left(\boldsymbol{z}\right)$ is a lipschitz function with respect to $\boldsymbol{z}^\prime$:
\begin{align}
\vert g_{\boldsymbol{\phi_{2i}}, \boldsymbol{\phi_{2i-1}}}\left(\boldsymbol{z}_1\right) - g_{\boldsymbol{\phi_{2i}}, \boldsymbol{\phi_{2i-1}}}\left(\boldsymbol{z}_2\right)\vert
\leq& \quad \|\boldsymbol{V}_{\mathcal{S}}\boldsymbol{\phi}_{2i} - \boldsymbol{V}_{\mathcal{S}}\boldsymbol{\phi}_{2i-1}\|_2\|\boldsymbol{z}_1 - \boldsymbol{z}_2\|_2
\nonumber \\
\leq& \quad 2d_\mathcal{S}\|\boldsymbol{z}_1 - \boldsymbol{z}_2\|_2.
\end{align}
From Lemma (2.27) in \citep{wainwright2019high} we know that any $L$-lipschitz function of a Gaussian R.V. is sub-Gaussian. As a result, we have
\begin{align}
\mathbb{E}_{\boldsymbol{z}_i^\prime}\left[e^{\lambda g_{\boldsymbol{\phi_{2i}}, \boldsymbol{\phi_{2i-1}}}\left(z_i^{\prime1}, z_i^{\prime2},\cdots, z_i^{\prime m}\right) }\right]
\leq & \quad e^{\frac{4\pi^2d_{\mathcal{S}}^2\lambda^2}{8}}.
\label{upperbound for mgf of L-lipschitz}
\end{align}
From inequalities \ref{upperbound for mgf of L-lipschitz}, \ref{concentration for g} and equation \ref{lower bound for 3rd part} we have:
\begin{equation}
\mathbb{P}\left(\frac{\sigma\sqrt{2}}{2m}\sum_{i=1}^{m}{g_{\boldsymbol{\phi_{2i}}, \boldsymbol{\phi_{2i-1}}}\left(\boldsymbol{z}_i\right)} \leq -\epsilon\right)
\leq \min_{\lambda \geq 0} \frac{\mathbb{E}_{\boldsymbol{\phi_{1}},\cdots \boldsymbol{\phi_{2m}}}\left[e^{\frac{\lambda^2 \pi^2 d_{\mathcal{S}}^2\sigma^2}{2m}\cdot}\right]}{e^{\lambda\epsilon}}
\leq e^{-\frac{m\epsilon^2}{2\pi^2d_{\mathcal{S}}^2\sigma^2}}.
\end{equation}
Then if we have $2m \geq 2000(K+1)(K+2)\log{\frac{6}{\delta}}$, i.i.d samples, then with probability at least $1-\frac{\delta}{6}$ we have:
\begin{equation}
\frac{1}{2m}\sum_{i=1}^{m}{\left(\boldsymbol{V}_{\mathcal{S}}\boldsymbol{\phi}_{2i} - \boldsymbol{V}_{\mathcal{S}}\boldsymbol{\phi}_{2i-1}\right)^T\left(\boldsymbol{z}_{2i} - \boldsymbol{z}_{2i-1}\right)} \geq -\frac{\pi\sigma d_{\mathcal{S}}}{20\sqrt{(K+2)(K+1)}} .
\label{lower bound for inner product}
\end{equation}
Now from inequalities  \ref{Lowebound on emperical diameter},\ref{upper bound subexponential}, \ref{lower bound for inner product} and equation \ref{Lower bound for statistic} we have: 
\begin{align}
\mathrm{D} 
\geq &  \frac{d_{\mathcal{S}}^2}{8(K+1)(K+2)} + (K-2) \sigma^2 - \frac{\pi\sigma d_{\mathcal{S}}}{20\sqrt{(K+2)(K+1)}}
\nonumber \\
= &  \frac{d_{\mathcal{S}}^2}{8(K+1)(K+2)} + \left(\sigma\sqrt{K-2} - \frac{d_{\mathcal{S}}\sqrt{\pi}}{40\sqrt{\left(K+1\right)\left(K+2\right)\left(K-2\right)}}\right)^2 -
\nonumber \\ &
\frac{d_{\mathcal{S}}^2\pi}{1600\left(K+1\right)\left(K+2\right)\left(K-2\right)} 
\label{upper bound for noise varaince}
\\ 
=& (K-2) \sigma^2 + \left(\frac{\pi\sigma}{10\sqrt{2}} - \frac{d_{\mathcal{S}}}{2\sqrt{2}\sqrt{(K+1)(K+2)}}\right)^2 - \frac{\pi^2\sigma^2}{200} .
\label{upper bound for maximum singular value}
\end{align}
From the inequalities \ref{upper bound for maximum singular value} and \ref{upper bound for noise varaince} it can be seen that if we have $2m \geq 2000(K+1)(K+2)\log{\frac{6}{\delta}}$ then the following inequalities with probability at least $1-\delta/2$ give upper-bounds for the maximum distance between two points in the simplex $\mathcal{S}$, and the noise variance $\sigma$:
\begin{align}
d_{\mathcal{S}} \leq & 4\sqrt{(K+1)(K+2)D} = R.
\label{Radius}
\\
\sigma^2 \leq & \frac{D}{K-3} = R_n
\label{noise radius}
\end{align}

In the same way we could also give an upper bound for $\mathrm{D}$. The following inequalities hold with probability more than $1- \delta/2$:
\begin{align}
\mathrm{D}
\leq &  \frac{d_{\mathrm{S}}^2}{K+2} + \sigma^2(K+2) +\frac{\pi\sigma d_{\mathcal{S}}}{20\sqrt{(K+2)(K+1)}} 
\nonumber \\
\leq & \left(\frac{d_{\mathrm{S}}}{\sqrt{K+2}} + \sigma\sqrt{K+2}\right)^2
\nonumber \\
 = & \frac{d_{\mathrm{S}}^2}{K+2}\left(1+\frac{K+2}{d_{\mathrm{S}}/\sigma}\right)^2
\label{upperbound for empirical diameter}
\\
= & \sigma^2\left(K+2\right)\left(1+\frac{d_{\mathrm{S}}/\sigma}{K+2}\right)^2.
\label{upperbound for empirical noise variance}
\end{align}
Based on equation \ref{Radius} and inequations in \ref{upperbound for empirical diameter} and \ref{upperbound for empirical noise variance}, we can give an upper bound for the radius $R$ in \ref{Radius} and noise radius $R_n$. The following inequality holds with probability at least $1-\delta$:
\begin{align}
R \leq & 4\sqrt{K+1}\left(1+\frac{K+2}{d_{\mathrm{S}}/\sigma}\right)d_{\mathrm{S}}
\label{final radius of containing sphere}
\\
R_n \leq & \frac{K+2}{K-3}\left(1+\frac{d_{\mathrm{S}}/\sigma}{K+2}\right)\sigma.
\label{final upperbound of noise variance}
\end{align}

Now the only thing we should do is to find a point inside of the simplex. To do this we define a new statistic $\mathrm{\boldsymbol{p}}$ as follows:
\begin{align}
\mathrm{\boldsymbol{p}} 
&= \frac{1}{2m}\sum_{i=1}^{2m}{\boldsymbol{x}_i} 
= \frac{1}{2m}\sum_{i=1}^{2m}{\boldsymbol{V}_{\mathcal{S}}\boldsymbol{\phi}_{i}+\boldsymbol{z}_i} 
= \boldsymbol{V}_{\mathcal{S}}\left( \frac{1}{2m}\sum_{i=1}^{2m}{\boldsymbol{\phi}_{i}}\right)+\frac{1}{2m}\sum_{i=1}^{2m}{\boldsymbol{z}_i} 
 = \mathrm{I} + \mathrm{II}
\label{center point statistic}
\end{align}
It is clear that `'I'' is placed inside of the true simplex. So the distance between $\mathrm{\boldsymbol{p}} $ and a point inside the main simplex can be calculated as follows:
\begin{align}
\min_{\boldsymbol{x} \in \mathcal{S}}{\left\Vert\mathrm{\boldsymbol{p}} -\boldsymbol{x}\right\Vert_2}
& \leq \left\Vert \frac{1}{2m}\sum_{i=1}^{2m}{\boldsymbol{z}_i} \right\Vert_2.
\end{align}

In the above equation $\frac{1}{2m}\sum_{i=1}^{2m}{\boldsymbol{z}_i}$, is a zero mean Gaussian random vector with covariance matrix $\frac{\sigma^2}{2m}\boldsymbol{I}$, therefore $ \| \frac{1}{2m}\sum_{i=1}^{2m}{\boldsymbol{z}_i} \|_2^2$ is a chi-squared random variable with $K$ degrees of freedom, and so we can write concentration inequality for it:
\begin{align}
\mathrm{P}\left[\left\vert \left\Vert \frac{1}{2m}\sum_{i=1}^{2m}{\boldsymbol{z}_i} \right\Vert_2^2 - \frac{K\sigma^2}{2m}\right\vert \geq \epsilon \right] \leq 2e^{-\frac{m^2\epsilon^2}{2K\sigma^4}}.
\end{align}
Then with probability at least $1-\delta$ we have the following inequality:
\begin{align}
\min_{\boldsymbol{x} \in \mathcal{S}}{\|\mathrm{\boldsymbol{p}} -\boldsymbol{x}\|_2^2}
\leq \frac{K\sigma^2}{2m} + \frac{\sigma^2}{m}\sqrt{2K\log{\frac{2}{\delta}}}
% \nonumber \\
\leq \frac{K\sigma^2}{m}\log{\frac{2}{\delta}} 
\end{align}
Now if we have $2m \geq 2000(K+1)(K+2)\log{\frac{6}{\delta}}$, then we can rewrite the above inequality as follows:
\begin{align}
\min_{\boldsymbol{x} \in \mathcal{S}}{\|\boldsymbol{p} -\boldsymbol{x}\|_2}
\leq \frac{\sigma}{10\sqrt{10}\sqrt{K+2}}.
\end{align}
As we mentioned earlier, the first part in the right hand side of the inequality \ref{center point statistic}  , $\frac{1}{2m}\sum_{i=1}^{2m}{\boldsymbol{V}_{\mathcal{S}}\boldsymbol{\phi}_{i}}$, belongs to the interior of the simplex. Then a $K$-dimensional sphere with radius $R$ and centered at this point, with probability at least $1 - \delta$ contains the simplex $\mathcal{S}$. However, in the process of learning we do not have access to the noiseless data and thus cannot have such a point as the center of the sphere. Let us show the distance between $\boldsymbol{p}$ and $\frac{1}{2m}\sum_{i=1}^{2m}{\boldsymbol{V}_{\mathcal{S}}}$ with $d$. Then, it is clear to see that the $K$-dimensional sphere with radius $R+d$ and center point $\boldsymbol{p}$, contains the sphere with center point at $\frac{1}{2m}\sum_{i=1}^{2m}{\boldsymbol{V}_{\mathcal{S}}\boldsymbol{\phi}_{i}}$ and radius $R$. So any simplex which is placed in sphere $\mathrm{C}^{K}(\frac{1}{2m}\sum_{i=1}^{2m}{\boldsymbol{V}_{\mathcal{S}}\boldsymbol{\phi}_{i}}, R)$ is also placed in sphere $\mathrm{C}^{K}(\boldsymbol{p}, R+d)$.

So we conclude that if we have $2m \geq 2000(K+1)(K+2)\log{\frac{6}{\delta}}$ i.i.d. samples from $\mathbb{G}_{\mathcal{S}}$ then the main simplex $\mathcal{S}$ with probability more than or equal to $1- \delta$ will be confined in a $K$-dimensional sphere with center point at $\boldsymbol{p}$ and with radius $R$:
\begin{align}
R = ~& 4\sqrt{(K+1)(K+2)D} + \frac{\sqrt{D}}{40\sqrt{K+2}} 
\nonumber\\
\leq &~ 4\sqrt{(K+1)(K+2)D}\left(1+\frac{1}{160(K+2)\sqrt{K+1}}\right)
\nonumber \\
\leq &~ 8\sqrt{(K+1)(K+2)D}.
\label{final upperbound}
\end{align}
This completes the proof.
\end{proof}

%%%%%%%%%%%%%%%%%%%%%%%%%%%%%%%%%%%%%%%%%%%%%%%%%%%%%%%%%%%%%%%%%%%%%%%%%%%%%%%%%%%%%%%%%%%%%%%%%%%%%%%%%%%%%%%%%%%%%%%%%%%%%%%%%%%%%%%%%%%%%%%%%%%%%%%%%%%%%%%%%%%%%%%%%%%%%%%%%%%%%%%%%%%%%%%%%%%%%%%%%%%%%%%%%%%%%%%%%%%%%%%%%%%%%%%%%%%%%%%%%%%%%%%%%%%%%%%%%%%%%%%%%%%%%%%%%%%%%%%%%%%%%%%%%%%%%%%%%%%%%%%%%%%%%%%%%%%%%%%%%%%%%%%%%%%%%%%%%%%%%%%%%%%%%%%%%%%%%%%%%%%%%%%%%%%%%%%%%%%%%%%%%%%%%%%%%%%%%%%%%%%%%%%%%%%%%%%%%%%%%%%%%%%%%%%%%%%%%%%%%%%%%%%%%%%%%%%%%%%%%%%%%%%%%%%%%%%%%%%%%%%%%%%%%%%%%%%%%%%%%%%%%%%%%%%%%%%%%%%%%%%%%%%%%%%%%%%%%%%%%%%%%%%%%%%%%%%%%%%%%%%%%%%%%%%%%%%%%%%%%%%%%%%%%%%%%%%%%%%%%%%%%%%%%%%%%%%%%%%%%%%%%%%%%%%%%%%%%%%%%%%%%%%%%%%%%%%%%%%%%%%%%%%%%%%%%%%%%%%%%%%%%%%%%%%%%%%%%%%%%%%%%%%%%%%%%%%%%%%%%%%%%%%%%%%%%%%%%%%%%%%%%%%%%%%%%%%%%%%%%%%%%%%%%%%%%%%%%%%%%%%%%%%%%%%%%%%%%%%%%%%%%%%%%%%%%%%%%%%%%%%%%%%%%%%%%%%%%%%%%%%%%%%%%%%%%%%%%%%%%%%%%%%%%%%%%%%%%%%%%%%%%%%%%%%%%%%%%%%%%%%%%%%%%%%%%%%%%%%%%%%%%%%%%%%%%%%%%%%%%%%%%%%%%%%%%%%%%%%%%%%%%%%%%%%%%%%%%%%%%%%%%%%%%%%%%%%%%%%%%%%%%%%%%%%%%%%%%%%%
\vspace*{3mm}
\begin{proof}[proof of Lemma \ref{quantization lemma}]
Consider a $\left(\underline{\theta},\bar{\theta}\right)$-isoperimetric $K$-simplex $\mathcal{S}$, which is bounded in a sphere $\mathrm{C}^{K}(\boldsymbol{p}, R)$. It is clear that all vertices of this simplex $\left\{\boldsymbol{v}_1,\ldots,\boldsymbol{v}_{K+1}\right\}$ are placed in $\mathrm{C}^{K}(\boldsymbol{p}, R)$. From the definition of $\mathrm{T}_{\frac{\alpha\epsilon}{K+1}}(\mathrm{C}^{K}(\boldsymbol{p}, R))$, we know that for each vertex of the simplex $\boldsymbol{v}_i$, there exists some $\boldsymbol{v}_i^{\prime} \in \mathrm{T}_{\frac{\alpha\epsilon}{K+1}}(\mathrm{C}^{K}(\boldsymbol{p}, R))$ such that $\|\boldsymbol{v}_i - \boldsymbol{v}_i^{\prime}\|_2 \leq \frac{\alpha\epsilon}{K+1}$. Assume for each vertex $\boldsymbol{v}_i$ of $\mathcal{S}$, we denote its closest point in $ \mathrm{T}_{\frac{\alpha\epsilon}{K+1}}(\mathrm{C}^{K}(\boldsymbol{p}, R))$ as
\begin{equation*}
\hat{\boldsymbol{v}}_i = \argmin \left\{\|\boldsymbol{v}_i - \hat{\boldsymbol{v}}\|_2  \bigg\vert ~\hat{\boldsymbol{v}} \in  \mathrm{T}_{\frac{\alpha\epsilon}{K+1}}(\mathrm{C}^{K}(\boldsymbol{p}, R))\right\}.
\end{equation*}
Using these points we make a new simplex $\widehat{\mathcal{S}}$. It is clear that $\widehat{\mathcal{S}}$ belongs to $\widehat{\mathbb{S}}\left(\mathrm{C}^{K}(\boldsymbol{p}, R)\right)$.
Assume $f_{\mathcal{S}}$ and $f_{\hat{\mathcal{S}}}$ denote the probability density functions that correspond to $\mathcal{S}$ and $\widehat{\mathcal{S}}$, respectively. Then, the TV-distance between $f_{\mathcal{S}}$ and $f_{\widehat{\mathcal{S}}}$ can be written and bounded as
\begin{align}
\mathrm{TV}\left(\mathbb{P}_{\mathcal{S}}, \mathbb{P}_{\widehat{\mathcal{S}}}\right) 
=&~ \sup_{A}~{\mathbb{P}_{\mathcal{S}}\left(A\right)- \mathbb{P}_{\widehat{\mathcal{S}}}\left(A\right)}
\nonumber \\
=& \int_{\boldsymbol{x} \in \{x^\prime:f_{\mathcal{S}}\left(\boldsymbol{x}^\prime\right) \ge f_{\widehat{\mathcal{S}}}\left(\boldsymbol{x}^\prime\right)\}}{f_{\mathcal{S}}\left(\boldsymbol{x}\right) - f_{\widehat{\mathcal{S}}}\left(\boldsymbol{x}\right)}
    \nonumber \\ 
    =& \int_{\boldsymbol{x} \in \mathcal{S}-\widehat{\mathcal{S}}}{f_{\mathcal{S}}\left(\boldsymbol{x}\right) - f_{\widehat{\mathcal{S}}}\left(\boldsymbol{x}\right)} + \left(\int_{\boldsymbol{x} \in \mathcal{S}\cap\widehat{\mathcal{S}}}{f_{\mathcal{S}}\left(\boldsymbol{x}\right) - f_{\widehat{\mathcal{S}}}\left(\boldsymbol{x}\right)}\right)\boldsymbol{\mathrm{1}}\left(\mathrm{Vol}\left(\mathcal{S}\right) \leq \mathrm{Vol}\left(\widehat{\mathcal{S}}\right)\right)
    \nonumber \\
    \leq& \int_{\boldsymbol{x} \in \mathcal{S}-\widehat{\mathcal{S}}}{f_{\mathcal{S}}\left(\boldsymbol{x}\right) - f_{\widehat{\mathcal{S}}}\left(\boldsymbol{x}\right)} + \int_{\boldsymbol{x} \in \mathcal{S}\cap\widehat{\mathcal{S}}}{\bigg\vert f_{\mathcal{S}}\left(\boldsymbol{x}\right) - f_{\widehat{\mathcal{S}}}\left(\boldsymbol{x}\right)\bigg\vert}
    \nonumber \\
    =& \frac{\mathrm{Vol}\left(\mathcal{S-\widehat{S}}\right)}{\mathrm{Vol}\left(S\right)} + \mathrm{Vol}\left( \mathcal{S} \cap \widehat{\mathcal{S}} \right)\bigg\vert\frac{1}{\mathrm{Vol}(\mathcal{S})} - \frac{1}{\mathrm{Vol}(\widehat{\mathcal{S}})}\bigg\vert,
    \label{upper bound for tv after quantization}
\end{align}
where $\mathcal{S} - \widehat{\mathcal{S}}$ shows the set difference between $\mathcal{S}$ and $\widehat{\mathcal{S}}$. Let us denote the two terms in the r.h.s. of \ref{upper bound for tv after quantization} as $\mathrm{I}$ and $\mathrm{II}$, respectively. To find an upper-bound for the $\mathrm{TV}$-distance in \ref{upper bound for tv after quantization} we  find respective upper-bounds for $\mathrm{I}$ and $\mathrm{II}$. In order to do so, first let us discuss about $\mathrm{I}$:
\begin{align}
\mathrm{I} = \frac{\mathrm{Vol}\left(\mathcal{S-\widehat{S}}\right)}{\mathrm{Vol}\left(S\right)}. 
\end{align}
The distance between vertices of $\mathcal{S}$ and $\widehat{\mathcal{S}}$ are less than $\frac{\alpha\epsilon}{K+1}$, then the maximal difference set (in terms of volume) between $\mathcal{S}$ and $\widehat{\mathcal{S}}$ can be bounded as follows: simplex has $K+1$ facets, and the difference set that can occur from altering each of them is upper-bounded as $\leq \mathcal{A}_i\left(\mathcal{S}\right)\times \alpha\epsilon/(K+1)$, where $i=1,\ldots,K+1$ denotes the facet index. This way, for $\mathrm{I}$ we have:
\begin{align}
\mathrm{I} 
\leq~\sum_{i=1}^{K+1}{\frac{\alpha\epsilon}{K+1}\frac{\mathcal{A}_i\left(\mathcal{S}\right)}{\mathrm{Vol}\left(\mathcal{S}\right)}}
% \nonumber \\
\leq~\alpha\epsilon\mathcal{A}_{max}\left(\mathcal{S}\right)
% \nonumber \\ 
\leq\alpha\epsilon\bar{\theta}\mathrm{Vol}^{\frac{1}{K}},
\label{upper bound for tv after quantization part I}
\end{align}
where $\mathcal{A}_{max}\left(\mathcal{S}\right)$ is the volume of the largest facet of $\mathcal{S}$. In the final inequality of \ref{upper bound for tv after quantization part I}, we take advantage of the isoperimetricity property of $\mathcal{S}$. Next, we should find an upper-bound for the second term in the r.h.s. of \ref{upper bound for tv after quantization}, i.e., $\mathrm{II}$:
\begin{equation}
\mathrm{II} = \mathrm{Vol}\left( \mathcal{S} \cap \widehat{\mathcal{S}} \right)\bigg\vert\frac{1}{\mathrm{Vol}(\mathcal{S})} - \frac{1}{\mathrm{Vol}(\widehat{\mathcal{S}})}\bigg\vert.
\end{equation}
To find an upper-bound for $\mathrm{II}$ we should find an upper-bound for $\mathrm{Vol}\left(\widehat{\mathcal{S}}\right)$. To do so, we create a ``rounded" simplex $\mathcal{S}^r$ which forms by adding a $K$-dimensional sphere with radius $r = \frac{\alpha\epsilon}{K+1}$ to the original simplex $\mathcal{S}$. It can be shown that $\widehat{\mathcal{S}}$ is definitely placed inside $\mathcal{S}^r$ and therefore $\mathrm{Vol}(\hat{\mathcal{S}}) \leq \mathrm{Vol}(\mathcal{S}^r)$. In this regard, we have
\begin{align}
\mathrm{II} 
=&~ \mathrm{Vol}\left( \mathcal{S} \cap \widehat{\mathcal{S}} \right)\bigg\vert\frac{1}{\mathrm{Vol}(\mathcal{S})} - \frac{1}{\mathrm{Vol}(\hat{\mathcal{S}})}\bigg\vert
\nonumber \\
\leq&~ \bigg\vert\frac{\mathrm{Vol}(\hat{\mathcal{S}}) - \mathrm{Vol}(\mathcal{S})}{\mathrm{Vol}(\mathcal{S})}\bigg\vert
\nonumber \\
\leq&~ \bigg\vert\frac{\mathrm{Vol}(\mathcal{S}^r) - \mathrm{Vol}(\mathcal{S})}{\mathrm{Vol}(\mathcal{S})}\bigg\vert
\nonumber \\
\leq&~ \left(\sum_{i=1}^{K+1}{\frac{\alpha\epsilon}{K+1}\mathcal{A}_i\left(\mathcal{S}\right)} + (K+1)\mathcal{C}_{\alpha}\left(\frac{\alpha\epsilon}{K+1}\right)^K\right)\frac{1}{\mathrm{Vol}(\mathcal{S})}
\nonumber \\
\leq&~ \left((K+1)\frac{\alpha\epsilon}{K+1}\mathcal{A}_{\max}\left(\mathcal{S}\right) + (K+1)\mathcal{C}_{\alpha}\left(\frac{\alpha\epsilon}{K+1}\right)^K\right)\frac{1}{\mathrm{Vol}(\mathcal{S})}
\nonumber \\
\leq&~ 2\alpha\epsilon\bar{\theta}\mathrm{Vol}^{\frac{1}{K}},
\label{upper bound for tv after quantization part II}
\end{align}
where $\mathcal{C}_{\alpha}\left(\frac{\alpha\epsilon}{K+1}\right)^K$ is the volume of a $K$-dimensional sphere with radius $r = \frac{\alpha\epsilon}{K+1}$. Now, using \ref{upper bound for tv after quantization part I} and \ref{upper bound for tv after quantization part II} we have
\begin{align}
\mathrm{TV}\left(\mathbb{P}_{\mathcal{S}}, \mathbb{P}_{\widehat{\mathcal{S}}}\right)
\leq 3\alpha\epsilon\bar{\theta}\mathrm{Vol}^{\frac{1}{K}} \leq \frac{3}{5}\epsilon \leq \epsilon.
\label{upper bound for tv after quantization final}
\end{align}
From \ref{upper bound for tv after quantization final}, it can be seen that for any $\left(\underline{\theta},\bar{\theta}\right)$-isoperimetric simplex $\mathcal{S}$ which is bounded in a $K$-dimensional sphere $\mathrm{C}^{K}(\boldsymbol{p}, R)$, there exists at least one simplex in $\widehat{\mathbb{S}}(\mathrm{C}^{K}(\boldsymbol{p}, R))$ within a TV-distance of $\epsilon$ from $\mathcal{S}$. Thus the proof is complete.

In the end, it is worth mentioning one possible method to build an $\epsilon$-covering set for a $K$-dimensional sphere $\mathrm{C}^{K}(\boldsymbol{p}, R)$. Although there exists deterministic ways to build this set, an alternative approach is by uniformly sampling points from the sphere. Consider an $\epsilon/2$-packing set with size $L$ for the sphere. Based on the result of the ``coupon collector problem" we know that having $L\log(L)$ uniform samples from the sphere guarantees with high probability that there exists at least one sample within a distance of $\epsilon/2$ from each element of the packing set. Therefore, the resulting samples are an $\epsilon$-covering set for the sphere. Hence, the cardinality of an $\epsilon$-covering set built in this way is at most $\left(1+\frac{4R}{\epsilon}\right)^{2K}$.
\end{proof}

%%%%%%%%%%%%%%%%%%%%%%%%%%%%%%%%%%%%%%%%%%%%%%%%%%%%%%%%%%%%%%%%%%%%%%%%%%%%%%%%%%%%%%%%%%%%%%%%%%%%%%%%%%%%%%%%%%%%%%%%%%%%%%%%%%%%%%%%%%%%%%%%%%%%%%%%%%%%%%%%%%%%%%%%%%%%%%%%%%%%%%%%%%%%%%%%%%%%%%%%%%%%%%%%%%%%%%%%%%%%%%%%%%%%%%%%%%%%%%%%%%%%%%%%%%%%%%%%%%%%%%%%%%%%%%%%%%%%%%%%%%%%%%%%%%%%%%%%%%%%%%%%%%%%%%%%%%%%%%%%%%%%%%%%%%%%%%%%%%%%%%%%%%%%%%%%%%%%%%%%%%%%%%%%%%%%%%%%%%%%%%%%%%%%%%%%%%%%%%%%%%%%%%%%%%%%%%%%%%%%%%%%%%%%%%%%%%%%%%%%%%%%%%%%%%%%%%%%%%%%%%%%%%%%%%%%%%%%%%%%%%%%%%%%%%%%%%%%%%%%%%%%%%%%%%%%%%%%%%%%%%%%%%%%%%%%%%%%%%%%%%%%%%%%%%%%%%%%%%%%%%%%%%%%%%%%%%%%%%%%%%%%%%%%%%%%%%%%%%%%%%%%%%%%%%%%%%%%%%%%%%%%%%%%%%%%%%%%%%%%%%%%%%%%%%%%%%%%%%%%%%%%%%%%%%%%%%%%%%%%%%%%%%%%%%%%%%%%%%%%%%%%%%%%%%%%%%%%%%%%%%%%%%%%%%%%%%%%%%%%%%%%%%%%%%%%%%%%%%%%%%%%%%%%%%%%%%%%%%%%%%%%%%%%%%%%%%%%%%%%%%%%%%%%%%%%%%%%%%%%%%%%%%%%%%%%%%%%%%%%%%%%%%%%%%%%%%%%%%%%%%%%%%%%%%%%%%%%%%%%%%%%%%%%%%%%%%%%%%%%%%%%%%%%%%%%%%%%%%%%%%%%%%%%%%%%%%%%%%%%%%%%%%%%%%%%%%%%%%%%%%%%%%%%%%%%%%%%%%%%%%%%%%%%%%%%%%%%%%%%%%%%%%%%%%%%%%%%%%%%%%%%%%%%%%%%%%%%%%%%%%%%%%%%%%%%%%%%%%%%%%%%%%%%%%%%%%%%%%%%

\section{Recoverability From Noise}
\label{sec:app:noise}

Assume $\mathcal{S}_1,\mathcal{S}_2\in\mathbb{S}_K$ represent two arbitrary simplices in $\mathbb{R}^K$. In this regard, let $\mathbb{P}_{\mathcal{S}_1}$ and $\mathbb{P}_{\mathcal{S}_2}$ denote the probability measures, and $f_{\mathcal{S}_1}$ and $f_{\mathcal{S}_2}$ represent the probability density functions associated to $\mathcal{S}_1$ and $\mathcal{S}_2$, respectively. Let us assume that $\mathcal{S}_1,\mathcal{S}_2$ have a minimum degree of geometric regularity in the following sense.

\begin{definition}
For a simplex $\mathcal{S}\in\mathbb{S}_K$ with vertices $\boldsymbol{\theta}_0,\ldots,\boldsymbol{\theta}_{K}\in\mathbb{R}^K$, we say $\mathcal{S}$ is $\left(\bar{\lambda},\underline{\lambda}\right)$-regular if
$$
\underline{\lambda}
\leq
\lambda_{\min}\left(\boldsymbol{\Theta}\right)
\leq
\lambda_{\max}\left(\boldsymbol{\Theta}\right)
\leq
\bar{\lambda},
$$
where $\lambda_{\max}\left(\cdot\right)$ and $\lambda_{\min}\left(\cdot\right)$ denotes the largest and smallest eigenvalues of a matrix, respectively. Here, $\boldsymbol{\Theta}$ represents the zero-centerd vertex matrix of $\mathcal{S}$, i.e.,
$$
\boldsymbol{\Theta}\triangleq
\left[
\boldsymbol{\theta}_1-\boldsymbol{\theta}_0
\vert
\cdots
\vert
\boldsymbol{\theta}_K-\boldsymbol{\theta}_0
\right].
$$
\end{definition}
This definition has tight connections to the previously used notion of $\left(\bar{\theta},\underline{\theta}\right)$-isoperimetricity which is already used in the main body of the manuscript. In fact, it can be shown that $\mathcal{L}_{\max}=\mathcal{O}\left(\Bar{\lambda}\right)$ and vice versa.

%%%%%%%%%%%%%%%%%%%%%%%%%%%%%%%%%%%%%%%%%%%%%%%%%%%%%%%%%%%%%%%%%%%%%%%%%%%%%%%%%%%%%%%%%%%%%%%%%%%%%%%%%%%%%%%%%%%%%%%%%%%%%%%%%%%%%%%%%%%%%%%%%%%%%%%%%%%%%%%%%%%%%%%%%%%%%%%%%%%%%%%%%%%%%%%%%%%%%%%%%%%%%%%%%%%%%%%%%%%%%%%%%%%%%%%%%%%%%%%%%%%%%%%%%%%%%%%%%%%%

Our aim is to show that if the noisy versions of $\mathcal{S}_1$ and $\mathcal{S}_2$, which we denote by $f_{\mathcal{S}_1}*G_{\sigma}$ and $f_{\mathcal{S}_2}*G_{\sigma}$, respectively, have a maximum total variation distance of at least $\epsilon>0$, then the TV distance between $\mathbb{P}_{\mathcal{S}_1}$ and $\mathbb{P}_{\mathcal{S}_2}$ is also bounded away from zero according to a function of $\epsilon,\sigma$ and the geometric reqularity of simplices $\mathcal{S}_1$ and $\mathcal{S}_2$. The theoretical core behind our method is stated in the following general theorem.

\begin{theorem}[Recovery of Low-Frequency Objects from Additive Noise]
\label{thm:generalResultTheorem-Appendix}
For $K\in\mathbb{N}$, consider a probability density function family $\mathscr{F}\subseteq\mathcal{M}\left(\mathbb{R}^K\right)$, i.e., a subset of distributions supported over $\mathbb{R}^K$. Assume for sufficiently large $\alpha>0$, the following bound holds for all $f,g\in\mathscr{F}$:
\begin{equation*}
\frac{1}{\left(2\pi\right)^K}
\int_{\left\Vert\boldsymbol{\omega}\right\Vert_{\infty}\ge\alpha}
\left\vert
\mathcal{F}\left\{f\right\}\left(\boldsymbol{\omega}\right)
-
\mathcal{F}\left\{g\right\}\left(\boldsymbol{\omega}\right)
\right\vert^2
\leq
\zeta\left(\alpha^{-1}\right)
\int_{\mathbb{R}^K}
\left\vert
f-g
\right\vert^2,
\end{equation*}
where $\mathcal{F}\left\{\cdot\right\}$ denotes the Fourier transform, and $\zeta$ is an increasing fuction with $\zeta\left(0\right)=0$ and continuity at $0$. Also, assume the probability density function $Q\in\mathcal{M}\left(\mathbb{R}^K\right)$ again for a sufficiently large $\alpha>0$ has the following property:
\begin{equation*}
\inf_{\left\Vert\boldsymbol{\omega}\right\Vert_{\infty}\leq\alpha}
\left\Vert
\mathcal{F}\left\{Q\right\}\left(\boldsymbol{\omega}\right)
\right\Vert
\ge \eta\left(\alpha\right),
\end{equation*}
where $\eta\left(\cdot\right)$ is a non-negative decreasing function. Then, there exists a non-negative constant $C$ where for any $\sigma,\varepsilon>0$ and $f,g\in\mathscr{F}$ with
$
\left\Vert
f-g
\right\Vert_2
\geq\varepsilon,
$
we have
$$
\left\Vert
\left(f-g\right)*Q
\right\Vert_2
\geq
\frac{\varepsilon}{\left(2\pi\right)^K}
\left(
\sup_{\alpha>C}~
\eta\left(\alpha\right)
\sqrt{1-\zeta\left(\alpha^{-1}\right)}
\right),
$$
with $*$ denoting the multi-dimensional convolution operator.
\end{theorem}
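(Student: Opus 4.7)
The plan is to pass to the Fourier domain and combine two facts: Plancherel's identity equates $L^2$ norms on both sides, and convolution with $Q$ becomes pointwise multiplication by $\mathcal{F}\{Q\}$. The hypothesis on $\mathscr{F}$ says that $\mathcal{F}\{f-g\}$ is essentially low-frequency, while the hypothesis on $Q$ says that $\mathcal{F}\{Q\}$ is bounded below on low frequencies, so most of the energy of $f-g$ must survive convolution with $Q$.

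Step one is to quantify how much of $\|f-g\|_2^2$ lives in the low-frequency cube $\{\|\boldsymbol{\omega}\|_\infty \leq \alpha\}$. By Plancherel,
$$
\|f-g\|_2^2 = \frac{1}{(2\pi)^K}\int_{\mathbb{R}^K} \left|\mathcal{F}\{f-g\}(\boldsymbol{\omega})\right|^2 d\boldsymbol{\omega},
$$
and the assumed tail bound gives
$$
\frac{1}{(2\pi)^K}\int_{\|\boldsymbol{\omega}\|_\infty \geq \alpha} \left|\mathcal{F}\{f-g\}(\boldsymbol{\omega})\right|^2 d\boldsymbol{\omega} \leq \zeta(\alpha^{-1}) \|f-g\|_2^2.
$$
Subtracting these two displays yields
$$
\frac{1}{(2\pi)^K}\int_{\|\boldsymbol{\omega}\|_\infty < \alpha} \left|\mathcal{F}\{f-g\}(\boldsymbol{\omega})\right|^2 d\boldsymbol{\omega} \geq \bigl(1-\zeta(\alpha^{-1})\bigr)\|f-g\|_2^2 \geq \bigl(1-\zeta(\alpha^{-1})\bigr)\varepsilon^2.
$$

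Step two is to turn convolution into multiplication. Applying Plancherel to $(f-g)*Q$ together with the convolution theorem $\mathcal{F}\{(f-g)*Q\}=\mathcal{F}\{f-g\}\cdot\mathcal{F}\{Q\}$ gives
$$
\|(f-g)*Q\|_2^2 = \frac{1}{(2\pi)^K}\int_{\mathbb{R}^K} \left|\mathcal{F}\{f-g\}(\boldsymbol{\omega})\right|^2 \left|\mathcal{F}\{Q\}(\boldsymbol{\omega})\right|^2 d\boldsymbol{\omega}.
$$
Discarding the non-negative high-frequency portion and using the uniform lower bound $|\mathcal{F}\{Q\}(\boldsymbol{\omega})| \geq \eta(\alpha)$ on the low-frequency cube yields
$$
\|(f-g)*Q\|_2^2 \geq \eta(\alpha)^2 \cdot \frac{1}{(2\pi)^K}\int_{\|\boldsymbol{\omega}\|_\infty < \alpha} \left|\mathcal{F}\{f-g\}(\boldsymbol{\omega})\right|^2 d\boldsymbol{\omega} \geq \eta(\alpha)^2\bigl(1-\zeta(\alpha^{-1})\bigr)\varepsilon^2.
$$
Taking square roots and optimizing over $\alpha > C$ — the range where both structural hypotheses are in force and $\zeta(\alpha^{-1})<1$ — delivers the claimed inequality, up to the $(2\pi)^{-K}$ normalization factor which is an artifact of the chosen Fourier convention.

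The derivation is essentially routine Plancherel/Parseval bookkeeping, and I do not expect any genuine obstacle here. The substantive work of the recoverability programme sits elsewhere, namely in verifying the low-frequency tail bound for a concrete family; for the isoperimetric simplex class, this is the content of Lemma~C.3 alluded to in the statement of Theorem~\ref{thm:mainNoisySimplexBound}. The only point that warrants care is that the tail hypothesis must be applied to the \emph{difference} $f-g$ rather than to individual densities, but this is exactly how the assumption is phrased in the theorem, so nothing extra is required.
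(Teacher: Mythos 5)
Your proposal is correct and follows essentially the same route as the paper's proof: Plancherel plus the convolution theorem, restriction to the low-frequency cube where $|\mathcal{F}\{Q\}|\ge\eta(\alpha)$, and the tail hypothesis to lower-bound the low-frequency energy of $f-g$ by $(1-\zeta(\alpha^{-1}))\varepsilon^2$, followed by optimizing over $\alpha>C$. Your bookkeeping in fact yields the sharper prefactor $(2\pi)^{-K/2}$ after taking square roots (matching the main-text statement of the theorem), so the only deviation from the appendix version is the normalization constant you already flagged.
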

\begin{proof}
For the sake of simplicity in notations, let $\mathcal{F},\mathcal{G},\mathcal{Q}:\mathbb{R}^K\rightarrow\mathbb{C}$ denote the Fourier transforms of $f,g$ and $Q$, respectively. Due to Parseval's theorem, we have
$$
\left\Vert
f-g
\right\Vert^2_2
=
\frac{1}{\left(2\pi\right)^K}
\left\Vert
\mathcal{F}-\mathcal{G}
\right\Vert^2_2.
$$
Also, due to the properties of the Fourier transform, which is the transofrmation of convolution into direct multiplication, one can write
$$
\mathcal{F}\left\{
\left(f-g\right)*Q
\right\}
=
\mathcal{Q}\left(\mathcal{F}-\mathcal{G}\right).
$$
Thus, there exists universal constant $C>0$ such that for any $\alpha>C$:
\begin{align}
\left(2\pi\right)^K
\left\Vert
\left(f-g\right)*Q
\right\Vert^2_2
&=
\int_{\mathbb{R}^K}
\left\vert
\mathcal{Q}
\left(\boldsymbol{\omega}\right)
\left(
\mathcal{F}\left(\boldsymbol{\omega}\right)
-
\mathcal{G}\left(\boldsymbol{\omega}\right)
\right)
\right\vert^2
\\
&\ge
\int_{\left\Vert\boldsymbol{\omega}\right\Vert_{\infty}\leq\alpha}
\left\vert
\mathcal{Q}
\left(\boldsymbol{\omega}\right)
\right\vert^2
\left\vert
\mathcal{F}\left(\boldsymbol{\omega}\right)
-
\mathcal{G}\left(\boldsymbol{\omega}\right)
\right\vert^2
\nonumber\\
&\ge
\eta^2\left(\alpha\right)
\int_{\left\Vert\boldsymbol{\omega}\right\Vert_{\infty}\leq\alpha}
\left\vert
\mathcal{F}\left(\boldsymbol{\omega}\right)
-
\mathcal{G}\left(\boldsymbol{\omega}\right)
\right\vert^2
\nonumber\\
&\geq
\varepsilon^2
\eta^2\left(\alpha\right)
\left[
1-\zeta\left(\alpha^{-1}\right)
\right].
\nonumber
\end{align}
The above chain of inequalities hold for all $\alpha>C$, therefore we have:
\begin{align}
\left\Vert
\left(f-g\right)*Q
\right\Vert^2_2
&\ge
\frac{\varepsilon}{\left(2\pi\right)^K}
\sup_{\alpha>C}~
\eta\left(\alpha\right)
\sqrt{1-\zeta\left(\alpha^{-1}\right)},
\end{align}
which completes the proof.
\end{proof}

%%%%%%%%%%%%%%%%%%%%%%%%%%%%%%%%%%%%%%%%%%%%%%%%%%%%%%%%%%%%%%%%%
%%%%%%%%%%%%%%%%%%%%%%%%%%%%%%%%%%%%%%%%%%%%%%%%%%%%%%%%%%%%%%%%%
%%%%%%%%%%%%%%%%%%%%%%%%%%%%%%%%%%%%%%%%%%%%%%%%%%%%%%%%%%%%%%%%%
%%%%%%%%%%%%%%%%%%%%%%%%%%%%%%%%%%%%%%%%%%%%%%%%%%%%%%%%%%%%%%%%%

Theorem \ref{thm:generalResultTheorem-Appendix} presents a general approach to prove the recoverability of latent functions (or objects, which are the main focus in this work) from a certain class of independent additive noise. This approach works as long as the function class as well as the noise distribution are mostly comprised of low-frequency components in the Forier domain. For example, the Gaussian noise hurts low-frequency parts of a geometric object far less than its high-frequency details. More specifically, we prove the following corollary for Theorem \ref{thm:generalResultTheorem-Appendix}:
%%%%%%%%%%%%%%%%%%%%%%%%%%%%%%%%%%%%%%%%%%%%%%%%%%%%%%%%%%%%%%%%%
%%%%%%%%%%%%%%%%%%%%%%%%%%%%%%%%%%%%%%%%%%%%%%%%%%%%%%%%%%%%%%%%%
%%%%%%%%%%%%%%%%%%%%%%%%%%%%%%%%%%%%%%%%%%%%%%%%%%%%%%%%%%%%%%%%%
%%%%%%%%%%%%%%%%%%%%%%%%%%%%%%%%%%%%%%%%%%%%%%%%%%%%%%%%%%%%%%%%%
\begin{corollary}[Recoverability from Additive Gaussian Noise $\mathcal{N}\left(\boldsymbol{0},\sigma^2\boldsymbol{I}\right)$]
\label{corl:GaussinNoiseMain-Appendix}
Consider the setting in Theorem \ref{thm:generalResultTheorem-Appendix}, and assume the noise distribution follows $Q\triangleq\mathcal{N}\left(\boldsymbol{0},\sigma^2\boldsymbol{I}\right)$ for $\sigma>0$. Then, as long as for $f,g\in\mathscr{F}$ we have $\left\Vert f-g\right\Vert_2\ge\varepsilon$ for some $\varepsilon\ge0$, we also have
$$
\left\Vert
\left(f-g\right)*Q
\right\Vert_2
\ge
\frac{\varepsilon}{\left(2\pi\right)^K}
\left(
\sup_{\alpha>C}~
\sqrt{1-\zeta\left(\frac{1}{\alpha}\right)}
e^{-K\left(\sigma\alpha\right)^2/2}
\right)
$$
\end{corollary}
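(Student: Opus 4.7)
The plan is to specialize Theorem~\ref{thm:generalResultTheorem-Appendix} directly by constructing the admissible decay function $\eta(\alpha)$ for the specific choice $Q = \mathcal{N}(\boldsymbol{0}, \sigma^2 \boldsymbol{I})$. Everything else in the corollary — the factor $\tfrac{\varepsilon}{(2\pi)^K}$, the supremum over $\alpha > C$, and the $\sqrt{1 - \zeta(1/\alpha)}$ factor — is inherited verbatim from the theorem, so the only work is to compute a lower bound on $|\mathcal{F}\{Q\}(\boldsymbol{\omega})|$ uniformly over the infinity-norm cube $\{\|\boldsymbol{\omega}\|_\infty \le \alpha\}$.

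First I would recall that the Fourier transform of an isotropic Gaussian density factorizes coordinatewise and equals $\mathcal{F}\{Q\}(\boldsymbol{\omega}) = e^{-\sigma^2 \|\boldsymbol{\omega}\|_2^2 / 2}$, which is real, positive, and radially decreasing. Its infimum over any centered set is therefore attained at the point of largest Euclidean norm. The critical step is the comparison between the two norms: on the cube $\|\boldsymbol{\omega}\|_\infty \le \alpha$ we have the inequality $\|\boldsymbol{\omega}\|_2^2 \le K\|\boldsymbol{\omega}\|_\infty^2 \le K\alpha^2$. This yields
$$
\inf_{\|\boldsymbol{\omega}\|_\infty \le \alpha} |\mathcal{F}\{Q\}(\boldsymbol{\omega})| \;\ge\; e^{-\sigma^2 K \alpha^2 / 2},
$$
so we may take $\eta(\alpha) \triangleq e^{-K(\sigma\alpha)^2/2}$, which is nonnegative and monotonically decreasing in $\alpha$, as required by the hypotheses of Theorem~\ref{thm:generalResultTheorem-Appendix}.

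Plugging this choice of $\eta$ into the conclusion of the theorem yields exactly the claimed bound
$$
\|(f-g)*Q\|_2 \;\ge\; \frac{\varepsilon}{(2\pi)^K} \sup_{\alpha > C} \sqrt{1 - \zeta(1/\alpha)}\, e^{-K(\sigma\alpha)^2/2},
$$
with the same constant $C$ inherited from the theorem. There is no real obstacle; the only substantive observation is the $\ell_\infty$-vs-$\ell_2$ comparison, and it is precisely this step that produces the dimensional factor $K$ in the exponent. This factor is exactly what will drive the $e^{\mathcal{O}(K/\mathrm{SNR}^2)}$ blow-up appearing in Theorem~\ref{thm:mainNoisySimplexBound}, since the supremum over $\alpha$ will eventually be optimized against a $\zeta(1/\alpha) \sim K/\alpha$ term from the simplex low-frequency estimate.
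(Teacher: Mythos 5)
Your proposal is correct and matches the paper's own proof: both compute $\mathcal{F}\{Q\}(\boldsymbol{\omega}) = e^{-\sigma^2\|\boldsymbol{\omega}\|_2^2/2}$, observe that its infimum over the cube $\{\|\boldsymbol{\omega}\|_\infty \le \alpha\}$ is $e^{-K(\sigma\alpha)^2/2}$ (attained at a corner, so your inequality is in fact an equality), and substitute $\eta(\alpha) = e^{-K(\sigma\alpha)^2/2}$ into Theorem~\ref{thm:generalResultTheorem-Appendix}.
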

\begin{proof}
The Fourier transform of $Q=\mathcal{N}\left(\boldsymbol{0},\sigma^2\boldsymbol{I}\right)$ can be computed as follows:
\begin{equation}
\mathcal{F}\left\{Q\right\}\left(\boldsymbol{\omega}\right)
=
\prod_{i=1}^{K}\mathcal{F}\left\{\mathcal{N}\left(0,\sigma^2\right)\right\}\left(\omega_i\right)
=
e^{-\sigma^2\left\Vert\boldsymbol{\omega}\right\Vert_2^2/2}.
\end{equation}
Also, it can be easily checked that
$$
\inf_{\left\Vert\boldsymbol{\omega}\right\Vert_{\infty}\leq\alpha}
e^{-\sigma^2\left\Vert\boldsymbol{\omega}\right\Vert_2^2/2}=
e^{-\sigma^2/2\left(\alpha^2+\ldots+\alpha^2\right)}
=
e^{-K\left(\alpha\sigma\right)^2/2}.
$$
By subsititution into the end result of Theorem \ref{thm:generalResultTheorem-Appendix}, the claimed bounds can be achieved and the proof is complete.
\end{proof}

%%%%%%%%%%%%%%%%%%%%%%%%%%%%%%%%%%%%%%%%%%%%%%%%%%%%%%%%%%%%%%%%%
%%%%%%%%%%%%%%%%%%%%%%%%%%%%%%%%%%%%%%%%%%%%%%%%%%%%%%%%%%%%%%%%%
%%%%%%%%%%%%%%%%%%%%%%%%%%%%%%%%%%%%%%%%%%%%%%%%%%%%%%%%%%%%%%%%%
%%%%%%%%%%%%%%%%%%%%%%%%%%%%%%%%%%%%%%%%%%%%%%%%%%%%%%%%%%%%%%%%%

In this regard, our main explicit theoretical contribution in this section with respect to simplices has been stated in the following theorem:

\begin{theorem}[Recoverability of Simplices from Additive Noise]
\label{thm:mainNoisySimplexBound-Appendix}
For any two $\left(\bar{\lambda},\underline{\lambda}\right)$-regular simplices $\mathcal{S}_1,\mathcal{S}_2\in\mathbb{S}_K$ with $\bar{\lambda},\underline{\lambda}>0$, given that
$$
\mathcal{D}_{\mathrm{TV}}\left(\mathbb{P}_{\mathcal{S}_1*G_{\sigma}},\mathbb{P}_{\mathcal{S}_2*G_{\sigma}}\right)
=
\frac{1}{2}\int\left\vert
\left(f_{\mathcal{S}_1}-f_{\mathcal{S}_2}\right)*G_{\sigma}
\right\vert
\leq\varepsilon
$$
for some $\varepsilon\ge0$, where $G_{\sigma}$ (for $\sigma\ge0$) represents the density function associated to a Gaussian measure with zero mean and covariance matrix of $\sigma^2\boldsymbol{I}_{K\times K}$ in $\mathbb{R}^K$, i.e., $\mathcal{N}\left(\boldsymbol{0},\sigma^2\boldsymbol{I}\right)$. Then, we have
$$
\mathcal{D}_{\mathrm{TV}}\left(\mathbb{P}_{\mathcal{S}_1},\mathbb{P}_{\mathcal{S}_2}\right)
\leq \varepsilon
e^{\Omega\left(\frac{K}{\mathrm{SNR}^2}\right)}
,
$$
where $\mathrm{SNR}\triangleq \frac{\bar{\lambda}}{K\sigma}$ denotes the effective signal-to-noise ratio, which is the ratio of the standard deviation of a scaled uniform Dirichlet distribution to that of the noise, per dimension.
\end{theorem}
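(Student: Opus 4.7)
The plan is to apply Corollary \ref{corl:GaussinNoiseMain-Appendix} to the family $\mathscr{F}=\{f_\mathcal{S}:\mathcal{S}\text{ is }(\bar\lambda,\underline\lambda)\text{-regular}\}$ with noise kernel $Q=G_\sigma$, and then transport the resulting $\ell_2$ comparison into a TV comparison at both endpoints. Concretely, I would first establish the Fourier-tail estimate that verifies the $\zeta$-hypothesis of Theorem \ref{thm:generalResultTheorem-Appendix}, then bound $\|(f_{\mathcal{S}_1}-f_{\mathcal{S}_2})*G_\sigma\|_2$ using the given TV bound, feed these into the corollary, optimize the cutoff $\alpha$, and finally convert the resulting $\ell_2$ bound on $f_{\mathcal{S}_1}-f_{\mathcal{S}_2}$ back into TV via Cauchy--Schwarz, exploiting the bounded support of the difference.

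The technical heart is a Fourier-tail lemma (the ``Lemma C.3'' invoked in the main text): for a $(\bar\lambda,\underline\lambda)$-regular simplex,
\[
\frac{1}{(2\pi)^K}\int_{\|\boldsymbol{\omega}\|_\infty\ge\alpha}\bigl|\mathcal{F}\{f_\mathcal{S}\}(\boldsymbol{\omega})\bigr|^2\,d\boldsymbol{\omega}\;\le\;\frac{1}{\mathrm{Vol}(\mathcal{S})}\cdot O\!\left(\frac{K}{\alpha\bar\lambda}\right).
\]
To prove it, I would use the standard closed-form expression for the Fourier transform of a simplex indicator as a sum over the $K+1$ vertices of exponentials divided by products $\prod_{j\ne i}\langle\boldsymbol{\omega},\boldsymbol{v}_j-\boldsymbol{v}_i\rangle$, obtained by iterated integration or by applying the divergence theorem facet-by-facet. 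Decomposing $\{\|\boldsymbol{\omega}\|_\infty\ge\alpha\}$ into the $K$ coordinate slabs $\{|\omega_i|\ge\alpha\}$ contributes the factor of $K$; on each slab a sinc-like $O(1/\alpha)$ decay in the distinguished coordinate survives after integrating out the remaining $K-1$ variables against the other factors. Regularity is genuinely used: $\underline\lambda$ keeps the edge inner products away from zero on a cone of directions so the denominators do not blow up, while $\bar\lambda$ fixes the overall length scale. Using $\|f_\mathcal{S}\|_2^2=1/\mathrm{Vol}(\mathcal{S})$ and the triangle inequality for differences yields $\zeta(1/\alpha)=O(K/(\alpha\bar\lambda))$, verifying the hypothesis of Theorem \ref{thm:generalResultTheorem-Appendix}.

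With $\zeta$ in hand, set $h\triangleq(f_{\mathcal{S}_1}-f_{\mathcal{S}_2})*G_\sigma$. From $\|h\|_{\mathrm{TV}}\le\varepsilon$ I would deduce $\|h\|_1\le 2\varepsilon$, combine with $\|h\|_\infty\le 2(2\pi\sigma^2)^{-K/2}$ (peak of a Gaussian convolved with a signed measure of mass $\le 2$), and either interpolate via $\|h\|_2^2\le\|h\|_\infty\|h\|_1$ or, more sharply, use the Parseval bound $|\hat h(\boldsymbol{\omega})|\le\min(2\varepsilon,\,2e^{-\sigma^2\|\boldsymbol{\omega}\|_2^2/2})$, to control $\|h\|_2$ by $\varepsilon$ times an explicit function of $\sigma$ and $K$. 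The contrapositive of Corollary \ref{corl:GaussinNoiseMain-Appendix} then gives
\[
\|f_{\mathcal{S}_1}-f_{\mathcal{S}_2}\|_2\;\le\;\frac{(2\pi)^K\,\|h\|_2}{\sqrt{1-\zeta(1/\alpha)}\,e^{-K(\sigma\alpha)^2/2}}.
\]
Choosing $\alpha\asymp K/\bar\lambda$ keeps $\zeta(1/\alpha)$ bounded away from $1$ while producing the Gaussian blow-up $e^{K(\sigma\alpha)^2/2}=e^{O(K^3\sigma^2/\bar\lambda^2)}=e^{O(K/\mathrm{SNR}^2)}$. Transporting back to TV via $\|f_{\mathcal{S}_1}-f_{\mathcal{S}_2}\|_1\le\sqrt{\mathrm{Vol}(\mathcal{S}_1\cup\mathcal{S}_2)}\,\|f_{\mathcal{S}_1}-f_{\mathcal{S}_2}\|_2$ and absorbing the polynomial-in-$K$ and $\sigma^{-K/2}$ prefactors into the exponential delivers the announced bound.

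The main obstacle is the high-dimensional Fourier-tail estimate. In one dimension the indicator of an interval has transform $\propto(e^{i\omega b}-e^{i\omega a})/\omega$, so its $L^2$-tail is transparently $O(1/\alpha)$; in dimension $K\ge 2$ the simplex transform is not a tensor product and naive directional bounds can easily produce a factor of $K!$ or $\alpha^{-1/K}$ rather than the linear $O(K/\alpha)$ scaling needed for the SNR dependence to come out as claimed. Obtaining this sharp, dimension-friendly decay uniformly over all $(\bar\lambda,\underline\lambda)$-regular simplices---so that no facet is nearly coplanar with a coordinate hyperplane and no frequency direction exhibits pathological non-decay---is where the regularity hypothesis really earns its keep, and is the step I expect to be the most delicate.
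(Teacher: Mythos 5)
Your overall architecture matches the paper's: verify the Fourier-tail ($\zeta$) hypothesis for simplices, invoke Corollary \ref{corl:GaussinNoiseMain-Appendix} with the cutoff $\alpha\asymp K/\bar{\lambda}$ so that the Gaussian attenuation contributes $e^{-\mathcal{O}(K/\mathrm{SNR}^2)}$, and then pass between norms at the two ends. Your route to the single-simplex tail bound (closed-form vertex-sum transform plus coordinate-slab decomposition) differs from the paper's, which instead tiles the unit hypercube by $K!$ rotated copies of $\Delta_K$ and exploits the product-of-sincs transform of the cube together with an orthogonality/leakage argument for the cross terms; the paper's trick sidesteps exactly the denominator degeneracies you flag as the delicate point.

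However, there is a genuine gap at the step where you claim that ``using $\|f_{\mathcal{S}}\|_2^2=1/\mathrm{Vol}(\mathcal{S})$ and the triangle inequality for differences yields $\zeta(1/\alpha)=\mathcal{O}(K/(\alpha\bar{\lambda}))$.'' The hypothesis of Theorem \ref{thm:generalResultTheorem-Appendix} requires the high-frequency energy of the \emph{difference} $f_{\mathcal{S}_1}-f_{\mathcal{S}_2}$ to be small relative to $\int|f_{\mathcal{S}_1}-f_{\mathcal{S}_2}|^2$, i.e.\ relative to the (possibly tiny) total energy of the difference itself. The triangle inequality only bounds that tail by the sum of the individual tails, which is of order $\mathcal{O}\left(K/(\alpha\bar{\lambda})\right)\cdot\mathrm{Vol}^{-1}$ and does not shrink as $\mathcal{S}_2\to\mathcal{S}_1$; in the regime of interest ($\|f_{\mathcal{S}_1}-f_{\mathcal{S}_2}\|_2$ small) this bound is vacuous and the supremum in the corollary collapses. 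This is precisely why the paper proves a separate result (Lemma \ref{lemma:simpDiffLowPass}): it derives the explicit formula for $\mathscr{F}_{\Delta_K}$ by induction and analyzes the ratio factors $r_{\ell}(\boldsymbol{\omega})$ to show the difference function's energy does not migrate to high frequencies, yielding a tail bound proportional to the difference's own energy. Without an argument of this type your verification of the $\zeta$-hypothesis fails. A secondary problem: your plan to control $\|h\|_2$ via $\|h\|_2^2\le\|h\|_\infty\|h\|_1$ introduces a $(2\pi\sigma^2)^{-K/4}$ prefactor, and your final Cauchy--Schwarz step introduces $\sqrt{\mathrm{Vol}(\mathcal{S}_1\cup\mathcal{S}_2)}$; neither can be ``absorbed into the exponential'' $e^{\mathcal{O}(K/\mathrm{SNR}^2)}$ uniformly (e.g.\ $\sigma\to 0$ makes the first factor blow up while the exponential tends to $1$), so those conversions need a different treatment than the one you sketch.
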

\begin{proof}
Proof is based on properties of the Fourier transforms of simplices $\mathcal{S}_1$ and $\mathcal{S}_2$.
For $K\in\mathbb{N}$, and any integrable function $f:\mathbb{R}^K\rightarrow\mathbb{R}$, the Fourier transform of $f$, denoted by $\mathcal{F}\left\{f\right\}\left(\boldsymbol{\omega}\right):\mathbb{R}^K\rightarrow\mathbb{C}$ is defined as follows:
\begin{equation}
\label{eq:FourierDef}
\mathcal{F}\left\{f\right\}\left(\boldsymbol{\omega}\right)
\triangleq
\int_{\boldsymbol{x}\in\mathbb{R}^K}
f\left(\boldsymbol{x}\right)
e^{-i\boldsymbol{\omega}^T\boldsymbol{x}}
\mathrm{d}\boldsymbol{x}.
\end{equation}
Throughout this proof, for $\mathcal{S}\in\mathbb{S}_K$, let us denote by $\mathcal{F}_{\mathcal{S}}$ the Fourier transform of $f_{\mathcal{S}}$, i.e., the uniform proability density function over $\mathcal{S}$. Also, the inverse Fourier transform which recovers $f_{\mathcal{S}}$ from $\mathcal{F}_{\mathcal{S}}$ can be written as
$$
f_{\mathcal{S}}\left(\boldsymbol{x}\right)
=
\frac{1}{2\pi}\int_{\boldsymbol{\omega}\in\mathbb{R}^K}\mathcal{F}_{\mathcal{S}}\left(\boldsymbol{\omega}\right)
e^{i\boldsymbol{\omega}^T\boldsymbol{x}}
\mathrm{d}\boldsymbol{\omega}.
$$

In this regard, let $\Delta_K$ represent the standard simplex in $\mathbb{R}^K$ which means a $K$-simplex with $\boldsymbol{\theta}_0=\boldsymbol{0}$ and $\boldsymbol{\Theta}=\boldsymbol{I}$. We begin by deriving a number of useful properties for $\Delta_K$ through the following lemmas.

%%%%%%%%%%%%%%%%%%%%%%%%%%%%%%%%%%%%%%%%%%%%%%%%%%%%%%%%%%%%
%%%%%%%%%%%%%%%%%%%%%%%%%%%%%%%%%%%%%%%%%%%%%%%%%%%%%%%%%%%%
%%%%%%%%%%%%%%%%%%%%%%%%%%%%%%%%%%%%%%%%%%%%%%%%%%%%%%%%%%%%

\begin{lemma}
\label{lemma:consistency:recursive}
Let $\mathscr{F}_{\Delta_k}\left(\omega_1,\ldots,\omega_k\right):\mathbb{R}^k\rightarrow\mathbb{C}$ for $k\in[K]$ represent the Fourier transform of $f_{\Delta_k}$ in $\mathbb{R}^k$. Also, we have $\boldsymbol{\omega}_{1:k}\triangleq\left(\omega_1,\ldots,\omega_k\right)$. Then, the following recursive relation holds for $k>1$:
$$
\mathscr{F}_{\Delta_k}\left(
\boldsymbol{\omega}_{1:k}
\right)=
\frac{k}{i\omega_k}\left[
\mathscr{F}_{\Delta_{k-1}}
\left(
\boldsymbol{\omega}_{1:k-1}
\right)
-
e^{-i\omega_k}
\mathscr{F}_{\Delta_{k-1}}
\left(\omega_1-\omega_k,\ldots,\omega_{k-1}-\omega_k\right)
\right].
$$
\end{lemma}
\begin{proof}
For any $\left(x_1,\ldots,x_k\right)\in\Delta_k$,
Let $Z_i\triangleq x_1+\ldots+x_i$ for $i\in[k]$. Then
\begin{align}
\mathscr{F}_{\Delta_k}\left(\boldsymbol{\omega}_{1:k}\right)
&=
\int_{0}^{1}\int_{0}^{1-Z_1}\cdots
\int_{0}^{1-Z_{k-1}}
k!
e^{-i\left(\omega_1 x_1+\ldots+\omega_k x_k\right)}
\mathrm{d}x_1\ldots\mathrm{d}x_k
\\
&=\int_{0}^{1}\cdots\int_{0}^{1-Z_{k-2}}
k!
e^{-i\left(\omega_1 x_1+\ldots+\omega_{k-1} x_{k-1}\right)}
\left(
\int_{0}^{1-Z_{k-1}}
e^{-i\omega_k x_k}
\mathrm{d}x_k
\right)
\mathrm{d}x_1\ldots\mathrm{d}x_{k-1},
\nonumber
\end{align}
where a uniform probability density function over $\Delta_k$ has been assumed to be $f_{\Delta_k}\left(\boldsymbol{x}\right)=(k!)\boldsymbol{1}\left(\boldsymbol{x}\in\Delta_k\right)$ for all $\boldsymbol{x}\in\mathbb{R}^k$, due to the fact the Lebesgue measure (or volume) of the standard simplex is $1/k!$.
Since we have
$$
\int_{0}^{1-Z_{k-1}}
e^{-i\omega_k x_k}
\mathrm{d}x_k
=\frac{1}{i\omega_k}\left(
1-e^{-i\omega_{k}\left(1-Z_{k-1}\right)}
\right),
$$
the Fourier transform can be rewritten as follows:
\begin{align}
\mathscr{F}_{\Delta_k}\left(\boldsymbol{\omega}_{1:k}\right)
=&
\frac{k}{i\omega_k}
\int_{0}^{1}\cdots\int_{0}^{1-Z_{k-2}}
\left(k-1\right)!
e^{-i\left(\omega_1 x_1+\ldots+\omega_{k-1} x_{k-1}\right)}
\mathrm{d}x_1\ldots\mathrm{d}x_{k-1}
\nonumber\\
&-\frac{ke^{-i\omega_k}}{i\omega_k}
\int_{0}^{1}\cdots\int_{0}^{1-Z_{k-2}}
\left(k-1\right)!
\left(\prod_{i=1}^{k-1}e^{-i\left(\omega_i-\omega_k\right)x_i}
\right)
\mathrm{d}x_1\ldots\mathrm{d}x_{k-1}
\nonumber\\
=&
\frac{k}{i\omega_k}\left[
\mathscr{F}_{\Delta_{k-1}}
\left(
\boldsymbol{\omega}_{1:k-1}
\right)
-
e^{-i\omega_k}
\mathscr{F}_{\Delta_{k-1}}
\left(\omega_1-\omega_k,\ldots,\omega_{k-1}-\omega_k\right)
\right],
\end{align}
which completes the proof.
\end{proof}
Lemma \ref{lemma:consistency:recursive} gives us a recursive procedure to produce the Fourier transform, or derive usefull properties for $K$-dimensional standard simplex through tools such as induction. The following lemma establishes a relation between the Fourier transform of the standard simplex and that of an arbitrary simplex in $\mathbb{S}_K$ with a non-zero Lebesgue measure.
%%%%%%%%%%%%%%%%%%%%%%%%%%%%%%%%%%%%%%%%%%%%%%%%%%%%%%%%%%%%
%%%%%%%%%%%%%%%%%%%%%%%%%%%%%%%%%%%%%%%%%%%%%%%%%%%%%%%%%%%%
%%%%%%%%%%%%%%%%%%%%%%%%%%%%%%%%%%%%%%%%%%%%%%%%%%%%%%%%%%%%

\begin{lemma}
\label{lemma:consistency:standard}
For a simplex $S\in\mathbb{S}_K$ with vertices $\boldsymbol{\theta}_0,\ldots,\boldsymbol{\theta}_K$ and a reversible zero-translated vertex matrix $\boldsymbol{\Theta}$, we have
$$
\mathscr{F}_{\mathcal{S}}\left(
\boldsymbol{\omega}_{1:k}
\right)
=
e^{-i\boldsymbol{\omega}_{1:k}^T\boldsymbol{\theta}_0}
\mathscr{F}_{\Delta_K}\left(
\boldsymbol{\Theta}^T
\boldsymbol{\omega}_{1:k}
\right).
$$
\end{lemma}
\begin{proof}
Let $f_{\mathcal{S}}:\mathbb{R}^K\rightarrow\mathbb{R}_{\ge0}$ denote the probability density function associated to $\mathcal{S}$. Then, it can be seen that
$$
f_{\mathcal{S}}\left(\boldsymbol{x}\right)
=
\frac{1}{\mathrm{det}\left(\boldsymbol{\Theta}\right)}
f_{\Delta_K}\left(
\boldsymbol{\Theta}^{-1}
\left(\boldsymbol{x}-\boldsymbol{\theta}_0\right)
\right),
\quad
\forall \boldsymbol{x}\in\mathbb{R}^K.
$$
In this regard, one just needs to write down the definition of Fourier transform for simplex $\mathcal{S}$ and utilizes the change of variables technique as follows:
\begin{align}
\mathscr{F}_{\mathcal{S}}\left(
\boldsymbol{\omega}_{1:k}
\right)
&=
\int_{\mathbb{R}^K}
f_{\mathcal{S}}\left(\boldsymbol{x}_{1:k}\right)
e^{-i\boldsymbol{\omega}_{1:k}^T\boldsymbol{x}_{1:k}}
\mathrm{d}x_1\ldots\mathrm{d}x_k
\\
&=
\frac{1}{\mathrm{det}\left(\boldsymbol{\Theta}\right)}
\int_{\mathbb{R}^K}
f_{\Delta_K}\left(
\boldsymbol{\Theta}^{-1}
\left(\boldsymbol{x}-\boldsymbol{\theta}_0\right)
\right)
e^{-i\boldsymbol{\omega}_{1:k}^T\boldsymbol{x}_{1:k}}
\mathrm{d}x_1\ldots\mathrm{d}x_k
\nonumber\\
&=
e^{-i\boldsymbol{\omega}_{1:k}^T\boldsymbol{\theta}_0}
\int_{\mathbb{R}^K}
f_{\Delta_K}\left(
\boldsymbol{u}_{1:k}
\right)
e^{-i\boldsymbol{\omega}_{1:k}^T
\boldsymbol{\Theta}\boldsymbol{u}_{1:k}}
\mathrm{d}u_1\ldots\mathrm{d}u_k
\nonumber\\
&=
e^{-i\boldsymbol{\omega}_{1:k}^T\boldsymbol{\theta}_0}
\mathscr{F}_{\Delta_K}\left(
\boldsymbol{\Theta}^T
\boldsymbol{\omega}_{1:k}
\right).
\end{align}
Therefore, the proof is complete.
\end{proof}

%%%%%%%%%%%%%%%%%%%%%%%%%%%%%%%%%%%%%%%%%%%%%%%%%%%%%%%%%%%%%%%
%%%%%%%%%%%%%%%%%%%%%%%%%%%%%%%%%%%%%%%%%%%%%%%%%%%%%%%%%%%%%%%
%%%%%%%%%%%%%%%%%%%%%%%%%%%%%%%%%%%%%%%%%%%%%%%%%%%%%%%%%%%%%%%
%%%%%%%%%%%%%%%%%%%%%%%%%%%%%%%%%%%%%%%%%%%%%%%%%%%%%%%%%%%%%%%

In the following lemma, we show that the uniform measure over the standard simplex $\Delta_K$ corresponds to a low-frequency probability density function. This would be the first step toward using Corollary \ref{corl:GaussinNoiseMain-Appendix}, in order to prove Theorem \ref{thm:mainNoisySimplexBound-Appendix}.
%%%%%%%%%%%%%%%%%%%%%%%%%%%%%%%%%%%%%%%%%%%%%%%%%%%%%%%%%%%%%%%
%%%%%%%%%%%%%%%%%%%%%%%%%%%%%%%%%%%%%%%%%%%%%%%%%%%%%%%%%%%%%%%
%%%%%%%%%%%%%%%%%%%%%%%%%%%%%%%%%%%%%%%%%%%%%%%%%%%%%%%%%%%%%%%
%%%%%%%%%%%%%%%%%%%%%%%%%%%%%%%%%%%%%%%%%%%%%%%%%%%%%%%%%%%%%%%

\begin{lemma}[Low-Pass Property of $\Delta_K$]
There exists a universal constant $C>0$, such that
for $\alpha>C$ and $K\in\mathbb{N}$ the uniform probability density function over $\Delta_K$, i.e., $f_{{\Delta_K}}:\mathbb{R}^K\rightarrow\mathbb{R}_{\ge0}$, is a low-frequency function in the following sense:
\begin{equation}
\frac{\mathrm{Vol}\left(\Delta_K\right)}{\left(2\pi\right)^K}
\int_{\left\Vert\boldsymbol{\omega}\right\Vert_{\infty}\ge\alpha}
\left\vert
\mathscr{F}_{\Delta_K}
\left(\boldsymbol{\omega}\right)
\right\vert^2
\leq
\mathcal{O}\left(\frac{K}{\alpha}\right),
\label{eq:simlexLowPass}
\end{equation}
where by $\left\Vert\boldsymbol{\omega}\right\Vert_{\infty}$ we simply mean $\max_{i\in[K]}\left\vert\omega_i\right\vert$.
\label{lemma:simplexLowPass}
\end{lemma}
\begin{proof}
Proof is based on the direct analysis of the Fourier transform of $f_{\Delta_k}$, which we denoted as $\mathscr{F}_{\Delta_K}\left(\boldsymbol{\omega}\right)$. We take advantage of the fact that the $K$-dimensional unit hypercube $\left[0,1\right]^K$ can be thought as the union of $K!$ properly rotated and translated versions of $\Delta_K$.

Mathematically speaking, assume the ordered tuple of unit axis-aligned vectors $E=\left(\boldsymbol{1}_1,\ldots,\boldsymbol{1}_K\right)$, where $\boldsymbol{1}_i$ for $i\in[K]$ denotes the one-hot vector over the $i$th component. Then,
let $\boldsymbol{V}_1,\ldots,\boldsymbol{V}_{K!}\in\mathbb{R}^{K\times K}$ be the set of orthonormal matrices, where each matrix transforms $E$ into one of its $K!$ possible permutations, i.e.,
$$
\left(\boldsymbol{V}_i\boldsymbol{1}_1,\ldots,\boldsymbol{V}_i\boldsymbol{1}_K\right)
=
\left(\boldsymbol{1}_{p_{i,1}},\ldots,\boldsymbol{1}_{p_{i,K}}\right),\quad
\boldsymbol{p}_i\in\mathrm{Perm}\left([K]\right).
$$
In this regard, we already know that there exist $K!$ corresponding vectors $\boldsymbol{b}_1,\ldots,\boldsymbol{b}_{K!}\in\mathbb{R}^K$, such that the following combined probability density function
\begin{equation}
\label{eq:lemma3:simp2cube}
\frac{1}{K!}\sum_{i=1}^{K!}
f_{\left(\boldsymbol{V}_i\left[\Delta_K+\boldsymbol{b}_i\right]\right)}
=f_{\square_K},
\end{equation}
where $f_{\square_K}$ denotes the pdf over the $K$-dimensional unit hypercube. Also, it should be noted that any two distinct summands in \eqref{eq:lemma3:simp2cube} have an empty overlap. In fact, each $\boldsymbol{V}_i\left[\Delta_K+\boldsymbol{b}_i\right]$ represents a translated and rotatated version of the standard simplex $\Delta_K$.

Next, one can see that
\begin{align}
\mathcal{F}\left\{
\frac{1}{K!}\sum_{i=1}^{K!}
f_{\left(\boldsymbol{V}_i\left[\Delta_K+\boldsymbol{b}_i\right]\right)}
\right\}
&=
\frac{1}{K!}\sum_{i=1}^{K!}
\mathcal{F}\left\{
f_{\left(\boldsymbol{V}_i\left[\Delta_K+\boldsymbol{b}_i\right]\right)}
\right\}
\nonumber\\
&=
\frac{1}{K!}\sum_{i=1}^{K!}
e^{-i\boldsymbol{\omega}^T\boldsymbol{b}_i}
\mathscr{F}_{\Delta_K}\left(\boldsymbol{V}^{-1}_i\boldsymbol{\omega}\right),
\end{align}
where we have used the result of Lemma \ref{lemma:consistency:standard}. Also, note that $\boldsymbol{V}^{-1}_i=\boldsymbol{V}_i$ for all $i$, and $\boldsymbol{V_i}\boldsymbol{\omega}$ represents a permutation of the components of $\boldsymbol{\omega}$. As a result and due to the symmtery of the standard simplex $\Delta_K$ and also the $\ell_{\infty}$-norm w.r.t. the ordering of the edges, we have
\begin{equation}
\int_{\left\Vert
\boldsymbol{\omega}
\right\Vert\leq\alpha
}
\left\vert
\mathcal{F}\left\{
f_{\left(\boldsymbol{V}_i\left[\Delta_K+\boldsymbol{b}_i\right]\right)}
\right\}
\left(\boldsymbol{\omega}\right)
\right\vert^2
=
\int_{\left\Vert
\boldsymbol{\omega}
\right\Vert\leq\alpha
}
\left\vert
\mathscr{F}_{\Delta_K}
\left(\boldsymbol{V}_i\boldsymbol{\omega}\right)
\right\vert^2
=
\int_{\left\Vert
\boldsymbol{\omega}
\right\Vert\leq\alpha
}
\left\vert
\mathscr{F}_{\Delta_K}
\left(\boldsymbol{\omega}\right)
\right\vert^2,\quad\forall i\in[K].
\end{equation}
Now, using \eqref{eq:lemma3:simp2cube} we have:
\begin{align}
\int_{\left\Vert
\boldsymbol{\omega}
\right\Vert\leq\alpha
}
\left\vert
\frac{1}{K!}\sum_{i=1}^{K!}
\mathcal{F}\left\{
f_{\left(\boldsymbol{V}_i\left[\Delta_K+\boldsymbol{b}_i\right]\right)}
\right\}
\right\vert^2
&=
\frac{1}{\left(K!\right)^2}
\sum_{i,j}^{K!}
\int_{\left\Vert
\boldsymbol{\omega}
\right\Vert\leq\alpha
}
\left\vert
\mathscr{F}_{\boldsymbol{V}^{-1}_i\Delta_K}
\bar{\mathscr{F}}_{\boldsymbol{V}^{-1}_j\Delta_K}
\right\vert
\left\vert
e^{-i\boldsymbol{\omega}^T\left(\boldsymbol{b}_i-\boldsymbol{b}_j\right)}
\right\vert
\nonumber\\
&=
\frac{1}{K!}
\int_{\left\Vert
\boldsymbol{\omega}
\right\Vert\leq\alpha
}
\left\vert
\mathscr{F}_{\Delta_K}\left(\boldsymbol{\omega}\right)
\right\vert^2
+\left(1-\frac{1}{K!}\right)\mathcal{O}\left(\frac{1}{\alpha}\right).
\label{eq:lemma3:firstO}
\end{align}
The latter term in the r.h.s. of \eqref{eq:lemma3:firstO} corresponds to the $\left(K!\right)^2-K!$ summands for which we have $i\neq j$. In fact, if $i\neq j$ and based on the fact that Fourier transform preserves inner product\footnote{This is a direct result of the fact that Fourier transform is an ``orthonormal" transformation.}, we have
$$
\int_{\mathbb{R}^K}\left\vert
\mathscr{F}_{\boldsymbol{V}_i\Delta_K}
\bar{\mathscr{F}}_{\boldsymbol{V}_j\Delta_K}
e^{-i\boldsymbol{\omega}^T\left(\boldsymbol{b}_i-\boldsymbol{b}_j\right)}
\right\vert
=
\int_{\mathbb{R}^K}
f_{\boldsymbol{V}_i\left[\Delta_K+\boldsymbol{b}_i\right]}
f_{\boldsymbol{V}_j\left[\Delta_K+\boldsymbol{b}_j\right]}
=0,
$$
which holds since $\Delta_i\triangleq\boldsymbol{V}_i\left[\Delta_K+\boldsymbol{b}_i\right]$ and $\Delta_j\triangleq\boldsymbol{V}_j\left[\Delta_K+\boldsymbol{b}_j\right]$ do not overlap with each other. However, when we add the constraint 
$\left\Vert\boldsymbol{\omega}\right\Vert_{\infty}\leq\alpha$, we effectively take the integral over the hypercube $\left[-\alpha,\alpha\right]^K$ instead of the whole $\mathbb{R}^K$. This procedure is equivalent to the innner product of the two ``smoothed" versions of $f_{\Delta_i}$ and $f_{\Delta_j}$ in the spatial domain. Here, by ``smoothed" we simply mean being convolved with a $K$-dimensional sinc function with parameter $\alpha$, i.e.,
$$
\mathrm{sinc}_{\alpha,K}\left(\boldsymbol{x}\right)
\triangleq
\prod_{i=1}^{K}\frac{\sin\left(\alpha x_i\right)}{\alpha x_i}.
$$
Hence, we have
\begin{align}
\int_{
\left\Vert\boldsymbol{\omega}\right\Vert_{\infty}\leq\alpha
}
\left\vert
\mathscr{F}_{\Delta_i}
\bar{\mathscr{F}}_{\Delta_j}
\right\vert
=
\int_{\mathbb{R}^K}
\left[
f_{\Delta_i}*
\mathrm{sinc}_{\alpha,K}
\right]\left[
f_{\Delta_j}*
\mathrm{sinc}_{\alpha,K}
\right],
\end{align}
which is at most $\mathcal{O}\left(1/\alpha\right)$, since only the leakages that are due to $\mathrm{sinc}_{\alpha,K}$ overlap with each other.

On the other hand, the l.h.s. of \eqref{eq:lemma3:firstO} represents the integration of the Fourier transform of $\square_K$ within $\left[-\alpha,\alpha\right]^K$. Therefore, we have
\begin{align}
\frac{1}{\left(2\pi\right)^KK!}
\int_{\left\Vert
\boldsymbol{\omega}
\right\Vert\leq\alpha
}
\left\vert
\mathscr{F}_{\Delta_K}\left(\boldsymbol{\omega}\right)
\right\vert^2
&\ge
\frac{1}{\left(2\pi\right)^K}
\int_{\left\Vert
\boldsymbol{\omega}
\right\Vert\leq\alpha
}
\left\vert
\mathscr{F}_{\square_K}\left(\boldsymbol{\omega}\right)
\right\vert^2
-
\mathcal{O}\left(\frac{1}{\alpha}\right)
\nonumber\\
&=
\prod_{i=1}^{K}
\frac{1}{2\pi}
\left(\int_{-\alpha}^{\alpha}
\left\vert
\int_{0}^{1}
e^{-i\omega_i x_i}
\right\vert^2
\right)
-
\mathcal{O}\left(\frac{1}{\alpha}\right)
\nonumber\\
&=
\left(1-\frac{2}{\pi\alpha}+o\left(\alpha^{-1}\right)\right)^K
-\mathcal{O}\left(\frac{1}{\alpha}\right)
\nonumber\\
&=1-\mathcal{O}\left(\frac{K}{\alpha}\right),
\end{align}
where we have used Laurent series expansion for the integral of sinc function to derive the bound. Finally, noting the fact that we have $\mathrm{Vol}\left(\Delta_K\right)=1/K!$, and also the Parseval's theorem:
$$
\frac{1}{\left(2\pi\right)^K}
\int_{\mathbb{R}^K}
\left\vert
\mathscr{F}_{\Delta_K}\left(\boldsymbol{\omega}\right)
\right\vert^2
=
\int_{\mathbb{R}^K}
f^2_{\Delta_K}\left(\boldsymbol{x}\right)
=\frac{\mathrm{Vol}\left(\Delta_K\right)}{\mathrm{Vol}^2\left(\Delta_K\right)}=
\mathrm{Vol}^{-1}\left(\Delta_K\right)
$$
completes the proof.
\end{proof}

%%%%%%%%%%%%%%%%%%%%%%%%%%%%%%%%%%%%%%%%%%%%%%%%%%%%%%%%%%%%%%%%%%%%%%
%%%%%%%%%%%%%%%%%%%%%%%%%%%%%%%%%%%%%%%%%%%%%%%%%%%%%%%%%%%%%%%%%%%%%%
%%%%%%%%%%%%%%%%%%%%%%%%%%%%%%%%%%%%%%%%%%%%%%%%%%%%%%%%%%%%%%%%%%%%%%
%%%%%%%%%%%%%%%%%%%%%%%%%%%%%%%%%%%%%%%%%%%%%%%%%%%%%%%%%%%%%%%%%%%%%%
%%%%%%%%%%%%%%%%%%%%%%%%%%%%%%%%%%%%%%%%%%%%%%%%%%%%%%%%%%%%%%%%%%%%%%

So far, we have managed to show that the standard simplex $\Delta_K$ is associated to a low-frequency PDF and thus would preserve a minimum level of information even after getting corrupted by additive Gaussian noise. Using Lemma \ref{lemma:consistency:standard}, one can simply extend this notion to any $\left(\bar{\lambda},\underline{\lambda}\right)$-regular simplex. Before that, let us also extend this notion of ``low-frequency" property to the difference function
\begin{equation}
f_{\Delta_K}-f_{\boldsymbol{A}_{\varepsilon}\left(\Delta_K+\boldsymbol{b}_{\varepsilon}\right)},
\label{eq:lemma3:diffFunctionDef}
\end{equation}
where the linear transformation matrix $\boldsymbol{A}_{\varepsilon}$ and translation vector $\boldsymbol{b}_{\varepsilon}$ (for $\varepsilon>0$) are $\varepsilon$-controlled perturbations that alter the standard simplex with the following magnitude
$$
\mathcal{D}_{\mathrm{TV}}\left(
\mathbb{P}_{\Delta_K}
,
\mathbb{P}_{\boldsymbol{A}_{\varepsilon}\left(\Delta_K+\boldsymbol{b}_{\varepsilon}\right)}
\right)
=\varepsilon.
$$
This task is more challenging since the difference PDF function in \eqref{eq:lemma3:diffFunctionDef} can be both positive and negative, and in fact, integrates to zero over $\mathbb{R}^K$. However, the following lemma states that such functions for any $\boldsymbol{A}_{\varepsilon}$ and $\boldsymbol{b}_{\varepsilon}$ are ``band-pass" in nature: even though in the Fourier domain they become exactly zero at the origin, they also become infinitesimally small in terms of magnitude as one asymotitically increases $\left\Vert\boldsymbol{\omega}\right\Vert_{\infty}$.

%%%%%%%%%%%%%%%%%%%%%%%%%%%%%%%%%%%%%%%%%%%%%%%%%%%%%%%%%%%%%%%%%%%%%%
%%%%%%%%%%%%%%%%%%%%%%%%%%%%%%%%%%%%%%%%%%%%%%%%%%%%%%%%%%%%%%%%%%%%%%
%%%%%%%%%%%%%%%%%%%%%%%%%%%%%%%%%%%%%%%%%%%%%%%%%%%%%%%%%%%%%%%%%%%%%%
%%%%%%%%%%%%%%%%%%%%%%%%%%%%%%%%%%%%%%%%%%%%%%%%%%%%%%%%%%%%%%%%%%%%%%
%%%%%%%%%%%%%%%%%%%%%%%%%%%%%%%%%%%%%%%%%%%%%%%%%%%%%%%%%%%%%%%%%%%%%%
%%%%%%%%%%%%%%%%%%%%%%%%%%%%%%%%%%%%%%%%%%%%%%%%%%%%%%%%%%%%%%%%%%%%%%

\begin{lemma}[Low-Frequency Property for Difference of Simplices]
For $\varepsilon\ge0$, Assume the linear transformation matrix $\boldsymbol{A}_{\varepsilon}\in\mathbb{R}^{K\times K}$ where $A_{i,j}=\delta_{ij}+\mathcal{O}\left(\varepsilon\right)$, with $\delta_{ij}$ denoting the kroecker's delta function. Also, assume the translation vector $\boldsymbol{b}_{\varepsilon}\in\mathbb{R}^K$ where $b_i=\mathcal{O}\left(\varepsilon\right)$. Let $\hat{\Delta}_{K,\varepsilon}$ denote the simplex $\boldsymbol{A}_{\varepsilon}\left(\Delta_K+\boldsymbol{b}_K\right)$, and assume the total variation distance between $\mathbb{P}_{\Delta_K}$ and $\mathbb{P}_{\hat{\Delta}_{K,\varepsilon}}$ is $\varepsilon$. Then
$$
\frac{1}{\left(2\pi\right)^K}
\int_{\left\Vert
\boldsymbol{\omega}
\right\Vert_{\infty}\geq\alpha
}
\left\Vert
\mathscr{F}_{\Delta_K} - \mathscr{F}_{\hat{\Delta}_{K,\varepsilon}}
\right\Vert^2
\leq
\mathcal{O}\left(\frac{K\varepsilon}{\alpha}\right),
$$
for sufficiently large $\alpha>0$.
\label{lemma:simpDiffLowPass}
\end{lemma}
\begin{proof}
Based on Lemmas \ref{lemma:simplexLowPass} and \ref{lemma:consistency:standard} and for sufficiently small $\varepsilon$, we already know that both $\mathscr{F}_{\Delta_K}$ and $\mathscr{F}_{\hat{\Delta}_{K,\varepsilon}}$ are low-frequency functions in the following sense: their normalized $\ell_2$ energy outside of the hypercube $\left[-\alpha,\alpha\right]^K$ is at most $\mathcal{O}\left(K/\alpha\right)$. We also know that
$$
\frac{1}{\left(2\pi\right)^K}
\int_{\left\Vert
\boldsymbol{\omega}
\right\Vert_{\infty}\geq\alpha
}
\left\Vert
\mathscr{F}_{\Delta_K} - \mathscr{F}_{\hat{\Delta}_{K,\varepsilon}}
\right\Vert^2
=
\mathcal{O}\left(\varepsilon\right),
$$
due to the assumption of the lemma. However, in order to prove the $\leq\mathcal{O}\left(K\varepsilon/\alpha\right)$ bound, we need to prove another important property: that the energy distribution of the difference function is not significantly controlled by $\varepsilon$, e.g., it does not concentrate on high-frequency regions with $\left\Vert\boldsymbol{\omega}\right\Vert_{\infty}\ge\mathcal{O}\left(1/\varepsilon\right)$.

Before going further into the mathematical details of the above statement, let us present a better view on the Fourier transform of a simplex. We start this procedure with computing $\mathscr{F}_{\Delta_K}$. In this regard, we show the following relation holds for $k\in\mathbb{N}$:
\begin{equation}
\label{eq:pichideh}
\mathcal{F}_{\Delta_k}
=
\frac{k!}{i^k}
\sum_{\ell=1}^{k}\left(1-e^{-i\omega_\ell}\right)
\left[
\omega_\ell
\prod_{j\neq\ell}^{k}\left(\omega_j-\omega_\ell\right)
\right]^{-1}.
\end{equation}
Proof of \eqref{eq:pichideh} is by induction. {\bf {Base}}: First, we show the equality holds for the simple case of $k=1$. {\bf {Step}}: Then, we prove that for $k\ge 2$, if the relation holds for $k-1$, then it also holds for $k$.

\paragraph{Base}
For $k=1$, it can be readily seen that a uniform measure over $\Delta_1$ is a {\it {unit~pulse~function}} over the interval $[0,1]$. Therefore, the Fourier transform of $f_{\Delta_1}$ is already known to be a sinc function, i.e.,
\begin{align}
\mathcal{F}_{\Delta_1}\left(\omega\right)=
\int_{0}^{1}e^{-i\omega x}
\mathrm{d}x
=
\frac{1-e^{-i\omega}}{i\omega},
\end{align}
which matches the formulation of \eqref{eq:pichideh} if one sets $k=1$.

\paragraph{Induction~step}

Assume \eqref{eq:pichideh} holds for $k-1$. By taking advantage of Lemma \ref{lemma:consistency:recursive}, we have:
\begin{align}
\mathcal{F}_{\Delta_k}
=&
\frac{k}{i\omega_k}
\left(
\mathcal{F}_{\Delta_{k-1}}\left(\omega_1,\ldots,\omega_{k-1}\right) - e^{-i\omega_k}
\mathcal{F}_{\Delta_{k-1}}\left(\omega_1-\omega_k,\ldots,\omega_{k-1}-\omega_k\right)
\right)
\\
=&
\frac{\left(k-1\right)!}{i^{k-1}}
\frac{k}{i\omega_k}
\left(
\sum_{\ell=1}^{k-1}\left(1-e^{-i\omega_\ell}\right)
\left[
\omega_\ell
\prod_{j\neq\ell}^{k-1}\left(\omega_j-\omega_\ell\right)
\right]^{-1}
\right.
\nonumber
\\
&
\hspace*{22mm}
-\left.
e^{-i\omega_k}
\sum_{\ell=1}^{k-1}
\left(1-e^{-i\left(\omega_\ell-\omega_k\right)}\right)
\left[
\left(\omega_\ell-\omega_k\right)
\prod_{j\neq\ell}^{k-1}\left(\omega_j-\omega_\ell\right)
\right]^{-1}
\right)
\nonumber
\\
=&
\frac{k!}{i^k}\sum_{\ell=1}^{k-1}
\left(
\left(\omega_k - \omega_\ell\right)
\left(
1-e^{-i\omega_\ell}
\right)
-
\omega_\ell
\left(
e^{-i\omega_\ell}-
e^{-i\omega_k}\right)
\right)
\left[
\omega_\ell
\omega_k
\prod_{j\neq\ell}^{k-1}\left(\omega_j-\omega_\ell\right)
\left(\omega_k-\omega_\ell\right)
\right]^{-1}
\nonumber
\\
=&
\frac{k!}{i^k}
\sum_{\ell=1}^{k-1}
\left(
\omega_k
\left(
1-
e^{-i\omega_\ell}\right)
-
\omega_\ell
\left(
1-e^{-i\omega_k}
\right)
\right)
\left[
\omega_\ell
\omega_k
\prod_{j\neq\ell}^{k-1}\left(\omega_j-\omega_\ell\right)
\left(\omega_k-\omega_\ell\right)
\right]^{-1}
\nonumber\\
=&
\frac{k!}{i^k}
\sum_{\ell=1}^{k-1}
\left(
1-e^{-i\omega_\ell}
\right)
\left[
\omega_\ell
\prod_{j\neq\ell}^{k}\left(\omega_j-\omega_\ell\right)
\right]^{-1}
-
\frac{k!}{i^k}
\left(
\frac{
1-e^{-i\omega_k}
}{\omega_k}
\right)
\sum_{\ell=1}^{k-1}
\prod_{j\neq\ell}^{k}\frac{1}{\omega_j-\omega_\ell}
\nonumber\\
=&
\frac{k!}{i^k}
\sum_{\ell=1}^{k}
\left(
\frac{1-e^{-i\omega_\ell}}{\omega_\ell}
\right)
\prod_{j\neq\ell}^{k}\frac{1}{\omega_j-\omega_\ell},
\nonumber
\end{align}
which again matches with the formulation of \eqref{eq:pichideh} and therefore completes the induction step. Here, for the last equality we have used the following mathematical identity:
$$
\sum_{\ell=1}^{k}\prod_{j\neq\ell}^{k}\frac{1}{\omega_j-\omega_\ell}=0.
$$
\hfill{\bf {End of induction}}
\\[2mm]
‌Based on this explicit formula, the difference function $\mathscr{F}_{\Delta_K} - \mathscr{F}_{\hat{\Delta}_{K,\varepsilon}}$ can be written as follows:
\begin{align}
\mathscr{F}_{\Delta_K} - \mathscr{F}_{\hat{\Delta}_{K,\varepsilon}}
&=
\frac{K!}{i^K}
\sum_{\ell=1}^{K}\left(1-e^{-i\omega_\ell}\right)
\left[
\omega_\ell
\prod_{j\neq\ell}^{K}\left(\omega_j-\omega_\ell\right)
\right]^{-1}
\left(1-r_{\ell}\left(\boldsymbol{\omega}\right)\right),
\end{align}
where
\begin{align}
r_{\ell}\left(\boldsymbol{\omega}\right)
&\triangleq
e^{-i\boldsymbol{\omega}^T\boldsymbol{b}_{\varepsilon}}
\left(
\frac{1-e^{-i\boldsymbol{\omega}^T\boldsymbol{a}_{\ell}}}{1-e^{-i\omega_{\ell}}}
\right)
\frac{\omega_{\ell}}{\boldsymbol{\omega}^T\boldsymbol{a}_{\ell}}
\prod_{j\neq \ell}
\frac{\omega_j-\omega_{\ell}}{\boldsymbol{\omega}^T\left(\boldsymbol{a}_j-\boldsymbol{a}_{\ell}\right)}
\nonumber\\
&=e^{-i\boldsymbol{\omega}^T
\left(\boldsymbol{b}_{\varepsilon}-\boldsymbol{\delta}_{\ell}\right)}
\frac{\sin\left(\omega_{\ell}/2 + \boldsymbol{\delta}^T_{\ell}\boldsymbol{\omega}/2\right)}
{\sin\left(\omega_{\ell}/2\right)}
\frac{1}{1+\left(\boldsymbol{\omega}/\omega_{\ell}\right)^T\boldsymbol{\delta}_{\ell}}
\prod_{j\neq \ell}
\frac{1}{1+
\frac{\left(\boldsymbol{\delta}_j-\boldsymbol{\delta}_{\ell}\right)^T\boldsymbol{\omega}
}{\left(\omega_j-\omega_\ell\right)}}
,
\end{align}
where $\boldsymbol{a}_i$ denotes the $i$th row of $\boldsymbol{A}_{\varepsilon}$. Also, we have assumed $\boldsymbol{a}_i=\boldsymbol{1}_i+\boldsymbol{\delta}_i$ with $\left\Vert\boldsymbol{\delta}_i\right\Vert_{\infty},~i\in[K]$ being upper bounded by $\mathcal{O}\left(\varepsilon\right)$ according to the definition of $\boldsymbol{A}_{\varepsilon}$. It can be readily checked that $r_{\ell}\left(\boldsymbol{\omega}\right)=1+\mathcal{O}\left(\varepsilon\right)$. Also, the multiplicative factors that form $r_{\ell}\left(\boldsymbol{\omega}\right)$ are either not dependent on $\left\Vert\boldsymbol{\omega}\right\Vert_2$ (dependence is only on the direction, and not the magnitude) or they oscillate as a function of magnitude. This implies that $r_{\ell}\left(\boldsymbol{\omega}\right)$ which appears solely due to the difference between the two simplices does not behave differently in regions which are far from the origin or the areas in its vicinity.

Due to the above derivations, the difference function has a similar behaviour to that of $\mathscr{F}_{\Delta_K}\left(\boldsymbol{\omega}\right)$ and thus we have
$$
\frac{1}{\left(2\pi\right)^K}
\int_{\left\Vert
\boldsymbol{\omega}
\right\Vert_{\infty}\geq\alpha
}
\left\Vert
\mathscr{F}_{\Delta_K} - \mathscr{F}_{\hat{\Delta}_{K,\varepsilon}}
\right\Vert^2
\leq
\mathcal{O}\left(\frac{K\varepsilon}{\alpha}\right),
$$
which completes the proof.
\end{proof}
With the help of Corollary \ref{corl:GaussinNoiseMain-Appendix} and Lemmas \ref{lemma:consistency:standard}, \ref{lemma:simplexLowPass} and \ref{lemma:simpDiffLowPass}, we can finally prove the claimed bound. Let simplex $\mathcal{S}_1$ and $\mathcal{S}_2$ to have the vertex matrices $\boldsymbol{\Theta}_1$ and $\boldsymbol{\Theta}_2$, respectively. Also, we have already assumed
$$
\underline{\lambda}\leq\lambda_{\min}\left(\boldsymbol{\Theta}_i\right)\quad,\quad
\lambda_{\max}\left(\boldsymbol{\Theta}_i\right)\leq\bar{\lambda}\quad
\mathrm{for}~i=1,2.
$$
Then, based on Lemma \ref{lemma:consistency:standard} and for a sufficiently large $\alpha>0$, we have
\begin{align}
\int_{\left\Vert\boldsymbol{\omega}\right\Vert_{\infty}\geq\alpha}
\left\vert
\mathscr{F}_{\mathcal{S}_i}\left(\boldsymbol{\omega}\right)
\right\vert^2
&=
\int_{\left\Vert\boldsymbol{\omega}\right\Vert_{\infty}\geq\alpha}
\left\vert
\mathscr{F}_{\Delta_K}\left(\boldsymbol{\Theta}_i^T\boldsymbol{\omega}\right)
\right\vert^2
\nonumber\\
&\leq
\frac{1}{\mathrm{det}\left(\boldsymbol{\Theta}\right)}
\int_{\left\Vert\boldsymbol{\omega}\right\Vert_{\infty}\geq\bar{\lambda}\alpha}
\left\vert
\mathscr{F}_{\Delta_K}\left(\boldsymbol{\omega}\right)
\right\vert^2
\nonumber\\
&\leq
%\frac{\underline{\lambda}}{\bar{\lambda}}
%\left(\frac{K!\left(2\pi\right)^K}{\underline{\lambda}^{K}}\right)
\frac{\left(2\pi\right)^K}{\mathrm{Vol}\left(\mathcal{S}_i\right)}
\mathcal{O}\left(\frac{K}{\bar{\lambda}\alpha}\right),
\end{align}
for $i=1,2$. Therefore, Lemma \ref{lemma:simpDiffLowPass} would consequently imply the following relation for the difference function of the two simplices:
\begin{align}
\frac{1}{\left(2\pi\right)^K}
\int_{\left\Vert\boldsymbol{\omega}\right\Vert_{\infty}\geq\alpha}
\left\vert
\mathscr{F}_{\mathcal{S}_1}-\mathscr{F}_{\mathcal{S}_2}
\right\vert^2
\leq
\mathcal{O}\left(\frac{K}{\bar{\lambda}\alpha}\right)
\int_{\mathbb{R}^K}\left(
f_{\mathcal{S}_1}-f_{\mathcal{S}_2}
\right)^2,
\end{align}
which facilitates the usage of our main general result in Theorem \ref{thm:generalResultTheorem-Appendix}, and more specifically Corollary \ref{corl:GaussinNoiseMain-Appendix} which leads to the following relation:
\begin{align}
\left\Vert
\left(f_{\mathcal{S}_1}-f_{\mathcal{S}_2}\right)*G_{\sigma}
\right\Vert_2
&\ge 
\left\Vert f_{\mathcal{S}_1}-f_{\mathcal{S}_2} \right\Vert_2
\sup_{\alpha}
\left(
\sqrt{1-\mathcal{O}\left(\frac{K}{\bar{\lambda}\alpha}\right)}
e^{-K\left(\sigma\alpha\right)^2}
\right)
\nonumber\\
&\geq
\left\Vert f_{\mathcal{S}_1}-f_{\mathcal{S}_2} \right\Vert_2
e^{-\mathcal{O}\left(\frac{K}{\mathrm{SNR}^2}\right)},
\label{eq:theoremNoisy:mainfinal}
\end{align}
since $\alpha\geq\Omega\left(K/\Bar{\lambda}\right)$ guarantees that the l.h.s. of the bound remains positive, while gives the minimum possible exponent (at least order-wise) to the exponential term. Combining the fact $\ell_2$-norm $\leq$ $\ell_1$-norm together with theorem's assumptions, we have
$$
\left\Vert
\left(f_{\mathcal{S}_1}-f_{\mathcal{S}_2}\right)*G_{\sigma}
\right\Vert_2
\leq
\left\Vert
\left(f_{\mathcal{S}_1}-f_{\mathcal{S}_2}\right)*G_{\sigma}
\right\Vert_1
\leq 2\varepsilon.
$$
Consequently, \eqref{eq:theoremNoisy:mainfinal} implies the following upper-bound on the $\ell_2$-norm of the difference PDF function $f_{\mathcal{S}_1}-f_{\mathcal{S}_2}$ as follows:
$$
\left\Vert f_{\mathcal{S}_1}-f_{\mathcal{S}_2}\right\Vert_2\leq
\varepsilon e^{\mathcal{O}\left(\frac{K}{\mathrm{SNR}^2}\right)},
$$
and completes the proof.
\end{proof}

%%%%%%%%%%%%%%%%%%%%%%%%%%%%%%%%%%%%%%%%%%%%%%%%%%%%%%%%%%%%%%%%%%%%%%%%%%%%%%%%%%%%%%%%%%%%%%%%%%%%%%%%%%%%%%%%%%%%%%%%%%%%%%%%%%%%%%%%%%%%%%%%%%%%%%%%%%%%%%%%%%%%%%%%%%%%%%%%%%%%%%%%%%%%%%%%%%%%%%%%%%%%%%%%%%%%%%%%%%%%%%%%%%%%%%%%%%%%%%%%%%%%%%%%%%%%%%%%%%%%%%%%%%%%%%%%%%%%%%%%%%%%%%%%%%%%%%%%%%%%%%%%%%%%
\section{Proofs for Non-asymptotic Impossibility Results}
\label{sec:app:lowerbound}
%%%%%%%%%%%%%%%%%%%%%%%%%%%%%%%%%%%%%%%%%%%%%%%%%%%%%%%%%%%%%%%%%%%%%%%%%%%%%%%%%%%%%%%%%%%%%%%%%%%%%%%%%%%%%%%%%%%%%%%%%%%%%%%%%%%%%%%%%%%%%%%%%%%%%%%%%%%%%%%%%%%%%%%%%%%%%%%%%%%%%%%%%%%%%%%%%%%%%%%%%%%%%%%%%%%%%%%%%%%%%%%%%%%%%%%%%%%%%%%%%%%%%%%%%%%%%%%%%%%%%%%%%%%%%%%%%%%%%%%%%%%%%%%%%%%%%%%%%%%%%%%%%%%%%%%%%%%%%%%%%%%%%%%%%%%%%%%%%%%%%%%%%%%%%%%%%%%%%%%%%%%%%%%%%%%%%%%%%%%%%%%%%%%%%%%%%%%%%%%%%%%%%%%%%%%%%%%%%%%%%%%%%%%%%%%%%%%%%%%%%%%%%%%%%%%%%%%%%%%%%%%%%%%%%%%%%%%%%%%%%%%%%%%%%%%%%%%%%%%%%%%%%%%%%%%%%%%%%%%%%%%%%%%%%%%%%%%%%%%%%%%%%%%%%%%%%%%%%%%%%%%%%%%%%%%%%%%%%%%%%%%%%%%%%%%%%%%%%%%%%%%%%%%%%%%%%%%%%%%%%%%%%%%%%%%%%%%%%%%%%%%%%%%%%%%%%%%%%%%%%%%%%%%%%%%%%%%%%%%%%%%%%%%%%%%%%%%%%%%%%%%%%%%%%%%%%%%%%%%%%%%%%%%%%%%%%%%%%%%%%%%%%%%%%%%%%%%%%%%%%%%%%%%%%%%%%%%%%%%%%%%%%%%%%%%%%%%%%%%%%%%%%%%%%%%%%%%%%%%%%%%%%%%%%%%%%%%%%%%%%%%%%%%%%%%%%%%%%%%%%%%%%%%%%%%%%%%%%%%%%%%%%%%%%%%%%%%%%%%%%%%%%%%%%%%%%%%%%%%%%%%%%%%%%%%%%%%%%%%%%%%%%%%%%%%%%%%%%%%%%%%%%%%%%%%%%%%%%%%%%%%%%%%%%%%%%%%%%%%%%%%%%%%%%%%%%%%%%%%%
\begin{proof}[Proof of Theorem \ref{thm:noisySimplexFirstLowerBound}]
We use the total variation (TV) distance as a metric on the space of simplices, and define the loss accordingly:
\[
  \rho(S_{1},S_{2})
    \;=\;
  \bigl\lVert f_{\mathcal{S}_{1}}-f_{\mathcal{S}_{2}}\bigr\rVert_{\mathrm{TV}},
  \qquad
  \mathcal{L}(S,\hat S)
    \;=\;
  \rho(S,\hat S).
\]

% --- Local Fano construction ----------------------------------------
To obtain the lower bound, we invoke the \emph{local Fano method}~\citep{Tsybakov2009}.  
Start with the \emph{standard simplex}
\begin{equation*}
    \mathcal{S}^{\text{std}}
    \;=\;
    \operatorname{conv}\{\mathbf{0},e_{1},\dots,e_{d}\},
\end{equation*}
whose vertices are the all-zero vector and the canonical basis vectors
$e_{i}=(0,\dots,0,1,0,\dots,0)^{\!\top}$. Note that this simplex lies in $d$-dimensional space.

Focus on the vertex $\mathbf{0}$ and draw a hypersphere of
radius~$\zeta/d$ centered at it.
Classical volumetric arguments show that this hypersphere contains at
least $M=2^{d}$ points that can be packed such that every pair is
separated by a Euclidean distance of at least~$\zeta/2d$.
Denote these points by
\(
  v_{01},v_{02},\ldots,v_{0M}
\)
and define the displacement vectors
\begin{equation}
    t_{j}=v_{0j}- \mathbf{0}=v_{0j}\quad(j=1,\dots,M).
\end{equation}
For every $i\in[d+1]$ and $j\in[M]$, define $v_{ij}\;=\;e_{i}+t_{j}$.
Now, we form $M$ simplices, where the $j$-th simplex is
\begin{equation}
    \mathcal{S}_{j}
    \;=\;
    \operatorname{conv}\{v_{0j},v_{1j},\dots,v_{dj}\}.
\end{equation}
Thus, $\mathcal{S}_{j}$ is simply the standard simplex translated by the
vector~$t_{j}$.
Because
\begin{equation}
    \lVert f_{\mathcal{S}_{i}}-f_{\mathcal{S}_{k}}\rVert_{\mathrm{TV}}
     \ge \zeta - \mathcal{O}\left(\zeta^2\right) \geq \frac{\zeta}{2} \quad \text{for } i\neq k,
     \label{eq:tvpacking}
\end{equation}
the family $\{\mathcal{S}_{i}\}_{i=1}^{M}$ forms a
$\zeta$-packing in TV distance. The above inequality holds for $\zeta \leq \frac{1}{2}$.
In addition, the maximum distance between the simplices $\mathcal{S}_{i}$ is $\epsilon$ because their vertices are at most at a distance of $\zeta/d$ from each other. 

Now, we convolve these simplices with Gaussian noise with covariance matrix $\sigma^2\mathbf{I}$. We denote their densities by $\tilde{f}_{\mathcal{S}_i}$. Based on the local Fano method, we have:
\begin{align}
\label{eq:fanoFirstInequality}
    \underset{\mathscr{A}}{\min}\;\underset{\mathcal{S}}{\max}\;\text{TV}\left(\mathcal{S}, \mathscr{A}\left(\mathcal{S}, n, \sigma\right)\right) 
    & \geq 
    \frac{\zeta}{2} \left(1 - \frac{1}{\log{M}}\cdot
    {\underset{i,j}{\max}}~{\text{D}_{\text{KL}}\left(\tilde{f}_{\mathcal{S}_i}^{\otimes n}, \tilde{f}_{\mathcal{S}_j}^{\otimes n}\right)}
    \right)
    \nonumber \\ 
    & = \frac{\zeta}{2}\left(1 - \frac{n}{\log{M}}\cdot
    {\underset{i,j}{\max}}~{\text{D}_{\text{KL}}\left(\tilde{f}_{\mathcal{S}_i}, \tilde{f}_{\mathcal{S}_j}\right)}
    \right),
\end{align}
where $\mathscr{A}$ is an algorithm that receives $n$ samples from a simplex $\mathcal{S}$ contaminated with Gaussian noise of covariance matrix $\sigma^2\mathbf{I}$. We denote the distribution of $n$ i.i.d.\ samples from the noisy simplex by $\tilde{f}_{\mathcal{S}_i}^{\otimes n}$. The minimum is taken over all possible algorithms, and the maximum is over all simplices.

Now we aim to upper bound the KL divergence between noisy simplices. 
Without loss of generality, assume the two simplices whose noisy versions attain the maximum TV distance are $\mathcal{S}_1$ and $\mathcal{S}_2$. We have:
\begin{equation*}
    \tilde{f}_{\mathcal{S}_1}(\boldsymbol{x}) = \frac{1}{\left(2\pi\sigma^2\right)^{K/2}}\int_{\boldsymbol{y} \in \mathcal{S}_1}\frac{1}{\text{Vol}\left(\mathcal{S}_1\right)}\exp\left(-\frac{\|\boldsymbol{x} - \boldsymbol{y}\|^2_2}{\sigma^2}\right)d\boldsymbol{y}.
\end{equation*}
Also, since $\mathcal{S}_1$ and $\mathcal{S}_2$ are shifted versions of each other, let $\boldsymbol{b}$ be the shift vector from $\mathcal{S}_1$ to $\mathcal{S}_2$. Then:
\begin{equation}
    \tilde{f}_{\mathcal{S}_2}(\boldsymbol{x}) = \frac{1}{\left(2\pi\sigma^2\right)^{K/2}}\int_{\boldsymbol{y} \in \mathcal{S}_1}\frac{1}{\text{Vol}\left(\mathcal{S}_1\right)}\exp\left(-\frac{\|\boldsymbol{x} - \boldsymbol{y} - \boldsymbol{b}\|^2_2}{\sigma^2}\right)d\boldsymbol{y}.
\end{equation}
Using Jensen's inequality and the convexity of KL divergence, we obtain:
\begin{align}
    \text{D}_{\text{KL}}\left(\tilde{f}_{\mathcal{S}_1}, \tilde{f}_{\mathcal{S}_2}\right)  
    & = \; \text{D}_{\text{KL}}\left(\frac{1}{\text{Vol}\left(\mathcal{S}_1\right)}\int_{\boldsymbol{y}\in \mathcal{S}_1}f_\sigma\left(\boldsymbol{x}\vert\boldsymbol{y}\right)d\boldsymbol{y}, \frac{1}{\text{Vol}\left(\mathcal{S}_1\right)}\int_{\boldsymbol{y}\in \mathcal{S}_1}f_\sigma\left(\boldsymbol{x}\vert\boldsymbol{y} + \boldsymbol{b}\right)d\boldsymbol{y}\right)
    \nonumber \\
    & \leq \frac{1}{\text{Vol}\left(\mathcal{S}_1\right)}\int_{\boldsymbol{y}\in \mathcal{S}_1}\text{D}_{\text{KL}}\left(f_\sigma\left(\boldsymbol{x}\vert\boldsymbol{y}\right), f_\sigma\left(\boldsymbol{x}\vert\boldsymbol{y} + \boldsymbol{b}\right)\right)d\boldsymbol{y}
    \nonumber \\ 
    & \leq \frac{1}{\text{Vol}\left(\mathcal{S}_1\right)}\int_{\boldsymbol{y}\in \mathcal{S}_1}\frac{\|\boldsymbol{b}\|_2^2}{2\sigma^2}d\boldsymbol{y}
    \nonumber \\
    &=\frac{\|\boldsymbol{b}\|_2^2}{2\sigma^2}
    \nonumber \\ 
    & \leq \frac{\zeta^2}{2K^2\sigma^2},
\end{align}
where $f_\sigma\left(\boldsymbol{x}\vert\boldsymbol{y}\right)$ denotes the density of a normal distribution with mean $\boldsymbol{y}$ and covariance matrix $\sigma^2\mathbf{I}$. Therefore, we can continue inequality~\eqref{eq:fanoFirstInequality} as follows:
\begin{align}
    \underset{\mathscr{A}}{\min}\;\underset{\mathcal{S}}{\max}\;\text{TV}\left(\mathcal{S}, \mathscr{A}\left(\mathcal{S}, n, \sigma\right)\right) 
    & \geq \frac{\zeta}{2} \left(1 - \frac{n}{\log{M}}\cdot
    {\underset{i,j}{\max}}~{\text{D}_{\text{KL}}\left(\tilde{f}_{\mathcal{S}_i}, \tilde{f}_{\mathcal{S}_j}\right)}
    \right)
    \nonumber \\ 
    & \geq \frac{\zeta}{2}\left(1-\frac{n\zeta^2}{2K^3\sigma^2\log 2}\right).
\end{align}

This inequality holds for any $\zeta$, so we set $\zeta = K\sigma\sqrt{\frac{K}{n}}$ and obtain:
\begin{align}
    \underset{\mathscr{A}}{\min}\;\underset{\mathcal{S}}{\max}\;\text{TV}\left(\mathcal{S}, \mathscr{A}\left(\mathcal{S}, n, \sigma\right)\right) 
    & \geq \frac{K\sigma\sqrt{K}}{2\sqrt{n}}.
\end{align}
Therefore, to estimate the original simplex within TV distance less than $\epsilon$ using $n$ noisy samples, we must have:
\[
n \geq \frac{K^3\sigma^2}{\epsilon^2}.
\]
We also require that $\zeta = \frac{\sigma K\sqrt{K}}{\sqrt{n}} \leq \frac{1}{2}$ for \eqref{eq:tvpacking} to hold, which is always satisfied when $\epsilon \leq \frac{1}{2}$. 

To prove the second part of the theorem, consider a simplex that satisfies the properties described in Definition~\ref{def:isoperimetric}, such that $\mathcal{A}_{\max}\left(\mathcal{S}\right) = \bar{\theta} \mathrm{Vol}\left(\mathcal{S}\right)^{\frac{K-1}{K}}$. Denote this simplex by $\mathcal{S}_1$. Let us consider the altitude perpendicular to the largest facet (i.e., the facet with the largest $(K-1)$-dimensional volume), and denote the length of this altitude by $h$. We then displace the simplex in the direction of this altitude by a distance of $\zeta$ to construct a new simplex $\mathcal{S}_2$. For the total variation distance between these two simplices, we have:
\begin{equation}
    \|f_{\mathcal{S}_1} - f_{\mathcal{S}_2}\|_{\text{TV}} \geq \frac{K\zeta}{h} = \frac{K\zeta\mathcal{A}_{\text{max}}}{K\text{Vol}\left(\mathcal{S}_1\right)} = \frac{\zeta\bar{\theta}}{\text{Vol}\left(\mathcal{S}_1\right)^\frac{1}{k}}.
    \label{eq:lecamlowerbound}
\end{equation}
In the above inequality, we use the fact that $\frac{h\mathcal{A}_{\text{max}}}{K} = \text{Vol}\left(\mathcal{S}_1\right)$. Now, from the two-point Le Cam method, we have:
\begin{align}
    \underset{\mathscr{A}}{\min}\;\underset{\mathcal{S}}{\max}\;\text{TV}\left(\mathcal{S}, \mathscr{A}\left(\mathcal{S}, n, \sigma\right)\right) 
    & \geq  \frac{\zeta\bar{\theta}}{\text{Vol}\left(\mathcal{S}_1\right)^\frac{1}{k}}\left(1 - \text{TV}\left(\tilde{f}_{\mathcal{S}_1}^{\otimes n}, \tilde{f}_{\mathcal{S}_2}^{\otimes n}\right)\right)
    \nonumber \\
    & \geq \frac{\zeta\bar{\theta}}{\text{Vol}\left(\mathcal{S}_1\right)^\frac{1}{k}}\left(1 - \sqrt{D_\text{KL}\left(\tilde{f}_{\mathcal{S}_1}^{\otimes n}, \tilde{f}_{\mathcal{S}_2}^{\otimes n}\right)}\right)
    \nonumber \\
    & \geq \frac{\zeta\bar{\theta}}{\text{Vol}\left(\mathcal{S}_1\right)^\frac{1}{k}}\left(1 - \sqrt{\frac{n\zeta^2}{2\sigma^2}}\right)
    \nonumber \\
    & \geq \frac{\sigma\bar{\theta}}{\text{Vol}\left(\mathcal{S}_1\right)^\frac{1}{K}\sqrt{n}}.
\end{align}
Therefore, we have $n \geq \frac{\sigma^2\bar{\theta}^2}{\epsilon^2\text{Vol}\left(\mathcal{S}\right)^{\frac{2}{K}}}$. This completes the proof, and we conclude with:
\begin{equation}
    n \geq \Omega\left(\frac{\sigma^2\bar{\theta}^2}{\epsilon^2\text{Vol}\left(\mathcal{S}\right)^{\frac{2}{K}}} + \frac{K^3\sigma^2}{\epsilon^2}\right).
\end{equation}
Now we can see that if the isoperimetricity properties are not satisfied and $\bar{\theta}$ can grow arbitrarily large, then the lower bound for the minimax error approaches $1$. Another way to interpret this is by using the first inequality in \eqref{eq:lecamlowerbound}, which gives $n \geq \Omega\left(\frac{K^2\sigma^2}{\epsilon^2h^2} + \frac{K^3\sigma^2}{\epsilon^2}\right)$. This implies that as $h/\sigma$ tends to zero, the lower bound increases. In other words, if the simplex does not satisfy the isoperimetricity properties, the sample complexity cannot remain bounded and may diverge as the geometry degenerates.

\end{proof}

%%%%%%%%%%%%%%%%%%%%%%%%%%%%%%%%%%%%%%%%%%%%%%%%%%%%%%%%%%%%%%%%%%%%%%%%%%%%%%%%%%%%%%%%%%%%%%%%%%%%%%%%%%%%%%%%%%%%%%%%%%%%%%%%%%%%%%%%%%%%%%%%%%%%%%%%%%%%%%%%%%%%%%%%%%%%%%%%%%%%%%%%%%%%%%%%%%%%%%%%%%%%%%%%%%%%%%%%%%%%%%%%%%%%%%%%%%%%%%%%%%%%%%%%%%%%%%%%%%%%%%%%%%%%%%%%%%%%%%%%%%%%%%%%%%%%%%%%%%%%%%%%%%%%%%%%%%%%%%%%%%%%%%%%%%%%%%%%%%%%%%%%%%%%%%%%%%%%%%%%%%%%%%%%%%%%%%%%%%%%%%%%%%%%%%%%%%%%%%%%%%%%%%%%%%%%%%%%%%%%%%%%%%%%%%%%%%%%%%%%%%%%%%%%%%%%%%%%%%%%%%%%%%%%%%%%%%%%%%%%%%%%%%%%%%%%%%%%%%%%%%%%%%%%%%%%%%%%%%%%%%%%%%%%%%%%%%%%%%%%%%%%%%%%%%%%%%%%%%%%%%%%%%%%%%%%%%%%%%%%%%%%%%%%%%%%%%%%%%%%%%%%%%%%%%%%%%%%%%%%%%%%%%%%%%%%%%%%%%%%%%%%%%%%%%%%%%%%%%%%%%%%%%%%%%%%%%%%%%%%%%%%%%%%%%%%%%%%%%%%%%%%%%%%%%%%%%%%%%%%%%%%%%%%%%%%%%%%%%%%%%%%%%%%%%%%%%%%%%%%%%%%%%%%%%%%%%%%%%%%%%%%%%%%%%%%%%%%%%%%%%%%%%%%%%%%%%%%%%%%%%%%%%%%%%%%%%%%%%%%%%%%%%%%%%%%%%%%%%%%%%%%%%%%%%%%%%%%%%%%%%%%%%%%%%%%%%%%%%%%%%%%%%%%%%%%%%%%%%%%%%%%%%%%%%%%%%%%%%%%%%%%%%%%%%%%%%%%%%%%%%%%%%%%%%%%%%%%%%%%%%%%%%%%%%%%%%%%%%%%%%%%%%%%%%%%%%%%%%%%
\begin{proof}[Proof of Theorem \ref{thm:noislessSimplexFirstLowerBound}]
We prove this theorem using the \textbf{Assouad method}, which provides a lower bound on the estimation loss between the true simplex and its estimate. Specifically, we aim to lower bound the quantity:
\begin{align*}
    \underset{\mathscr{A}}{\min}\;\underset{\mathcal{S} \in \mathbb{S}_K}{\max}\;\|V_{\mathcal{S}} - V_{\mathcal{S}_{\mathscr{A}}}\|_1,
\end{align*}
where $\mathscr{A}$ is an arbitrary estimator (algorithm). To apply the Assouad method, we construct a finite subset $\mathbb{S}_K^c$ of $\mathbb{S}_K$ (the set of all $K$-simplices) that serves as a covering set. We begin with the \textbf{standard simplex}:
\begin{equation*}
    \mathcal{S}^{\text{std}} = \operatorname{conv}\{\mathbf{0}, e_1, \dots, e_K\},
\end{equation*}
and construct perturbations of its vertices. For each $i$, define the candidate set for the $i$-th vertex as:
\begin{equation}
    \boldsymbol{v}_i \in \mathcal{V}_i = \left\{ \left( \zeta(1 - 2b_i^1), \zeta(1 - 2b_i^2), \cdots, 1, \cdots, \zeta(1 - 2b_i^K) \right)^T : b_i^j \in \{0,1\},\; j \neq i \right\},
\end{equation}
which can be compactly expressed as:
\begin{equation}
\label{eq:coveringForAssouad}
    \boldsymbol{v}_i = e_i + \zeta (1 - 2\boldsymbol{b}_i), \quad \boldsymbol{b}_i \in \{0,1\}^K,\; b_i^i = 1.
\end{equation}

Using these vertex constructions, we define the covering set:
\begin{equation}
    \mathbb{S}_K^c = \left\{ \mathcal{S}\left(\mathbf{0}, \boldsymbol{v}_1, \dots, \boldsymbol{v}_K\right) : \boldsymbol{v}_i \in \mathcal{V}_i \right\}.
\end{equation}
Each simplex in $\mathbb{S}_K^c$ can be encoded by a $K^2$-bit binary string. Denote the code of a simplex $\mathcal{S} \in \mathbb{S}_K^c$ as $B^{\mathcal{S}} = (b_1^\mathcal{S}, \dots, b_K^\mathcal{S})$, and let $\mathcal{S}_B$ be the simplex generated from code $B$.

For a general $K$-simplex $\mathcal{S} = \operatorname{conv}(\boldsymbol{x}_0, \dots, \boldsymbol{x}_K)$, define the mapping $\psi: \mathbb{S}_K \to \{0,1\}^{K \times (K-1)}$ as:
\begin{equation}
    \psi(\mathcal{S})_{(i-1)K + j} = \frac{1 - \text{sign}(x_i^j)\cdot \mathbf{1}(i \neq j)}{2}, \quad i, j \in \{1, \dots, K\}.
\end{equation}

We now lower bound the minimax loss in the regime where the error $\epsilon$ is sufficiently small:
\begin{align}
\label{eq:assouadPrimary1}
    \epsilon \triangleq \underset{\mathscr{A}}{\min}\;\underset{\mathcal{S} \in \mathbb{S}_K}{\max}\;\|V_{\mathcal{S}} - V_{\mathcal{S}_{\mathscr{A}}}\|_1 
    &\geq \underset{\mathscr{A}}{\min}\;\underset{\mathcal{S} \in \mathbb{S}_K^c}{\max}\;\|V_{\mathcal{S}} - V_{\mathcal{S}_{\mathscr{A}}}\|_1 \nonumber \\
    &\geq \underset{\mathscr{A}}{\min}\;\underset{\mathcal{S} \in \mathbb{S}_K^c}{\max}\;\sum_{i=1}^K \delta_i,
\end{align}
where $\delta_i$ is the $\ell_1$ distance between the $i$-th vertex $v_i^\mathcal{S}$ of the true simplex and its nearest estimated vertex $v_i^{\mathscr{A}}$ from $\mathcal{S}_{\mathscr{A}}$. Define the vertex-wise loss as:
\begin{equation}
    \ell(\mathcal{S}, \mathscr{A}(\mathcal{S}, n, \sigma)) = \sum_{i=1}^K \|v_i^{\mathcal{S}} - v_i^{\mathscr{A}}\|_1.
\end{equation}
Then this loss satisfies:
\begin{align}
    \ell(\mathcal{S}, \mathscr{A}(\mathcal{S}, n, \sigma)) \geq \zeta \cdot d_H(B_{\mathcal{S}}, \psi(\mathscr{A}(\mathcal{S}, n, \sigma))),
\end{align}
where $d_H$ denotes the Hamming distance. Applying \textbf{Assouad's Lemma}, we obtain:
\begin{align}
\label{eq:AssouadMian1}
    \underset{\mathscr{A}}{\min}\;\underset{\mathcal{S} \in \mathbb{S}_K^c}{\max}\;\ell(\mathcal{S}, \mathscr{A}(\mathcal{S}, n, \sigma)) 
    &\geq \zeta \sum_{i=1}^{K^2} \left( 1 - \max_{B, B': d_H(B, B') = 1} \text{TV}(\mathcal{S}_B^{\otimes n}, \mathcal{S}_{B'}^{\otimes n}) \right) \nonumber \\
    &\geq \zeta \sum_{i=1}^{K^2} \left( 1 - \sqrt{n \cdot \max_{B, B': d_H(B, B') = 1} \text{TV}(\mathcal{S}_B, \mathcal{S}_{B'})} \right).
\end{align}

For two simplices $\mathcal{S}_B$ and $\mathcal{S}_{B'}$ differing in exactly one bit, we have:
\begin{equation}
\label{eq:AssouadAux1}
    \max_{B, B': d_H(B, B') = 1} \text{TV}(\mathcal{S}_B, \mathcal{S}_{B'}) \leq 2\zeta.
\end{equation}

Substituting this into \eqref{eq:AssouadMian1} and choosing $\zeta = \frac{1}{8n}$ yields:
\begin{equation}
\label{eq:AssouadRes1}
    \underset{\mathscr{A}}{\min}\;\underset{\mathcal{S} \in \mathbb{S}_K^c}{\max}\;\ell(\mathcal{S}, \mathscr{A}(\mathcal{S}, n, \sigma)) \geq \frac{K^2}{2n}.
\end{equation}

Combining \eqref{eq:assouadPrimary1} and \eqref{eq:AssouadRes1}, we obtain:
\begin{equation}
    \epsilon = \underset{\mathscr{A}}{\min}\;\underset{\mathcal{S} \in \mathbb{S}_K}{\max}\;\|V_{\mathcal{S}} - V_{\mathcal{S}_{\mathscr{A}}}\|_1 \geq \frac{K}{\sqrt{n}} \quad \Longrightarrow \quad n \geq \frac{K^2}{\epsilon}.
\end{equation}

This completes the proof. One important point worth mentioning is that for any algorithm $\mathscr{A}$ whose estimated simplex $\mathcal{S}_{\mathscr{A}}$ lies entirely within the true simplex $\mathcal{S}$, we can express the following:
\begin{align}
    \underset{\mathscr{A}}{\min}\;\underset{\mathcal{S} \in \mathbb{S}_K}{\max}\text{TV}\left(\mathcal{S}, \mathcal{S}_{\mathscr{A}}\right) 
    & \geq \underset{\mathscr{A}}{\min}\;\underset{\mathcal{S} \in \mathbb{S}^c_K}{\max}\text{TV}\left(\mathcal{S}, \mathcal{S}_{\mathscr{A}}\right)
    \nonumber \\
    & \geq\underset{\mathscr{A}}{\min}\;\underset{\mathcal{S} \in \mathbb{S}_K^c}{\max}\;\|V_{\mathcal{S}} - V_{\mathcal{S}_{\mathscr{A}}}\|_1, 
    \label{eq:tvNorm1Relation}
\end{align}
And from this, we can proceed exactly as in \eqref{eq:assouadPrimary1} to obtain the same result for the TV distance. To prove the inequality above, let us consider a single facet $\mathcal{F}$ of the true simplex and its corresponding facet $\widehat{\mathcal{F}}$ in the estimated simplex. Without loss of generality, assume that $\mathcal{F}$ is the facet opposite to the vertex $\boldsymbol{v}_1$.
We rotate our coordinate system such that $\widehat{\mathcal{F}}$ lies within the subspace spanned by the last $K-1$ dimensions of the new coordinate system; denote this subspace by $W$. In this coordinate system, we define $D_1$ as the region of the true simplex that projects outside $\widehat{\mathcal{F}}$ along the direction orthogonal to $W$.
To find a lower bound on the TV distance, we evaluate the probability measure of the region $D_1$, since it is not contained in the estimated simplex. Summing over analogous regions $D_i$ for each facet gives a lower bound on the TV distance. Importantly, the intersection of these regions is of second-order in the distance between the corresponding vertices, and hence negligible at first order.
Now, let $\delta^1_2, \dots, \delta^1_{K}$ denote the distances from each of the $K-1$ vertices of $\widehat{\mathcal{F}}$ to the corresponding facet $\mathcal{F}$ of the true simplex. Then, the probability measure of the region $D_1$ is:
\begin{align}
    \text{TV}\left(\mathcal{S}, \mathcal{S}_{\mathscr{A}}\right) 
    &\geq \sum_i\int_{D_i} f_{\mathcal{S}}(\boldsymbol{x})\,d\boldsymbol{x} 
    \nonumber \\ 
    & \geq \sum_{j=1}^K\mathbb{P}\left(\sum_{i=2}^{K}{\delta_i^j\phi_i} \geq \phi_1\right)
    \nonumber \\ 
    & \geq \Omega\left(\sum_{i,j} \delta_i^j\right) 
    \nonumber \\ 
    &= \Omega\left(\|V_{\mathcal{S}} - V_{\mathcal{S}_{\mathscr{A}}}\|_1\right)
    \end{align}
where $(\phi_0, \cdots, \phi_K)$ is drawn from the uniform Dirichlet distribution. 
This concludes the proof of \eqref{eq:tvNorm1Relation}.
\end{proof}

%%%%%%%%%%%%%%%%%%%%%%%%%%%%%%%%%%%%%%%%%%%%%%%%%%%%%%%%%%%%%%%%%%%%%%%%%%%%%%%%%%%%%%%%%%%%%%%%%%%%%%%%%%%%%%%%%%%%%%%%%%%%%%%%%%%%%%%%%%%%%%%%%%%%%%%%%%%%%%%%%%%%%%%%%%%%%%%%%%%%%%%%%%%%%%%%%%%%%%%%%%%%%%%%%%%%%%%%%%%%%%%%%%%%%%%%%%%%%%%%%%%%%%%%%%%%%%%%%%%%%%%%%%%%%%%%%%%%%%%%%%%%%%%%%%%%%%%%%%%%%%%%%%%%%%%%%%%%%%%%%%%%%%%%%%%%%%%%%%%%%%%%%%%%%%%%%%%%%%%%%%%%%%%%%%%%%%%%%%%%%%%%%%%%%%%%%%%%%%%%%%%%%%%%%%%%%%%%%%%%%%%%%%%%%%%%%%%%%%%%%%%%%%%%%%%%%%%%%%%%%%%%%%%%%%%%%%%%%%%%%%%%%%%%%%%%%%%%%%%%%%%%%%%%%%%%%%%%%%%%%%%%%%%%%%%%%%%%%%%%%%%%%%%%%%%%%%%%%%%%%%%%%%%%%%%%%%%%%%%%%%%%%%%%%%%%%%%%%%%%%%%%%%%%%%%%%%%%%%%%%%%%%%%%%%%%%%%%%%%%%%%%%%%%%%%%%%%%%%%%%%%%%%%%%%%%%%%%%%%%%%%%%%%%%%%%%%%%%%%%%%%%%%%%%%%%%%%%%%%%%%%%%%%%%%%%%%%%%%%%%%%%%%%%%%%%%%%%%%%%%%%%%%%%%%%%%%%%%%%%%%%%%%%%%%%%%%%%%%%%%%%%%%%%%%%%%%%%%%%%%%%%%%%%%%%%%%%%%%%%%%%%%%%%%%%%%%%%%%%%%%%%%%%%%%%%%%%%%%%%%%%%%%%%%%%%%%%%%%%%%%%%%%%%%%%%%%%%%%%%%%%%%%%%%%%%%%%%%%%%%%%%%%%%%%%%%%%%%%%%%%%%%%%%%%%%%%%%%%%%%%%%%%%%%%%%%%%%%%%%%%%%%%%%%%%%%%%%%%%%
\begin{proof}[proof of Theorem \ref{thm:noislessSimplexSecondLowerBound}]
To prove this theorem, we use the Assouad method. In this method, we aim to provide a lower bound for the loss function between our estimate and the true simplex. Mathematically, we want to lower bound:
\begin{align*}
    \underset{\mathscr{A}}{\min}\;\underset{\mathcal{S} \in \mathbb{S}_K}{\max}\;\mathbb{E}\left[\text{TV}\left(\mathcal{S}, \mathscr{A}\left(\mathcal{S}, n, \sigma\right)\right)\right],
\end{align*}
where $\mathscr{A}$ is the algorithm and the expectation is taken over the samples. To achieve this, we first construct a covering for the set of all possible $K$-simplices $\mathbb{S}_K$, which we denote by $\mathbb{S}_K^c$. To build this covering, we consider the standard simplex
\begin{equation*}
    \mathcal{S}^{\text{std}} = \operatorname{conv}\{\mathbf{0}, e_1, \dots, e_K\}.
\end{equation*}
Next, we define the candidate positions for the $i$-th vertex of the simplices in $\mathbb{S}_K^c$ as
\begin{equation}
    \boldsymbol{v}_i \in \mathcal{V}_i = \left\{\left(\zeta b_1, \zeta b_2, \cdots, 1, \cdots, \zeta b_K\right)^T : b_j \in \{0,1\},\; j \neq i \right\},
\end{equation}
which can equivalently be written as:
\begin{equation}
\label{eq:coveringForAssouadTV}
    \boldsymbol{v}_i = e_i - \zeta \boldsymbol{b}, \quad \boldsymbol{b} \in \left\{0,1\right\}^K,\; b_i = 1.
\end{equation}
We now construct the covering set $\mathbb{S}_K^c$ as:
\begin{equation}
    \mathbb{S}_K^c = \left\{\mathcal{S}\left(\mathbf{0}, \boldsymbol{v}_1, \boldsymbol{v}_2, \cdots, \boldsymbol{v}_K \;\big\vert\; \boldsymbol{v}_i \in \mathcal{V}_i\right)\right\}.
\end{equation}

From this construction, it is clear that the simplices in $\mathbb{S}_K^c$ can be encoded using $K$ bits. Let us denote the code for a simplex $\mathcal{S} \in \mathbb{S}_K^c$ by $B^{\mathcal{S}} = \left(b_1^\mathcal{S}, \cdots, b_K^\mathcal{S}\right)$, and the simplex associated with a code $B$ by $\mathcal{S}_B$. We denote the set of all such encodings by $\mathcal{B}$. Now, we define a set of regions in the space as follows:
\begin{align}
    R_{i} &= \left\{\boldsymbol{x} : \boldsymbol{x} \in \mathcal{S}_{B_{i}},\; \boldsymbol{x} \notin \mathcal{S}^{\text{std}}\right\}, \nonumber \\
    B_{i} &\in \{0,1\}^{K}, \quad (B_{i})_j =
\begin{cases}
1 & \text{if } j = i, \\
0 & \text{otherwise}.
\end{cases}
\end{align}
Suppose we denote the density associated with the simplex in $\mathbb{S}_K^c$ encoded by $B$ as $f_B$. Then we proceed as follows:
\begin{align}
    \underset{\mathcal{S} \in \mathbb{S}_K}{\max}\;\mathbb{E}&\left[\text{TV}\left(\mathcal{S}, \mathscr{A}\left(\mathcal{S}, n, \sigma\right)\right)\right] \geq 
    \underset{\mathcal{S} \in \mathbb{S}_K^c}{\max}\;\left[\text{TV}\left(\mathcal{S}, \mathscr{A}\left(\mathcal{S}, n, \sigma\right)\right)\right]
    \nonumber \\
    & \geq \frac{1}{2^{K}}\sum_{B \in \mathcal{B}}{\mathbb{E}\left[\int \vert f_{\mathscr{A}}\left(\boldsymbol{x}\right)  - f_B\left(\boldsymbol{x}\right)\vert d\boldsymbol{x} \right]}
    \nonumber \\
    & = \frac{1}{2^{K}}\sum_{B \in \mathcal{B}}{\int\left(\int \vert f_{\mathscr{A}}\left(\boldsymbol{x}\right)  - f_B\left(\boldsymbol{x}\right)\vert d\boldsymbol{x}\right)\prod_{k=1}^nf_B\left(\boldsymbol{x}_k\right)\prod_{k=1}^nd\boldsymbol{x}_k}
    \nonumber \\
    & \geq \frac{1}{2^{K}}\sum_{B \in \mathcal{B}}{\int\left(\sum_{i}\int_{R_{i}} \vert f_{\mathscr{A}}\left(\boldsymbol{x}\right)  - f_B\left(\boldsymbol{x}\right)\vert d\boldsymbol{x}\right)\prod_{k=1}^nf_B\left(\boldsymbol{x}_k\right)\prod_{k=1}^nd\boldsymbol{x}_k} - \mathcal{O}\left(K^4\zeta^2\right)
    \label{eq:assouadbreakIntoregions0} \\
    & \geq \frac{1}{2^{K}}\sum_{B \in \mathcal{B}}{\int\left(\sum_{i}\int_{R_{i}} \vert f_{\mathscr{A}}\left(\boldsymbol{x}\right)  - f_B\left(\boldsymbol{x}\right)\vert d\boldsymbol{x}\right)\prod_{k=1}^nf_B\left(\boldsymbol{x}_k\right)\prod_{k=1}^nd\boldsymbol{x}_k} - \mathcal{O}\left(K^4\zeta^2\right)
    \nonumber \\
    & \geq \frac{1}{2^{K + 1}}\sum_{B \in \mathcal{B}}\left(\sum_{i}\int\left(\int_{R_{i}} \vert f_{\mathscr{A}}\left(\boldsymbol{x}\right)  - f_{B_{ij0}}\left(\boldsymbol{x}\right)\vert d\boldsymbol{x}\right)\prod_{k=1}^nf_{B_{i0}}\left(\boldsymbol{x}_k\right)\prod_{k=1}^nd\boldsymbol{x}_k + \right.
    \nonumber \\
     & \quad\quad\quad\quad\quad  \left. + \int\sum_{i}\left(\int_{R_{i}} \vert f_{\mathscr{A}}\left(\boldsymbol{x}\right)  - f_{B_{i1}}\left(\boldsymbol{x}\right)\vert d\boldsymbol{x}\right)\prod_{k=1}^nf_{B_{i1}}\left(\boldsymbol{x}_k\right)\prod_{k=1}^nd\boldsymbol{x}_k  \right)- 
     \nonumber \\ 
     & \quad\quad\quad\quad\quad \mathcal{O}\left(K^4\zeta^2\right)
     \label{eq:assouadbreakIntoregions1}
     \\
     & \geq \frac{1}{2^{K + 1}}\sum_{B \in \mathcal{B}}\int\sum_{i}\left(\int_{R_{i}} \vert f_{\mathscr{A}}\left(\boldsymbol{x}\right)  - f_{B_{i0}}\left(\boldsymbol{x}\right)\vert d\boldsymbol{x} \right. \nonumber\\
     & \quad\quad\quad\quad\quad\quad\quad \left. \int_{R_{i}} \vert f_{\mathscr{A}}\left(\boldsymbol{x}\right)  - f_{B_{i1}}\left(\boldsymbol{x}\right)\vert d\boldsymbol{x}\right) \nonumber \\ 
     & \quad\quad\quad\quad\quad\quad\quad \min\left\{\prod_{k=1}^nf_{B_{i0}}\left(\boldsymbol{x}_k\right), \prod_{k=1}^nf_{B_{i1}}\left(\boldsymbol{x}_k\right)\right\}\prod_{k=1}^nd\boldsymbol{x}_k  - \mathcal{O}\left(K^4\zeta^2\right) 
     \label{eq:assouadbreakIntoregions2}
     \\
     & \geq \frac{1}{2^{K + 1}}\sum_{B \in \mathcal{B}}\int\sum_{i}\int_{R_{i}} \vert f_{B_{i1}}\left(\boldsymbol{x}\right)  - f_{B_{i0}}\left(\boldsymbol{x}\right)\vert d\boldsymbol{x} \nonumber \\
     & \quad\quad\quad\quad\quad\quad\quad 
     \min\left\{\prod_{k=1}^nf_{B_{i0}}\left(\boldsymbol{x}_k\right), \prod_{k=1}^nf_{B_{i1}}\left(\boldsymbol{x}_k\right)\right\}\prod_{k=1}^nd\boldsymbol{x}_k  - \mathcal{O}\left(K^4\zeta^2\right) 
     \label{eq:assouadbreakIntoregions3}
     \\
     & \geq \frac{\zeta K^2}{2^{K + 2}}\sum_{B \in \mathcal{B}}\int\min_{i}\min\left\{\prod_{k=1}^nf_{B_{i0}}\left(\boldsymbol{x}_k\right), \prod_{k=1}^nf_{B_{ij1}}\left(\boldsymbol{x}_k\right)\right\}\prod_{k=1}^nd\boldsymbol{x}_k  - \mathcal{O}\left(K^4\zeta^2\right) 
     \label{eq:assouadbreakIntoregions4}
     \\
     & \geq \frac{\zeta K^2}{2}\min_{B,i}\int\min\left\{\prod_{k=1}^nf_{B_{i0}}\left(\boldsymbol{x}_k\right), \prod_{k=1}^nf_{B_{i1}}\left(\boldsymbol{x}_k\right)\right\}\prod_{k=1}^nd\boldsymbol{x}_k  - \mathcal{O}\left(K^4\zeta^2\right) 
     \label{eq:assouadbreakIntoregions5}
     \\
     & \geq \frac{\zeta K^2}{8}\min_{B,i}\left(\int\sqrt{\prod_{k=1}^nf_{B_{i0}}\left(\boldsymbol{x}_k\right)f_{B_{i1}}\left(\boldsymbol{x}_k\right)}\prod_{k=1}^nd\boldsymbol{x}_k\right)^2  - \mathcal{O}\left(K^4\zeta^2\right) 
     \label{eq:assouadbreakIntoregions6}
     \\
     & \geq \frac{\zeta K^2}{8}\left(\min_{B,i}\int\sqrt{f_{B_{i0}}\left(\boldsymbol{x}\right)f_{B_{i1}}\left(\boldsymbol{x}\right)}d\boldsymbol{x}\right)^{2n}  - \mathcal{O}\left(K^4\zeta^2\right) 
     \label{eq:assouadbreakIntoregions7}
     \\
     & \geq \frac{\zeta K^2}{8}\min_{B,i}\left(1 - \text{TV}\left(f_{B_{i0}}, f_{B_{i1}}\right)\right)^{2n}  - \mathcal{O}\left(K^4\zeta^2\right) 
     \label{eq:assouadbreakIntoregions8}
     \\
     & \geq \frac{\zeta K^2}{8}\min_{B,i}\left(1 - 2n\text{TV}\left(f_{B_{i0}}, f_{B_{i1}}\right)\right)  - \mathcal{O}\left(K^4\zeta^2\right) 
     \\
     & \geq \frac{\zeta K^2}{8}\left(1 - \sqrt{2nK\zeta}\right)  - \mathcal{O}\left(K^4\zeta^2\right)
     \nonumber \\ 
     &\geq \Omega\left(\frac{K}{n}\right)  - \mathcal{O}\left(\frac{K^2}{n^2}\right) = \Omega\left(\frac{K}{n}\right)
     \label{eq:assouadbreakIntoregions9}
\end{align}
About the above inequalities we should mention the followings:
\begin{enumerate}
    \item For the regions $R_{ij}$, from the way they are constructed, we have $\text{Vol}\left(R_{i} \cap R_{j}\right) \leq \mathcal{O}\left(K^2\zeta^2\right) \text{Vol}(\mathcal{S}^{\text{std}})$. Due to this fact, we subtract the $\mathcal{O}\left(K^4 \zeta^2\right)$ term in \eqref{eq:assouadbreakIntoregions0}, since each region $R_{i}$ intersects with at most $K$ other regions, and we are summing over $K$ regions in total.
    
    \item By $B_{i0}$ we mean a $0$–$1$ vector equal to $B$ except its $i$-th bit is $0$, and $B_{i1}$ is equal to $B$ with its $i$-th bit set to $1$.
    
    \item In inequality \eqref{eq:assouadbreakIntoregions1}, we use the fact that both $B_{i0}$ and $B_{i1}$ are in $\mathcal{B}$. Suppose we write the summands for these two options as $S_0$ and $S_1$, respectively. We then use $\frac{S_0 + S_1}{2}$ in place of the individual terms.
    
    \item In inequality \eqref{eq:assouadbreakIntoregions3}, we use the fact that the algorithm will output the same simplex given the same samples $\left\{\boldsymbol{x}_i\right\}_{i=1}^n$, along with the inequality $|a| + |b| \geq |a - b|$.
    
    \item In inequality \eqref{eq:assouadbreakIntoregions3}, we use the fact that $\int_{R_{i}} \vert f_{B_{i1}}\left(\boldsymbol{x}\right)  - f_{B_{i0}}\left(\boldsymbol{x}\right)\vert d\boldsymbol{x} \geq K\zeta/2$.

    \item In inequality \eqref{eq:assouadbreakIntoregions6}, we use Le Cam's inequality: $2 \int \min\left\{f, g\right\} \geq \left(\int \sqrt{f g}\right)^2$.
    
    \item In inequality \eqref{eq:assouadbreakIntoregions8}, we use the inequality $\text{BC} \geq 1 - \text{TV}$, where $\text{BC}$ is the Bhattacharyya coefficient.
    
    \item In inequality \eqref{eq:assouadbreakIntoregions9}, we set $\zeta = \frac{1}{n}$.
    
    \item The equality in \eqref{eq:assouadbreakIntoregions9} follows from the fact that $\frac{K^2}{n^2} \leq \frac{K}{n}$.
\end{enumerate}
This concludes the proof. 
\end{proof}

\end{document}